\newtheorem{theorem}{Theorem}
\newtheorem{lemma}{Lemma}
\newenvironment{proof*}[1][\proofname]{
  
  \begin{proof}[#1]}{\end{proof}}
\begin{document}

%

%

\twocolumn[

\aistatstitle{Regret Bounds for Expected Improvement Algorithms in Gaussian Process Bandit Optimization}

\aistatsauthor{ Hung Tran-The \And Sunil Gupta \And   Santu Rana \And Svetha Venkatesh }

\aistatsaddress{ Applied Artificial Intelligence Institute, Deakin University, Geelong, Australia } ]

\begin{abstract}
The expected improvement (EI) algorithm is one of the most popular strategies for optimization under uncertainty due to its simplicity and efficiency. Despite its popularity, the theoretical aspects of this algorithm have not been properly analyzed. In particular, whether in the noisy setting, the EI strategy with a standard incumbent converges is still an open question of the Gaussian process bandit optimization problem. We aim to answer this question by proposing a variant of EI with a standard incumbent defined via the GP predictive mean. We prove that our algorithm converges, and achieves a cumulative regret bound of $\mathcal O(\gamma_T\sqrt{T})$, where $\gamma_T$ is the maximum information gain between $T$ observations and the Gaussian process model. Based on this variant of EI, we further propose an algorithm called Improved GP-EI that converges faster than previous counterparts. In particular, our proposed variants of EI do not require the knowledge of the RKHS norm and the noise's sub-Gaussianity parameter as in previous works. Empirical validation in our paper demonstrates the effectiveness of our algorithms compared to several baselines.
\end{abstract}
\section{Introduction}
The problem of sequentially optimizing a black-box function based on bandit feedback has recently attracted a great deal of attention and finds application in robotics \citep{Lizotte07,cantin07}, environmental monitoring \citep{Marchan12}, automatic machine learning \citep{Bergstra11a,Snoek12,hoffman14} and reinforcement learning \citep{wilson14a,balakrishnan2020efficient}. Under this model, the goal is to design a sequential algorithm in a search space $\mathcal X$, i.e., a sequence $x_1, x_2 ,.., x_T$ such that at iteration $T$, the algorithm returns a state with the highest possible value. For this problem, a widely used performance measure is the \emph{cumulative regret} $R_T$, which is given by $R_T = \sum_{t= 1}^{T}\text{sup}_{x \in \mathcal X} f(x) - f(x^+_t)$, where $x^+_t$ is the point reported by the algorithm at iteration $t$.

In order to make this problem tractable, one must make smoothness assumptions on the function. A versatile means for doing this is to model the function as a Gaussian process (GP) which captures the smoothness properties through a suitably chosen kernel. For GP-based algorithms, there are two settings: \emph{Bayesian} \citep{Freitas12,tran-the21a} and \emph{non-Bayesian} \citep{scarlett17a}. In Bayesian setting, the function is assumed to be sampled from a GP while in the non-Bayesian setting, the function is treated as fixed and unknown, and assumed to lie in a reproducing kernel Hilbert space (RKHS). Under these assumptions, the optimization problem is usually called the \emph{Gaussian process bandit optimization} whereas the optimization in Bayesian setting is referred to as \emph{Bayesian optimization}. In this paper, we focus on the non-Bayesian setting, i.e. Gaussian process bandit optimization.

The Expected Improvement (EI) \citep{Mockus74} is one of the most widely used strategy to optimize black-box functions due to its simplicity and ability to handle uncertainty(e.g., works of \citet{osborne2010,wilson14a,Qin17,Malkomes2018,nguyen20d}). Unlike other popular strategies, upper confidence bound (UCB) and Thompson sampling (TS), EI is a greedy improvement-based strategy, which samples the next point offering the greater expected improvement over the current incumbent. Formally, $\alpha^{EI}(x) = \mathbb{E}[\text{max}\{0, f(x) - \xi\}]$, where $\xi$ is the incumbent to be defined. In the noiseless setting, $\xi$ is defined as the current best observation so far, $f^+_t$. Given $\mathcal D_{t}$ which is the set of sampled points up to iteration $t$, $f^+_t$ at iteration $t$ is computed as $f^+_t = \text{max}_{1 \le i \le t} f(x_i)$.
However, in the noisy setting, such a choice is not clear due to the noise.  As an alternative, $\xi$ is typically defined as either the current best value of the GP predictive mean, formally $\xi = \mu^+_t$ which is computed as $\mu^+_t = \text{max}_{1 \le i \le t} \mu_{t}(x_i)$, or the current best observation value (containing the noise), formally $\xi = y^+_t$. Although these incumbents can be easily computed, they make the theoretical analysis harder.

A key challenge of analyzing EI-based algorithms comes from its improvement function involving nonlinear, nonconvex term unlike UCB and TS. This causes the difficulty of the analysis of EI. In the noisy setting, another challenge comes from the fact that the incumbents such as $\xi = \mu^+_t$ or $\xi = y^+_t$ do not have monotonicity property like the function $f^+_t$ ($f^+_{t+1} \ge f^+_t$). This is one of crucial properties to reach the convergence in the noise-free setting \citep{Bull11}. These reasons explain why convergence properties of GP-EI are not been well studied especially in the noisy setting.

\citet{wang2014} studied GP-EI when the lower and upper bounds of hyper-parameters of GP are known. However, to guarantee the convergence, their non-peer reviewed work uses an alternative choice of the incumbent as the maximum of the GP predictive mean $\xi = \text{max}_{x \in \mathcal X} \mu_t(x)$. As a result, their GP-EI algorithm requires an additional optimization step to approximate $\mu_t(x)$ \emph{at each iteration} which is computationally expensive compared to the use of standard incumbents $\mu^+_t$ or $y^+_t$ especially when the search space is large or unbounded \citep{Tran-The_20}.

\citet{nguyen17a} proposed a ``weak” version of GP-EI which uses $y^+$ as the incumbent. This incumbent is easily computed, however their version of GP-EI needs to use an assumption that values of the variance function at all sampled points are not allowed to exceed a lower bound $\kappa$. Due to this, their GP-EI regret upper bound depends on $\kappa$ and this bound quickly explodes as $\kappa \rightarrow \infty$. As a result, their analysis does not solve the traditional EI algorithm as we consider in this paper.
Therefore, \emph{the natural question of whether GP-EI with a standard setting of incumbent (e.g., $\xi = \mu^+$ or $\xi = y^+$) converges, and if true then what convergence rate GP-EI can reach are open problems?.} In this paper, we provide an affirmative answer to these questions. Our main contributions are as follows:
\begin{itemize}
  \item We propose a variant of GP-EI  for Gaussian process bandit optimization. This algorithm uses a standard incumbent $\xi = \text{max} \{\mu_{t-1}(x_i)\}_{x_i \in \mathcal D_{t-1}}$ at iteration $t$, where $\mathcal D_{t-1}$ is the set of sampled points up to iteration $t$. Our algorithm enjoys a cumulative regret bound of $\mathcal O(\gamma_T\sqrt{T})$, where $\gamma_T$ is the maximum information gain between $T$ observations and the GP model. To our knowledge, this is the first GP-EI algorithm using a standard incumbent with theoretical guarantees.

  \item Based on our above algorithm, we propose an efficient variant of GP-EI called Improved-GP-EI of which $f$ lies in RKHS equipped by a Mat\'ern-$\nu$ kernel. Improved-GP-EI can achieve regret $\tilde{\mathcal O}(T^{\frac{d(2d+3) + 2\nu}{d(2d+4) + 4\nu}})$ for every $\nu > 1$ and $d \ge 1$. By using a searching partitioning strategy, Improved-GP-EI avoids the quick growth of the global scale of variance functions. In particular, it does not require the knowledge of the RKHS norm and the measurement noise's sub-Gaussianity parameter. These parameters are required by most previous algorithms for theoretical guarantees, but are usually unknown in real applications.

  \item We demonstrate the practical effectiveness of Improved-GP-EI against GP-EI and $\pi$-GP-UCB on various synthetic functions.
\end{itemize}

\section{Preliminaries}
We consider a global optimisation problem whose goal is to maximise $f(x)$ subject to $x \in \mathcal X \subset \mathbb{R}^d$, where $d$ is the number of dimensions and $f$ is an expensive blackbox function that can only be evaluated point-wise. The performance of a global optimisation algorithm is typically evaluated using the cumulative regret which we have defined in section Introduction.
\subsection{Regularity Assumptions}
We assume that $f$ lives in a RKHS of functions $\mathcal X \rightarrow \mathbb{R}$ with positive semi-definite kernel function $k$. This $H_k$ space is defined as the Hilbert space of functions on $\mathcal X$ equipped with an inner product $\langle .\rangle_k$  obeying the reproducing property: $f(x) = \langle f, k(x,.) \rangle_k$ for all $f \in H_k(\mathcal X)$. The RKHS norm $||f||_k = \sqrt{\langle f, f \rangle_k}$ is a measure of smoothness of $f$, with respect to the kernel function $k$, and satisfies: $f \in H_k(\mathcal X)$ if and only if $||f||_k < \infty$. We assume that the RKHS norm of the unknown target function is bounded by $||f||_k \le B$. Two common kernels that satisfy
bounded variance property are Squared Exponential (SE) and Mat\'ern, defined as
\begin{eqnarray}
k_{\text{SE}}(x, x') = \text{exp}(\frac{-||x-x'||^2}{2l^2}),
\label{eq_kernel}
\end{eqnarray}
\begin{eqnarray}
k_{\text{Mat}}(x, x') = \frac{2^{1-\nu}}{\Gamma(\nu)}(\frac{||x -x'||_2}{l})^{\nu}\mathcal B_{\nu}(\frac{||x- x'||_2}{l}),
\label{eq_kernel}
\end{eqnarray}
where $\Gamma$ denotes the Gamma function, $\mathcal{B}_{\nu}$ denotes the modified Bessel function of the second kind, $\nu$ is a parameter controlling the smoothness of the function and $l$ is the lengthscale of the kernel. Important special cases of $\nu$ include $\nu = \frac{1}{2}$ that corresponds to the exponential kernel and $\nu \rightarrow \infty$ that corresponds to the square exponential (SE) kernel. The Mat\'ern kernel is of particular practical significance, since it offers a more suitable set of assumptions for the modeling and optimisation of physical quantities \citep{stein1999}.
\subsection{Gaussian process bandit optimization}
Gaussian process bandit optimization proceeds sequentially in an iterative fashion. At each iteration, a surrogate model is used to probabilistically model $f(x)$. Gaussian process (GP) \citep{Rasmussen05} is a popular choice for the surrogate model as it offers a prior over a large class of functions and its posterior and predictive distributions are tractable. Formally, we assume $f(x) \sim \mathcal {GP}(m(x), \omega^2 k(x, x'))$ a prior distribution where $m(x)$ is the mean function and $\omega^2 k(x, x')$ is the covariance function in which $k(x, x')$ is a kernel function associated with the RKHS $H_k$ in which $f$ is assumed to have norm at
most $B$, and $\omega > 0$ is a parameter to capture the global scale of variation of function $f$. Without loss of generality, we assume that $m(x) = 0$. Given a set of observations $\mathcal D_{1:t} = \{x_i, y_i\}_{i=1}^t$, the predictive distribution can be derived as $P(f_{t+1}| \mathcal D_{1:t}, x) = \mathcal N(\mu_{t+1}(x), \omega^2\sigma_{t+1}^2(x))$, where $$\mu_{t+1}(x) = k_t(x)^T[K_t + \lambda I]^{-1}y_{1:t},$$
$$\sigma_{t+1}^2(x) = k(x,x) - k_t(x)^{T}(K_t + \lambda I)^{-1}k_t(x),$$
where we define $k_t(x)= [k(x, x_1), ..., k(x, x_t)]^T$, $K_t= [k(x_i, x_j)]_{1 \le i,j \le t}$,  $y_{1:t}=[y_1,\ldots,y_t]$ and $\lambda$ as variance of the measurement noise.

We assume that kernel function $k$ is fixed and known and, without loss of generation, the variance of $k$ is bounded as $k(x,x) \le 1$. These assumptions are similar as in \citep{Srinivas12,Chowdhury17,janz20a}. However, unlike these works, we do not require the knowledge of the sub-Gaussianity parameter $R$ and upper bound $B$ on the RKHS norm of $f$.

In our setting, we note that the parameters $\omega$ and $\lambda$ are possibly time-dependent. They can be set specific to an algorithm as in many previous works (e.g., \citep{Bull11}, \citep{agrawal13}, \citep{wang2014}, \citep{Chowdhury17}).
\subsection{Expected Improvement}
An acquisition function is used to suggest the point $x_{t+1}$ where the function should be next evaluated. The acquisition step uses the predictive mean and the predictive variance from the surrogate model to balance the exploration of the search space and exploitation of current promising region. Some examples of acquisition functions include GP-EI \citep{Bull11}, GP-UCB \citep{Srinivas12}, GP-TS \citep{Chowdhury17}, and entropy based methods e.g., PES \citep{Lobato14}.

In the noisy case, the function is evaluated as $y_t = f(x_t) + \epsilon_t$, which is a noisy version of the function value at $x_t$. We assume that the noise sequence $\{\epsilon_t\}_{t=1}^{\infty}$ is conditionally $R$-sub-Gaussian for a fixed constant $R \ge 0$, i.e.,
$\forall t \ge 0, \forall \lambda \in \mathbb{R}, \mathbb{E}[e^{\lambda\epsilon_t}| \mathcal F_{t-1}] \le \text{exp}(\frac{\lambda^2R^2}{2})$, where $\mathcal F_{t-1}$ is the $\sigma$-algebra generated by the random variables. This is a mild assumption on the noise and is standard in the BO literature \citep{Chowdhury17} and also in bandit literature \citep{Abbasi11}.

We let $\mathcal D_{t} = \{x_1 ,..., x_t\}$ denote the set of chosen points to be evaluated up to iteration $t$. The noise in the evaluation of the incumbent causes it to be brittle. A standard choice of the incumbent in this setting is the best value of the GP mean function so far $\mu^+_t = \text{max} \{\mu_{t-1}(x_i)\}_{x_i \in \mathcal D_t}$. We note that the maximization is only over the observed points, not the complete input space $\mathcal X$. For this choice, EI is written in closed form as:
\begin{eqnarray}
\alpha_t^{EI}(x) & = & \mathbb{E}[\text{max}\{0, f(x) - \mu^+_{t}\}|\mathcal D_t] \\
& = & \rho(\mu_{t-1}(x) - \mu^+_{t}, \omega \sigma_t(x)),
\end{eqnarray}
where $\rho(u,v)$ with two arguments $u$ and $v$ is defined as
\begin{equation}
  \rho(u,v)=\begin{cases}
    u\Phi(\frac{u}{v}) + v\phi(\frac{u}{v}), & \text{if $ v > 0$},\\
    \text{max} \{0, u\}, & \text{if $ v= 0$},
  \end{cases}
\label{eq1}
\end{equation}
and $\Phi$ and $\phi$ are the standard normal distribution and density functions respectively.
\section{Gaussian Process Expected Improvement (GP-EI) Algorithm}
\begin{algorithm}[tb]
\caption{GP-EI Algorithm}
\label{algo1}
\textbf{Input}: Prior $\text{GP}(0, k)$
\begin{algorithmic}[1]
\FOR{$t =1$ to $T$}
    \STATE Set $\omega_t = \sqrt{\gamma_{t-1} + 1 + \text{ln}(\frac{1}{\delta})}$
    \STATE Choose the next sampling point: $x_t = \text{argmax}_{x \in \mathcal X} \rho(\mu_{t-1}(x) - \mu_{t}^+, \omega_t \sigma_{t-1}(x))$
    \STATE Observe $y_t = f(x_t) + \epsilon_t$
    \STATE Update Gaussian process
\ENDFOR
\end{algorithmic}
\textbf{Output}: Report points $x^+_t = \text{argmax}_{1\le i \le t} \mu_{t-1}(x_i)$ for all $1 \le t \le T$
\end{algorithm}
Our GP-EI algorithm is represented in Algorithm \ref{algo1}. The time-varying scale parameter $\omega_t = \sqrt{\gamma_{t-1} + 1 + \text{ln}(1/\delta))}$ is used to control the exploration of the algorithm for guaranteeing the convergence. Here $\delta$ is a free parameter in $(0,1)$ and $\gamma_{t}$ is the maximum information gain at time $t$ which is defined as $\gamma_t = \text{max}_{A \in \mathcal X: |A| = t} I(y_A;f_A)$, where $I(y_A;f_A)$ denotes the mutual information between $f_A = [f(x)]_{x \in A}$ and $y_A = f_A + \epsilon_A$ and $\epsilon_A \in \mathcal N(0, \lambda\omega^2_tI)$. The algorithm choose a point $x_t = \text{argmax}_{x \in \mathcal X} \alpha_t^{EI}(x)$ which is computed by Eq (\ref{eq1}) to sample. After $T$ iterations, the algorithm returns points $x^+_t = \text{argmax}_{1\le i \le t} \mu_{t-1}(x_i)$ for every $1 \le t \le T$.

We note that the time-varying scale of variation of function $f$ has been usually utilized by many previous works e.g. \citep{Chowdhury17} which analyzes the GP-TS algorithm, \citep{agrawal13} which analyzes a Thompson Sampling algorithm but for the contextual bandit problem, and \citep{wang2014} which analyzes the EI algorithm. For example, in the setting of \citep{Chowdhury17}, they used a Gaussian process with mean 0 and variance $v^2_tk(.,.)$, where $v_t$ is a global scale parameter of the variance which is allowed to vary with time. For their GP-Thompson sampling, the time-varying scale parameter is set as $v_t = B + R \sqrt{2(\gamma_{t-1} + 1 + \text{ln}(2/\delta))}$ (See section 3.2 therein).
\paragraph{Comparison with related algorithms.} Most of previous algorithms for Gaussian process bandit optimization such as GP-UCB \cite{Srinivas12}, Improved-GP-UCB \citep{Chowdhury17}, $\pi$-GP-UCB \citep{janz20a}, and GP-TS \citep{Chowdhury17} require to know exactly the sub-Gaussianity parameter $R$ and upper bound $B$ on the RKHS norm of $f$ so that theoretical convergence guarantees hold. However, these parameters are often unknown in real applications. To overcome this issue, as an example, \citet{Berkenkamp19} proposed to learn unknown $B$ by starting from an initial guess $B_0$  and then scale up the norm bound over time. As a result, $B$ is replaced by a time-varying function
$B_0b(t)g(t)^d$ in their Theorem 1, where functions $b(t)$ and $g(t)$ are designed heuristically. As a result, this causes an additional $\mathcal O(b(t)g(t)^{3d/2})$ factor in the regret. Unlike these algorithms, our GP-EI algorithm can avoid the need to specify or learn such parameters. This is because our algorithm uses $\omega_t = \sqrt{\gamma_t + 1 + ln(\frac{1}{\delta})}$ which is independent of $B$ and $R$.
\subsection{Theoretical Result}
Importantly, we achieve a cumulative regret bound for the proposed GP-EI algorithm, denoted by $R_T = \sum_{t=1}^{T} (f(x^*) - f(x^+_t))$ as follows:
\begin{theorem}
Pick $\delta \in (0,1)$. Then with probability at least $1 - \delta$, the cumulative regret of Algorithm \ref{algo1} is bounded as:
$$R_T = \mathcal O(\gamma_T\sqrt{T}).$$
\label{theorem2}
\end{theorem}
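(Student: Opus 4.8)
The plan is to combine a uniform RKHS confidence bound with the structural properties of the EI acquisition, and then convert the resulting per-step bound into a cumulative one through the standard maximum-information-gain estimate. First I would establish a confidence lemma: with probability at least $1-\delta$, simultaneously for all $t\ge 1$ and all $x\in\mathcal X$,
$$|f(x)-\mu_{t-1}(x)| \le C\,\omega_t\,\sigma_{t-1}(x),$$
where $C$ depends only on $B$ and $R$. This follows from the self-normalized concentration inequality for RKHS-valued functions used in the GP-UCB/GP-TS literature: the usual confidence width is $B+R\sqrt{2(\gamma_{t-1}+1+\ln(1/\delta))}$, and since $\omega_t=\sqrt{\gamma_{t-1}+1+\ln(1/\delta)}\ge 1$, this width is at most $C\omega_t$ with $C=B+R\sqrt2$. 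The essential point is that the algorithm uses only $\omega_t$, so $B$ and $R$ enter nowhere in the algorithm and survive only inside the constant of the final $\mathcal O(\cdot)$.

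Next I would record the elementary properties of $\rho$ from its closed form in (\ref{eq1}): $\max\{0,u\}\le \rho(u,v)\le \max\{0,u\}+v/\sqrt{2\pi}$, with $\rho$ nondecreasing and $1$-Lipschitz in $u$ and nondecreasing in $v$. Combining these with the confidence lemma at $x^*$ and at the reported point $x^+_t$, and with the optimality $\alpha_t^{EI}(x_t)\ge \alpha_t^{EI}(x^*)$, I would derive a per-step bound of the form
$$f(x^*)-f(x^+_t)\le \alpha_t^{EI}(x_t) + C\,\omega_t\big(\sigma_{t-1}(x^*)+\sigma_{t-1}(x^+_t)\big).$$
Indeed $f(x^*)\le \mu_{t-1}(x^*)+C\omega_t\sigma_{t-1}(x^*)$ together with $\mu_{t-1}(x^*)-\mu^+_t\le \rho(\mu_{t-1}(x^*)-\mu^+_t,\omega_t\sigma_{t-1}(x^*))=\alpha_t^{EI}(x^*)\le\alpha_t^{EI}(x_t)$ supplies the first two terms, while $f(x^+_t)\ge \mu^+_t-C\omega_t\sigma_{t-1}(x^+_t)$ supplies the last.

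To turn this into $\mathcal O(\gamma_T\sqrt T)$ I would sum each piece. For the acquisition term I would use $\alpha_t^{EI}(x_t)\le \max\{0,\mu_{t-1}(x_t)-\mu^+_t\}+\omega_t\sigma_{t-1}(x_t)/\sqrt{2\pi}$. The posterior-standard-deviation part is handled by the maximum-information-gain estimate $\sum_{t=1}^T\sigma_{t-1}^2(x_t)=\mathcal O(\gamma_T)$ followed by Cauchy--Schwarz, so that $\sum_t\omega_t\sigma_{t-1}(x_t)=\mathcal O(\sqrt{\gamma_T}\cdot\sqrt{T\gamma_T})=\mathcal O(\gamma_T\sqrt T)$ since $\omega_t\le\omega_T=\mathcal O(\sqrt{\gamma_T})$. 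The improvement part $\max\{0,\mu_{t-1}(x_t)-\mu^+_t\}$ I would relate, through the confidence lemma, to the change of the incumbent and telescope, with the update error terms again absorbed into $\sigma_{t-1}(x_t)$.

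The hard part will be controlling the posterior standard deviations at the \emph{non-queried} points, $\sigma_{t-1}(x^*)$ and $\sigma_{t-1}(x^+_t)$: the information-gain estimate only bounds $\sum_t\sigma_{t-1}^2(x_t)$ at the sampled points, and $\sum_t\sigma_{t-1}(x^*)$ could be linear in $T$ if $x^*$ is never explored. This is precisely the obstacle flagged in the introduction, namely that $\mu^+_t$ is not monotone, so the clean telescoping of the noiseless analysis is unavailable. My plan to overcome it is a charging argument driven by the EI mechanism: whenever $\omega_t\sigma_{t-1}(x^*)$ is large, the lower bound on $\rho$ forces $\alpha_t^{EI}(x^*)$, and hence $\alpha_t^{EI}(x_t)$, to be large, which by the upper bound on $\rho$ forces either a genuine increase of the incumbent or a large $\sigma_{t-1}(x_t)$ that the information-gain bound already absorbs. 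I would formalize this by a case split on the size of $\omega_t\sigma_{t-1}(x^*)$ relative to the current gap, paired with a potential/telescoping argument over the sequence $\mu^+_t$ in which the non-monotonicity error terms are themselves bounded by $\sigma_{t-1}(x_t)$, so that all residual quantities collapse onto the single controllable sum $\sum_t\omega_t\sigma_{t-1}(x_t)=\mathcal O(\gamma_T\sqrt T)$.
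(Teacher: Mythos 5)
Your first half is sound and in fact mirrors the paper's own argument: the confidence lemma is the paper's Lemma 9 (Theorem 2 of Chowdhury--Gopalan), your deterministic bounds $\max\{0,u\}\le\rho(u,v)\le\max\{0,u\}+v/\sqrt{2\pi}$ play the role of the paper's Lemma 4, and your charging of $\omega_t\sigma_{t-1}(x^*)$ and $\omega_t\sigma_{t-1}(x^+_t)$ through the EI values at those points (lower-bounding $\rho$ via the bounded ratio $(\mu_{t-1}(x^*)-\mu^+_t)/(\omega_t\sigma_{t-1}(x^*))\ge-\beta_t/\omega_t$, and using $\alpha^{EI}_t(x^+_t)=\omega_t\sigma_{t-1}(x^+_t)\tau(0)$, with $\alpha^{EI}_t(\cdot)\le\alpha^{EI}_t(x_t)$) is precisely the content of the paper's Lemmas 5 and 6. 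The gap is in the last paragraph, which is exactly the step the paper identifies as the key challenge. Your only concrete mechanism for the improvement sum $\sum_t\max\{0,\mu_{t-1}(x_t)-\mu^+_t\}$ is a telescope over $\mu^+_t$ whose "non-monotonicity error terms" you claim are bounded by $\sigma_{t-1}(x_t)$. But the downward variation of the incumbent is not controlled by the standard deviation at the \emph{newly queried} point: a fall $\mu^+_t-\mu^+_{t+1}$ is governed by how much the posterior mean moves at the \emph{old} incumbent point, i.e.\ by $\beta_t\sigma_{t-1}(x^+_t)+\beta_{t+1}\sigma_t(x^+_t)$. If you then charge $\sigma_{t-1}(x^+_t)$ back through EI (your only tool for non-queried points), you reintroduce $\max\{0,\mu_{t-1}(x_t)-\mu^+_t\}$ on the right-hand side with coefficient of order $\sqrt{2\pi}\,\beta_{t+1}/\omega_t\approx 2\sqrt{\pi}R$, which is not smaller than $1$ for general $R$ (and cannot be made so, since Algorithm 1's $\omega_t$ is deliberately $R$-free). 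The self-referential inequality therefore does not close, and the total positive variation of $\mu^+_t$ --- which is what your telescope actually needs --- remains unbounded by your argument.

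The paper closes this gap with two ingredients your proposal never identifies (its Lemma 3 in the main text, Lemma 7 in the appendix). First, it restricts to the positive-improvement times $t_1<\dots<t_l$ and \emph{re-pairs} the sum: each $f(x_{t_i+1})$ is compared not with $\mu^+_{t_i}$ but with the incumbent at the next positive-improvement time, using that the later incumbent dominates the later posterior mean at any earlier sampled point, $\mu^+_{t_{i+1}}\ge\mu_{t_{i+1}}(x_{t_i+1})$; this removes any need to bound incumbent falls at all. Second, the resulting error terms are of the form $\sigma_{t_{i+1}}(x_{t_i+1})$ --- a \emph{later-time} standard deviation at an \emph{earlier} sampled point, which the information-gain bound does not control directly --- and the paper converts them via the decreasing monotonicity of posterior variances, $\sigma_{t_{i+1}}(x_{t_i+1})\le\sigma_{t_i}(x_{t_i+1})$, into selection-time quantities that $\sum_t\sigma_t(x_{t+1})\le\sqrt{4(T+2)\gamma_T}$ absorbs (boundary terms are handled by the Lipschitz property of RKHS functions). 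The paper explicitly flags this monotonicity step as the crux, noting that without it one faces the sum $\sum_i\sigma_{t_i}(x_{t_{i-1}+1})$, which is precisely the term that caused the error in Lemma 7 of Nguyen et al.; your sketch, as written, runs into the same wall.
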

The complete form of $R_T = \mathcal O(\beta_T\sqrt{T\gamma_T})$, where $\beta_T = B + R \sqrt{2(\gamma_{t-1} + 1 + \text{ln}(1/\delta))}$ is provided in Supplementary Material. Here we remove the influence of constants $B, R$ for simplicity. The regret bound of our GP-EI algorithm is same as that of Improved-GP-UCB \citep{Chowdhury17} but improve GP-TS \citep{Chowdhury17} by a factor $\sqrt{d\text{ln}(dT)}$. For SE kernels, $R_T$ is sublinear on $T$.  However, for Mat\'ern kernels, this proposed algorithm still has some limitations. First, its regret bound  is not always sublinear in $T$. \citet{vakili21a} currently provides a new bound for $\gamma_T$ as $\gamma_T = \tilde{\mathcal O}(T^{\frac{d}{2\nu + d}})$. By this, $2\nu > d$ is required so that our proposed GP-EI algorithm obtains a sublinear regret. Second,$\gamma_t$ of the proposed GP-EI algorithm grows quickly with $t$. In practice, it can cause unnecessary explorations. These motivate us to propose a new variant of the GP-EI algorithm in the next section.
\subsection{The Improved-GP-EI Algorithm}
In this section, we propose a new variant of GP-EI, called Improved-GP-EI, inspired from the $\pi$-GP-UCB algorithm \citep{janz20a}. Improved-GP-EI uses a global scale $\omega_T = \sqrt{\text{ln}(T)\text{ln} \text{ln}(T)}$ growing only poly-logarithmically with $T$. Here we assume that $T$ is known as in \citep{janz20a} and we use the same $\omega_T$ at all iterations from 1 to $T$. Improved-GP-EI is an adaptation of $\pi$-GP-UCB algorithm \citep{janz20a} to GP-EI. The key difference is that (1) our Improved GP-EI uses EI acquisition function instead of UCB, and (2) Improved GP-EI use a new time-varying scale (in $T$) parameter $\omega_T$ instead of a constant like \citep{janz20a}. Now we start to describe the underlying idea of Improved GP-EI.
\paragraph{Improved-GP-EI algorithm}
At each iteration $t$, the algorithm constructs a cover (a set of hypercubes) of domain $\mathcal X$, $\mathcal A_t$, and selects a point $x_t$ to be evaluated in the next iteration by taking a maximizer of the GP-EI constructed independently on each cover element. The cover $\mathcal A_t$ is constructed by induction starting from the initial cover $\mathcal A_1$. Similar to $\pi$-GP-UCB,  we set $b = \frac{d+1}{d + 2\nu}$ and $q = \frac{d(d+1)}{d(d+2) + 2\nu}$. For a hypercube $A \in \mathcal X$, we will use $\rho_A$ to denote its diameter.

Let $\mathcal A_1$ be any set of closed hypercubes of cardinality at most $\mathcal O(T^q)$ overlapping at edges only and covering the domain $\mathcal X$. At each iteration $t$, we build a GP, select the next point $x_t$ and then construct a new cover $\mathcal A_{t+1}$ as below:
\begin{itemize}
  \item \textbf{Update GP:} Fit an independent GP on each cover element $A \in \mathcal A_t$, \emph{using only the data within $A$}. We define the subset of $\mathcal D_t$ in $A$ as $\mathcal D^A_t = \{x^A_i \in \mathcal D_t| x_i^A \in A\}$. For $\mathcal D^A_t$ of cardinality $N$, we define the kernel $k$ as $k^A_t = [k(x^A_1, x), ..., k(x^A_N, x)]^T$ and the kernel matrix as $K^A_t = [k(x, x')]_{x, x' \in \mathcal D^A_t}$. We use $y^A_{1:t}$ to denote the observations corresponding to points in $\mathcal D^A_t$. For a regularisation parameter $\lambda > 0$, we define the Gaussian process on $A \in \mathcal X$ by mean, $$\mu_t^A = k_t^A(x)(K^A_t + \lambda I)^{-1} y^A_{1:t},$$
         and predictive standard deviation,
        $$\sigma^A_t(x) = \sqrt{k(x,x) - k_t^A(x^T(K_t^A + \lambda I)^{-1}) k_t^A(x)}.$$
   \item \textbf{Next point selection:}  We select the next point to evaluate as  $x_t = \text{argmax}_{A \in \mathcal A_t: x \in A}\alpha_t^{EI} =\text{argmax}_{A \in \mathcal A_t: x \in A} \rho(\mu_{t-1}(x) - \mu^{A+}_{t-1}, \omega_T \sigma^A_{t-1}(x)),$
      where $\mu^{A+}_{t-1}$ is defined as $\mu^{A+}_{t-1} = \text{max}_{x_i \in A \cap \mathcal D_{t-1}} \mu^A_{t-1} (x_i)$, and function $\rho$ is defined by Eq(\ref{eq1}).
  \item \textbf{Build $\mathcal A_{t+1}$:} Split any element $A \in \mathcal A_{t}$ for which $\rho_A^{-1/b} < |\mathcal D^A_{t+1}| +1$ along the middle of each side, resulting in $2^d$ new hypercubes. Let $\mathcal A_{t+1}$ be the set of the newly created hypercubes and the elements of $\mathcal A_{t}$ that were not split. (See \citep{janz20a} for details.)
\end{itemize}
We now present the regret bound for our Improved GP-EI algorithm.
\begin{theorem}
Pick $\delta \in (0,1)$.  Let $H_k(\mathcal X)$ be the RKHS
of a Mat\'ern kernel $k$ with parameter $\nu > 1$. Then with probability at least $1 - \delta$, the cumulative regret of Improved GP-EI has the following rate:
$$R_T = \tilde{\mathcal O}(T^{\frac{d(2d+3) + 2\nu}{d(2d+4) + 4\nu}}).$$
\label{theorem3}
\end{theorem}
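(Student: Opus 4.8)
The plan is to localise the Expected-Improvement analysis behind Theorem~\ref{theorem2} onto each cover element and to let the adaptive partition of \citep{janz20a} control the effective information gain, so that the poly-logarithmic scale $\omega_T=\sqrt{\ln(T)\ln\ln(T)}$ can replace the growing $\beta_T$ of the basic algorithm. The computational workhorse is the elementary sandwich $\max\{u,0\}\le\rho(u,v)\le\max\{u,0\}+v/\sqrt{2\pi}$, obtained by writing $\rho(u,v)=v\,\tau(u/v)$ with $\tau(z)=z\Phi(z)+\phi(z)$ and invoking the Mills-ratio estimates $\max\{z,0\}\le\tau(z)\le\max\{z,0\}+1/\sqrt{2\pi}$; this is what converts the greedy optimality of $x_t$ into a regret statement.

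First I would prove a local confidence bound: with probability at least $1-\delta$, for all $t$, all active cubes $A\in\mathcal A_t$ and all $x\in A$, $|f(x)-\mu^A_{t-1}(x)|\le\omega_T\sigma^A_{t-1}(x)$. The split rule $\rho_A^{-1/b}<|\mathcal D^A_{t+1}|+1$ forces a cube to shrink once it has collected enough samples, which, for a Mat\'ern kernel, keeps the per-cube maximum information gain $\gamma^A$ poly-logarithmic in $T$; feeding this into the self-normalised concentration inequality of \citet{Chowdhury17} gives a valid width whose natural multiplier $\beta^A=B+R\sqrt{2(\gamma^A+1+\ln(1/\delta))}$ is itself only poly-logarithmic. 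Since $\omega_T\to\infty$, for all sufficiently large $T$ we have $\omega_T\ge\beta^A$ irrespective of the unknown constants $B$ and $R$, which is precisely why the algorithm needs neither; a union bound over the $\tilde{\mathcal O}(T^q)$ cubes and over $t\le T$ handles the failure probability.

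The heart of the argument is a per-step bound of the form $f(x^\ast)-f(x^+_t)\lesssim\omega_T\,\sigma^{A_t}_{t-1}(x_t)+\zeta_t$, in which the uncertainty is evaluated at the \emph{sampled} point $x_t$ and $\zeta_t$ is a summable remainder. On the cube $A^\ast\ni x^\ast$, the lower bound $\rho(u,v)\ge\max\{u,0\}$ together with $\mu^{A^\ast}_{t-1}(x^\ast)\ge f(x^\ast)-\omega_T\sigma^{A^\ast}_{t-1}(x^\ast)$ gives $f(x^\ast)-\mu^{A^\ast+}_{t-1}\le\alpha_t^{EI}(x^\ast)+\omega_T\sigma^{A^\ast}_{t-1}(x^\ast)$, and greedy selection yields $\alpha_t^{EI}(x^\ast)\le\alpha_t^{EI}(x_t)$; the upper bound on $\rho$ applied at $x_t$ then re-expresses everything through $\sigma^{A_t}_{t-1}(x_t)$ and a mean-improvement term. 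Summing, decomposing by cube, and applying Cauchy--Schwarz with the per-cube potential bound $\sum_{t:x_t\in A}\big(\sigma^A_{t-1}(x_t)\big)^2=\tilde{\mathcal O}(\gamma^A)$ reduces the regret to $\omega_T\sum_{A}\sqrt{|\mathcal D^A_T|\,\gamma^A}$. Inserting the Mat\'ern per-cube information-gain estimate, the split-rule relation between $|\mathcal D^A_T|$ and the diameter $\rho_A$, and the cube counts, and balancing exponents through $b=\frac{d+1}{d+2\nu}$ and $q=\frac{d(d+1)}{d(d+2)+2\nu}$, produces the stated rate $\tilde{\mathcal O}\big(T^{\frac{d(2d+3)+2\nu}{d(2d+4)+4\nu}}\big)$.

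The main obstacle is exactly this per-step bound, and for two EI-specific reasons. First, the incumbent $\mu^{A+}_{t-1}$ is not monotone in $t$ (unlike the noiseless $f^+_t$ of \citet{Bull11}), so the mean-improvement term $\max\{\mu^{A_t}_{t-1}(x_t)-\mu^{A_t+}_{t-1},0\}$ cannot be telescoped away and must instead be absorbed, via the confidence bound applied at both $x_t$ and the best previously sampled point in $A_t$, into the regret and the $\omega_T\sigma$ contributions. Second, the naive estimate leaves the non-summable term $\omega_T\sigma^{A^\ast}_{t-1}(x^\ast)$: I would remove it by a case split on whether the mean gap or the uncertainty at $x^\ast$ dominates, using that when uncertainty dominates the EI value at $x^\ast$ is itself of order $\omega_T\sigma^{A^\ast}_{t-1}(x^\ast)$ and is therefore inherited by $x_t$, converting it into the summable $\omega_T\sigma^{A_t}_{t-1}(x_t)$. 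Verifying that the poly-logarithmic multiplier indeed survives the data-dependent partition is the remaining technical point, and it is where the proof relies most heavily on the structural guarantees of \citep{janz20a}.
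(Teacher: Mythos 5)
Your high-level architecture is the same as the paper's: localize the Theorem~\ref{theorem2} analysis to each cover element, invoke the structural results of \citet{janz20a} (the per-cube information gain bound $\gamma^A_{\phi'(A)}=\mathcal O(\ln(T)\ln\ln(T))$ of Lemma~\ref{lem_gain_new} and the $\mathcal O(T^q)$ cube count of Lemma~\ref{lem3.6}), combine per-cube regrets by Cauchy--Schwarz, and balance exponents through $b$ and $q$. But two of your steps have real problems. The first is your local confidence bound $|f(x)-\mu^A_{t-1}(x)|\le\omega_T\sigma^A_{t-1}(x)$, justified by the claim that $\omega_T\ge\beta^A$ for all sufficiently large $T$ ``irrespective of the unknown constants $B$ and $R$.'' This is false: the valid width from \citet{Chowdhury17} is $\beta^A_t=B+R\sqrt{2(\gamma^A_{t-1}+1+\ln(1/\delta))}$, and with $\gamma^A_{t-1}\le C\ln(T)\ln\ln(T)$ it grows like $R\sqrt{2C}\,\omega_T$ --- the same rate as $\omega_T$, with a leading constant $R\sqrt{2C}$ that may well exceed $1$, so $\omega_T$ never dominates it. The paper instead keeps $\beta^A_t$ as the width, proves only the ratio bound $\beta^A_t/\omega_T\le C_1$ (a constant depending on $B$, $R$ and the constant of \citet{janz20a}), and pushes the constant $\tau(C_1)/\tau(-C_1)$ through the EI sandwich; the reason the algorithm needs neither $B$ nor $R$ is that its acquisition rule uses no confidence width at all, while the analysis only needs this ratio to be $\mathcal O(1)$. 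Your step can be repaired in exactly this way (constants are absorbed by $\tilde{\mathcal O}$), but as written it is wrong.

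The second, and more serious, gap is that you never resolve the central difficulty: bounding the accumulated incumbent-gap (improvement) terms. You correctly observe that $\mu^{A+}_{t-1}$ is not monotone and say these terms ``must be absorbed, via the confidence bound applied at both $x_t$ and the best previously sampled point in $A_t$,'' but that is precisely the move that fails. Applying the confidence bound at previously sampled points produces posterior deviations of the form $\sigma_{t_i}(x_{t_{i-1}+1})$ --- earlier query points evaluated under later posteriors --- and sums of such terms are not controlled by the information gain; the paper points out that exactly this is what led to the error in Lemma~7 of \citet{nguyen17a}. The paper's key lemma (Lemma~\ref{lem_key}, applied cube-by-cube inside Lemma~\ref{lem3.5}) requires two ingredients absent from your proposal: the decreasing monotonicity of posterior variances, $\sigma_{t'}(x)\le\sigma_t(x)$ for $t'\ge t$ \citep{vivarelli}, which converts $\sigma_{t_i}(x_{t_{i-1}+1})$ into $\sigma_{t_{i-1}}(x_{t_{i-1}+1})$ and hence into a sum bounded via the per-cube information gain, and the Lipschitz bound of Lemma~\ref{dimester} (\citet{Freitas12}) to handle the endpoint gaps. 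Without these, your reduction of the regret to $\omega_T\sum_A\sqrt{|\mathcal D^A_T|\,\gamma^A}$ --- and therefore the final exponent balancing --- does not go through.
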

The regret bound of Improved GP-EI is sublinear in $T$ for every $\nu > 1$, thus improves over that of our above proposed GP-EI algorithm. Furthermore, we use $\omega_T = \sqrt{\text{ln}(T)\text{ln} \text{ln}(T)}$ which grows only poly-logarithmically with $T$. This property is useful in practice as it avoids unnecessary explorations. Finally, we note that our Improved GP-EI achieves the same regret rate as $\pi$-GP-UCB \citep{janz20a} on the regret, however it does not require to know parameters $B, R$ like the work of \citet{janz20a}.
\subsection{Proof Sketch for Theorem \ref{theorem2}}
In this section, we provide the proof sketch for Theorem \ref{theorem2}. A complete proof of Theorem \ref{theorem2} is provided in Appendix B. To bound the cumulative regret $R_T$, we proceed to bound instantaneous regrets $r_t = f(x^*) - f(x_t^+)$. The proof involves two steps as follows.
\paragraph{Upper bounding the instantaneous regret $r_t = f(x^*) - f(x_t^+)$:} We break down $r_t$ into two terms as follows:
\begin{eqnarray*}
r_t  =  f(x^*) - f(x_t^+)  =   \underbrace{f(x^*) - \mu^+_{t}}_{\text{Term 1}} + \underbrace{\mu_{t}^+ - f(x_t^+)}_{\text{Term 2}}
\end{eqnarray*}
Set $I_t = \text{max}\{0, f(x_{t+1}) - \mu_t^+\}$. We upper bound Term 1 through the following lemma:
\begin{lemma}
Pick $\delta \in (0,1)$. Then with probability at least $1 -\delta$ we have
$$ f(x^*) - \mu_{t}^+ \le  \frac{\tau(\frac{\beta_t}{\omega_t})}{ \tau(-\frac{\beta_t}{\omega_t})}( I_t + (\beta_t + \omega_t)\sigma_{t}(x_{t+1})),$$
where given any $z \in \mathbb{R}$, the function $\tau(z)$ is defined as $\tau(z) = z\Phi(z) + \phi(z)$, where $\Phi$ and $\phi$ are the standard normal distribution and density functions respectively.
\end{lemma}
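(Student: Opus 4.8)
The plan is to sandwich the (scale-normalised) expected improvement between a lower bound at $x^*$ and an upper bound at the sampled point $x_{t+1}$ using the GP confidence interval, and then invoke the optimality of $x_{t+1}$. Write $r=\beta_t/\omega_t$ and record the two facts that make $\tau$ convenient: $\rho(u,v)=v\,\tau(u/v)$ for $v>0$, and $\tau'=\Phi$, so $\tau$ is increasing with slope at most $1$ and obeys the identity $\tau(z)-\tau(-z)=z$ (hence $\tau(-r)=\tau(r)-r$). I may assume $f(x^*)-\mu_t^+\ge 0$, as otherwise the left-hand side is negative and the right-hand side nonnegative. The only probabilistic input is the standard self-normalised GP confidence bound (Chowdhury and Gopalan), which holds with probability at least $1-\delta$ simultaneously over $x$ and $t$: $|f(x)-\mu_{t-1}(x)|\le \beta_t\sigma_t(x)$.

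First I would bound $\alpha_t^{EI}(x_{t+1})$ from above. Since $x_{t+1}$ maximises $\alpha_t^{EI}$, we have $\alpha_t^{EI}(x^*)\le \alpha_t^{EI}(x_{t+1})$. Writing $s'=\sigma_t(x_{t+1})$ and $J=f(x_{t+1})-\mu_t^+$, the bound $\mu_{t-1}(x_{t+1})\le f(x_{t+1})+\beta_t s'$ and monotonicity of $\tau$ give $\alpha_t^{EI}(x_{t+1})\le \omega_t s'\,\tau\!\big(\tfrac{J}{\omega_t s'}+r\big)$; applying $\tau(a+r)\le \tau(a)+r$ (valid since $\tau'\le 1$) and then the elementary estimate $\rho(J,\omega_t s')\le \max\{0,J\}+\omega_t s'$ (using $\Phi\le 1$ and $\phi\le\phi(0)\le 1$) yields $\alpha_t^{EI}(x_{t+1})\le I_t+(\beta_t+\omega_t)\sigma_t(x_{t+1})$, which is exactly the bracketed term of the lemma.

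Next I would bound $\alpha_t^{EI}(x^*)$ from below. With $s=\sigma_t(x^*)$ and $D=f(x^*)-\mu_t^+$, the bound $\mu_{t-1}(x^*)\ge f(x^*)-\beta_t s$ and monotonicity of $\rho$ in its first argument give $\alpha_t^{EI}(x^*)\ge \omega_t s\,\tau\!\big(\tfrac{D}{\omega_t s}-r\big)$. Combining the two estimates through the optimality inequality reduces the whole lemma to the scalar claim $\tau(z-r)\ge \tfrac{\tau(-r)}{\tau(r)}\,z$ for all $z\ge 0$ (with $z=D/(\omega_t s)$), and I expect this to be the main obstacle. The clean route is to minimise $\tau(z-r)/z$ over $z>0$: since $\tau'=\Phi$, the minimiser is $z_0=r+y_0$ where $y_0$ solves $r\Phi(y_0)=\phi(y_0)$, and there $\tau(z_0-r)=(y_0+r)\Phi(y_0)=z_0\Phi(y_0)$, so the minimal value of the ratio is $\Phi(y_0)$. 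It then remains to verify $\Phi(y_0)\ge \tau(-r)/\tau(r)$ for every $r\ge 0$, which follows from $\tau(-r)=\tau(r)-r$ together with Mills-ratio estimates on $\phi/\Phi$; this is the one place a careful elementary argument is needed. Granting the scalar claim, chaining $\tfrac{\tau(-r)}{\tau(r)}D\le \alpha_t^{EI}(x^*)\le \alpha_t^{EI}(x_{t+1})\le I_t+(\beta_t+\omega_t)\sigma_t(x_{t+1})$ and rearranging gives the stated bound.
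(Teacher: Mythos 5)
Your overall skeleton is the same as the paper's: sandwich the EI value between a lower bound at $x^*$ and an upper bound at $x_{t+1}$ via the self-normalised confidence bound, invoke optimality of $x_{t+1}$, and reduce the lemma to the scalar inequality $\tau(z-r)\ge \frac{\tau(-r)}{\tau(r)}\,z$ for $z\ge 0$, $r=\beta_t/\omega_t$. Your upper bound at $x_{t+1}$ and your lower bound $\alpha_t^{EI}(x^*)\ge \omega_t s\,\tau\big(\tfrac{D}{\omega_t s}-r\big)$ are both correct (modulo the untreated degenerate case $\sigma_t(x^*)=0$, which the paper handles separately and is easy). The problem is that your proof of the scalar inequality is incomplete precisely at its crux. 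Your calculus is right: $h(z)=z\Phi(z-r)-\tau(z-r)$ is strictly increasing, negative near $0$ and tending to $r>0$, so $\tau(z-r)/z$ has a unique global minimiser $z_0=y_0+r$ with $r\Phi(y_0)=\phi(y_0)$ and minimal value $\Phi(y_0)$. But the final comparison $\Phi(y_0)\ge \tau(-r)/\tau(r)$ --- which is exactly where the constant in the lemma comes from, i.e.\ the entire content of the statement --- is only asserted to ``follow from Mills-ratio estimates.'' It is true, but it is not a routine consequence of the standard bounds: for instance, using $\tau(r)\le r+\phi(0)$ reduces it to $\phi(y_0)\ge \phi(0)\Phi(-y_0)$, which fails once $y_0$ is sufficiently negative (i.e.\ $r$ large), so one genuinely needs the sharper two-sided Mills bounds and a case analysis over regimes of $r$. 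As written, the key step is a deferred claim, so this is a genuine gap.

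The irony is that the scalar inequality has a two-line proof, and it is the paper's. You already have the two lower bounds $\tau(w)\ge w$ (from your identity, $\tau(w)=w+\tau(-w)\ge w$) and $\tau(w)\ge \tau(-r)$ for $w=z-r\ge -r$ (monotonicity). Writing $\lambda=\tau(-r)/\tau(r)$ and $1-\lambda=r/\tau(r)$ (valid since $\tau(r)=\tau(-r)+r$), a convex combination gives $\tau(w)\ge \lambda w+(1-\lambda)\tau(-r)=\frac{\tau(-r)}{\tau(r)}(w+r)=\frac{\tau(-r)}{\tau(r)}\,z$. Equivalently, in the paper's phrasing: from $\alpha_t^{EI}(x^*)\ge \omega_t\sigma_{t-1}(x^*)\tau(-r)$ solve for $\sigma_{t-1}(x^*)\le \alpha_t^{EI}(x^*)/(\omega_t\tau(-r))$, substitute into the EI lower bound $I_t(x^*)\le \alpha_t^{EI}(x^*)+\beta_t\sigma_{t-1}(x^*)$, and use the identity to obtain $I_t(x^*)\le \frac{\tau(r)}{\tau(-r)}\alpha_t^{EI}(x^*)$; then chain through the optimality of $x_{t+1}$. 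No minimisation of $\tau(z-r)/z$ and no Mills-ratio analysis is needed. If you replace your deferred step with this combination, your argument becomes complete and essentially identical to the paper's proof.
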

We upper bound Term 2 through the following lemma:
\begin{lemma}
Pick a $\delta \in (0,1)$. Then with probability $1 - \delta$ we have $$\mu_t^+ - f(x^+_t) \le \frac{\beta_t}{\omega_t} (\sqrt{2\pi}(\beta_t + \omega_t)\sigma_{t}(x_{t+1}) + \sqrt{2\pi} I_t).$$
\end{lemma}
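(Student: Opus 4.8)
The plan is to control Term~2 $=\mu_t^+ - f(x_t^+)$ by first passing to the GP posterior standard deviation at the reported point and then exploiting the optimality of the EI selection to bound that standard deviation. Since $x_t^+$ is by definition the maximiser of the posterior mean over the sampled points, we have $\mu_t^+ = \mu_{t-1}(x_t^+)$, so Term~2 $=\mu_{t-1}(x_t^+) - f(x_t^+)$. I would then invoke the uniform RKHS confidence bound $|f(x)-\mu_{t-1}(x)| \le \beta_t\sigma_{t-1}(x)$, which holds simultaneously for all $x$ and $t$ with probability $1-\delta$ and in particular at the data-dependent point $x_t^+$, to get $\mu_t^+ - f(x_t^+) \le \beta_t\sigma_{t-1}(x_t^+)$. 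The problem thus reduces to bounding $\sigma_{t-1}(x_t^+)$ by the quantity $I_t + (\beta_t+\omega_t)\sigma_t(x_{t+1})$ appearing on the right-hand side.

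The key observation is that the EI acquisition value at the incumbent point $x_t^+$ is governed purely by its uncertainty. Because $\mu_{t-1}(x_t^+)-\mu_t^+ = 0$ and $\rho(0,v)=v\phi(0)=v/\sqrt{2\pi}$, the EI score at $x_t^+$ equals $\omega_t\sigma_{t-1}(x_t^+)/\sqrt{2\pi}$. Since the next sampled point maximises the EI acquisition over the whole domain and $x_t^+ \in \mathcal X$ is feasible, the EI score at the selected point is at least this large:
\[
\frac{\omega_t\,\sigma_{t-1}(x_t^+)}{\sqrt{2\pi}} \;\le\; \rho\!\left(\mu_t(x_{t+1})-\mu_t^+,\;\omega_t\sigma_t(x_{t+1})\right).
\]

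To finish I would upper bound the right-hand EI score. Using the elementary inequality $\rho(u,v)\le \max\{0,u\}+v/\sqrt{2\pi}$ (immediate from $\Phi\le 1$ and $\phi\le\phi(0)$), together with the confidence bound $\mu_t(x_{t+1})\le f(x_{t+1})+\beta_t\sigma_t(x_{t+1})$ to convert the mean-improvement into the observed improvement $I_t=\max\{0,f(x_{t+1})-\mu_t^+\}$, and finally $1/\sqrt{2\pi}\le 1$, gives
\[
\rho\!\left(\mu_t(x_{t+1})-\mu_t^+,\;\omega_t\sigma_t(x_{t+1})\right) \;\le\; I_t + (\beta_t+\omega_t)\sigma_t(x_{t+1}).
\]
Chaining the last two displays yields $\sigma_{t-1}(x_t^+)\le \frac{\sqrt{2\pi}}{\omega_t}\big(I_t+(\beta_t+\omega_t)\sigma_t(x_{t+1})\big)$, and substituting into Term~2 $\le \beta_t\sigma_{t-1}(x_t^+)$ produces exactly the claimed bound $\frac{\beta_t}{\omega_t}\big(\sqrt{2\pi}(\beta_t+\omega_t)\sigma_t(x_{t+1})+\sqrt{2\pi}I_t\big)$.

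The main obstacle I anticipate is index bookkeeping rather than any deep estimate: one must verify that the incumbent used inside the EI acquisition is precisely the $\mu_t^+$ that makes the mean-improvement at $x_t^+$ vanish, and that the posterior index at the selected point aligns with $\sigma_t(x_{t+1})$ after the selection step. A second point requiring care is that the confidence bound must be the anytime/uniform version so that it may legitimately be applied at the random point $x_t^+$; this uniformity (via a self-normalised martingale argument) is inherited from prior work and used here as a black box, and is also what keeps $\beta_t/\omega_t=\mathcal O(1)$ so that the prefactor does not blow up.
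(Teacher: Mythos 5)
Your proof is correct and follows essentially the same route as the paper's: both evaluate the EI acquisition at the incumbent $x_t^+$ (where the mean improvement vanishes, so the EI value is $\omega_t\sigma_{t-1}(x_t^+)/\sqrt{2\pi}$), invoke optimality of the EI-selected point, upper bound that point's EI score by $I_t + (\beta_t+\omega_t)\sigma_t(x_{t+1})$ via the confidence bound, and finish by applying the uniform confidence bound at $x_t^+$. The only cosmetic difference is that you re-derive the EI upper bound inline through the elementary inequality $\rho(u,v)\le\max\{0,u\}+v/\sqrt{2\pi}$, whereas the paper cites its separate lemma (proved via $\tau(z)\le 1+z$) for the same estimate.
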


By using $\omega_t = \sqrt{\gamma_{t-1} + 1 + ln(\frac{1}{\delta})}$, we obtain $\frac{\beta_t}{\omega_t} \le B + \sqrt{2}$ and there exists a constant $C > 0$ such that $\frac{\tau(\frac{\beta_t}{\omega_t})}{ \tau(-\frac{\beta_t}{\omega_t})} \le C$  for every $t$. Combining these results,  we obtain an upper bound for the regret $r_t$ as $r_t \le (B + C + 2)( I_t + 2\beta_t \sigma_{t}(x_{t+1}))$, where $C > 0$ is constant.

To achieve Lemma 1, we adapt several results of \citep{Bull11} in noise-free setting to our the noisy setting, and to achieve Lemma 2, we exploit additionally properties of the function $\tau(z)$ and the points $x^+_t = \text{argmax}_{1 \le i \le t}\mu_{t-1}(x_i)$. We note that in the noise-free setting, we can use $\xi = f^+$ as the incumbent in the form of the expected improvement $\alpha^{EI}(x) = \mathbb{E}[\text{max}\{0, f(x) - \xi\}]$, where $f^+_t$ is the current best observed function value so far. In the noisy setting, the function values can not observed due to noises. Using $\xi = \mu^+_t$ as a replacement allows to compute easily the incumbent but also causes the difficulty in the theoretical analysis. While \citet{Bull11} leverages the \emph{monotonicity} of $f^+_t = \text{max}_{1 \le i \le t} f(x_i)$ to derive directly an upper bound for the regret $r_t$, this is very challenging in our setting because the values $\mu^+_i$ with $1 \le i \le t$ have no monotonicity property. To overcome this, we seek to upper bound the sum of $\sum_{t=1}^{T} r_t$. We obtain $\sum_{t=1}^{T} r_t \le (C + B + 2)(\sum_{t=1}^{T} I_t + \sum_{t=0}^{T-1} \beta_t \sigma_{t}(x_{t+1}))$.

While upper bounding $\sum_{t=0}^{T-1}\sigma_{t}(x_{t+1})$ can be achieved via the maximum information gain like previous works in the noisy setting \citep{Srinivas12,Chowdhury17}, upper bounding the sum $\sum_{t=0}^{T-1} \text{max}\{0, f(x_{t+1}) - \mu_t^+\}$ is the \emph{key challenge} in our regret analysis.
We overcome this difficulty by exploiting the \emph{monotonicity of variance functions} \citep{vivarelli} which shows that $\sigma_{t} (x) \ge \sigma_{t'} (x)$ if $t \le t'$. To our knowledge, we exploit for the first time this property for Gaussian process bandit optimization problem.

We achieve an upper bound for $\sum_{t=0}^{T-1} \text{max}\{0, f(x_{t+1}) - \mu_t^+\}$ as in the following important lemma.
\begin{lemma}
Pick a $\delta \in (0,1)$. Then with probability at least $1 - \delta$ we have that
$$\sum_{t=0}^{T-1} \text{max}\{0, f(x_{t+1}) - \mu_t^+\} \le \mathcal O(\beta_T \sqrt{T\gamma_T}),$$
where $\beta_T = B + R\sqrt{2(\gamma_{T-1} + 1 + ln(1/\delta))}$.
\end{lemma}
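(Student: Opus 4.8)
The plan is to reduce the whole sum to the ``sum of posterior standard deviations'' that the maximum information gain controls, plus a residual coming from the movement of the incumbent. Throughout write $I_t=\max\{0,f(x_{t+1})-\mu_t^+\}$ and let $\hat x_t\in\mathcal D_t$ be a point attaining the incumbent, so $\mu_t^+=\mu_{t-1}(\hat x_t)$. First I would peel off a confidence term: on the $1-\delta$ event of the self-normalised GP confidence bound we have $f(x_{t+1})\le\mu_t(x_{t+1})+\beta_{t+1}\sigma_t(x_{t+1})$, and since $x_{t+1}\in\mathcal D_{t+1}$ the definition of the next incumbent gives $\mu_{t+1}^+\ge\mu_t(x_{t+1})$. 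Hence
\[
I_t\le\max\{0,\mu_{t+1}^+-\mu_t^+\}+\beta_{t+1}\sigma_t(x_{t+1}).
\]
Summing the last term and applying Cauchy--Schwarz together with the standard inequality $\sum_{t=0}^{T-1}\sigma_t^2(x_{t+1})\le C\gamma_T$ already yields $\sum_t\beta_{t+1}\sigma_t(x_{t+1})\le\beta_T\sqrt{CT\gamma_T}=\mathcal O(\beta_T\sqrt{T\gamma_T})$. Note that this decomposition never uses that $x_{t+1}$ maximises EI; the bound is structural in the incumbent and the confidence sequence.

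It then remains to bound $\sum_{t=0}^{T-1}\max\{0,\mu_{t+1}^+-\mu_t^+\}$, the \emph{positive variation} of the incumbent sequence. I would split it as $\sum_t\max\{0,\mu_{t+1}^+-\mu_t^+\}=(\mu_T^+-\mu_0^+)+\sum_t\max\{0,\mu_t^+-\mu_{t+1}^+\}$. The telescoping part is at most the oscillation of the posterior mean, which is $\mathcal O(\beta_T)$ and hence negligible against the target order. The difficulty — and the reason a naive argument fails, since $\mu_t^+$ is \emph{not} monotone — is the \emph{negative variation} $\sum_t\max\{0,\mu_t^+-\mu_{t+1}^+\}$, i.e.\ the total amount the incumbent can drop.

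The main obstacle is to control these drops, and this is exactly where the monotonicity of the variance enters. Because $\hat x_t\in\mathcal D_{t+1}$ we again have $\mu_{t+1}^+\ge\mu_t(\hat x_t)$, so each drop is bounded by a one-step change of the posterior mean at the incumbent point, $\max\{0,\mu_t^+-\mu_{t+1}^+\}\le|\mu_t(\hat x_t)-\mu_{t-1}(\hat x_t)|$. Expanding this increment through the rank-one GP update, bounding the posterior cross-covariance by $\sigma_{t-1}(\hat x_t)\,\sigma_{t-1}(x_t)$ and the innovation $|y_t-\mu_{t-1}(x_t)|$ by $\beta_t\sigma_{t-1}(x_t)+|\epsilon_t|$, gives a per-step bound of order $\sigma_{t-1}(\hat x_t)\bigl(\beta_t\sigma_{t-1}(x_t)+|\epsilon_t|\bigr)$. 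The key point is that $\hat x_t$ is an \emph{already-observed} point, so by $\sigma_t(x)\ge\sigma_{t'}(x)$ for $t\le t'$ its current standard deviation $\sigma_{t-1}(\hat x_t)$ has already been driven down and, crucially, the drops cannot accumulate the way an oscillating incumbent otherwise would.

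Finally I would sum these per-step bounds using Cauchy--Schwarz, $\sum_t\sigma_{t-1}^2(x_t)\le C\gamma_T$, and the sub-Gaussian tail bound $\sum_t\epsilon_t^2=\mathcal O(TR^2)$, which keeps the total negative variation at $\mathcal O(\beta_T\sqrt{T\gamma_T})$ (using $\gamma_T\le T$ to absorb the $\beta_T\gamma_T$ piece). Adding the three contributions gives $\sum_{t=0}^{T-1}I_t=\mathcal O(\beta_T\sqrt{T\gamma_T})$, as claimed. I expect the delicate step to be making ``the drops cannot accumulate'' quantitative — showing that persistent or repeatedly-selected incumbents contribute a variance-controlled amount rather than $\Theta(T)$ — which is precisely what the variance-monotonicity property is there to supply.
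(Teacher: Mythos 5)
Your decomposition is genuinely different from the paper's proof, and its skeleton is sound: the peeling step $I_t\le\max\{0,\mu_{t+1}^+-\mu_t^+\}+\beta_{t+1}\sigma_t(x_{t+1})$, the positive/negative-variation identity, and the reduction of each drop to $|\mu_t(\hat x_t)-\mu_{t-1}(\hat x_t)|$ are all correct. The gap is in the last summation, precisely the step you flagged as delicate. The rank-one update gives the per-step bound
\[
|\mu_t(\hat x_t)-\mu_{t-1}(\hat x_t)|\;\le\;\frac{\sigma_{t-1}(\hat x_t)\,\sigma_{t-1}(x_t)}{\lambda+\sigma_{t-1}^2(x_t)}\bigl(\beta_t\sigma_{t-1}(x_t)+|\epsilon_t|\bigr),
\]
and when you restate this as ``of order $\sigma_{t-1}(\hat x_t)(\beta_t\sigma_{t-1}(x_t)+|\epsilon_t|)$'' you discard the factor $\sigma_{t-1}(x_t)$ multiplying the innovation. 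That loss is fatal for the noise term: you are left with $\sum_t\sigma_{t-1}(\hat x_t)|\epsilon_t|$, and variance monotonicity cannot control it. Concretely, if a single point $x_1$ remains the incumbent for all $t$, then $\sigma_{t-1}(\hat x_t)=\sigma_{t-1}(x_1)\ge c>0$ for all $t$ (one noisy observation with constant $\lambda$ only drives the posterior variance down to about $\lambda/(1+\lambda)$, and later samples elsewhere need not reduce it further), so this sum is $\Theta(RT)$, which exceeds the target $\beta_T\sqrt{T\gamma_T}\approx R\gamma_T\sqrt{T}$ whenever $\gamma_T=o(\sqrt{T})$, e.g.\ for the SE kernel. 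Monotonicity fails to make ``drops cannot accumulate'' quantitative because the map from $t$ to the selection time of $\hat x_t$ is not injective: the same (non-small) selection-time variance is reused $\Theta(T)$ times.

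The repair is simply to keep the factor you had already derived and to spend the smallness elsewhere: bound $\sigma_{t-1}(\hat x_t)\le 1$ (no monotonicity needed at all) and retain $\sigma_{t-1}(x_t)$ against the innovation, giving a per-step bound of $\lambda^{-1}\bigl(\beta_t\sigma_{t-1}^2(x_t)+\sigma_{t-1}(x_t)|\epsilon_t|\bigr)$; then $\sum_t\sigma_{t-1}^2(x_t)\le C\gamma_T$, Cauchy--Schwarz with $\sum_t\epsilon_t^2=O(R^2T)$, and $\gamma_T=O(T)$ yield $O(\beta_T\gamma_T)+O(R\sqrt{T\gamma_T})=O(\beta_T\sqrt{T\gamma_T})$. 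With this correction your argument goes through, and it is a legitimately different route from the paper: the paper never touches the update formula or the noise — it restricts to the times $t_1<\dots<t_l$ at which $I_t>0$, chains the incumbent across those times by pairing $\mu_{t_{i-1}}(x_{t_{i-1}+1})$ with $\mu_{t_i}^+\ge\mu_{t_i}(x_{t_{i-1}+1})$, bounds each mean difference by passing through $f$ with the confidence bound at both times, and applies variance monotonicity at the \emph{sampled} points $x_{t_{i-1}+1}$ (not at incumbent points), so that everything collapses to $\sum_t\sigma_t(x_{t+1})$. Your (repaired) version trades that chaining argument for noise concentration and a $\lambda^{-1}$ constant, but dispenses with both the monotonicity property and the Lipschitz lemma the paper invokes for its boundary cases.
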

To prove this lemma, we need two auxiliary lemmas from the literature.

\begin{lemma}[Theorem 2 of \citet{Chowdhury17}]
Pick $\delta \in (0,1)$. We define $\beta_t = B + R\sqrt{2(\gamma_{t-1} + 1 + ln(1/\delta))}$ for every $1 \le t \le T$. Then $\mathbb{P}(\forall 1 \le t \le T, \forall x \in \mathcal X, |f(x) - \mu_{t-1}(x)| \le \beta_t \sigma_{t-1}(x) ) \ge 1 - \delta$.
\label{le_a}
\end{lemma}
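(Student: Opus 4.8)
The statement is the kernelized self-normalized confidence bound of \citet{Chowdhury17}, so the plan is to reproduce its proof. First I would pass to the feature-space (RKHS) representation: write $f(x) = \langle \theta, \phi(x)\rangle$ with $\theta = f$, $\|\theta\|_k = \|f\|_k \le B$, and $k(x,x') = \langle\phi(x),\phi(x')\rangle$. Collecting the feature vectors of the sampled points into the operator $\Phi_{t-1} = [\phi(x_1),\dots,\phi(x_{t-1})]^T$ and setting $V_{t-1} = \lambda I + \Phi_{t-1}^T\Phi_{t-1}$, the predictive mean becomes $\mu_{t-1}(x) = \langle\hat\theta_{t-1},\phi(x)\rangle$ with $\hat\theta_{t-1} = V_{t-1}^{-1}\Phi_{t-1}^T y_{1:t-1}$, and the push-through (Woodbury) identity gives the variance identity $\sigma_{t-1}^2(x) = \lambda\,\phi(x)^T V_{t-1}^{-1}\phi(x)$, i.e. $\|\phi(x)\|_{V_{t-1}^{-1}} = \sigma_{t-1}(x)/\sqrt\lambda$. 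Then $f(x)-\mu_{t-1}(x) = \langle\theta-\hat\theta_{t-1},\phi(x)\rangle$, and Cauchy--Schwarz in the $V_{t-1}$-geometry yields $|f(x)-\mu_{t-1}(x)| \le \|\theta-\hat\theta_{t-1}\|_{V_{t-1}}\,\sigma_{t-1}(x)/\sqrt\lambda$ simultaneously for every $x\in\mathcal X$.

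Second, substituting $y_{1:t-1} = \Phi_{t-1}\theta + \epsilon_{1:t-1}$ and using $\Phi_{t-1}^T\Phi_{t-1} = V_{t-1}-\lambda I$, I would split $\theta-\hat\theta_{t-1} = \lambda V_{t-1}^{-1}\theta - V_{t-1}^{-1}S_{t-1}$, where $S_{t-1} = \Phi_{t-1}^T\epsilon_{1:t-1} = \sum_{i=1}^{t-1}\epsilon_i\phi(x_i)$. The triangle inequality in $\|\cdot\|_{V_{t-1}}$ then separates a deterministic bias term from a stochastic term:
$$\|\theta-\hat\theta_{t-1}\|_{V_{t-1}} \le \lambda\|V_{t-1}^{-1}\theta\|_{V_{t-1}} + \|S_{t-1}\|_{V_{t-1}^{-1}}.$$
Because $\lambda_{\min}(V_{t-1})\ge\lambda$, the bias term is bounded by $\sqrt\lambda\,\|\theta\|\le\sqrt\lambda\,B$, which contributes the $B$ summand of $\beta_t$ once $\lambda$ is normalized. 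Everything therefore reduces to a uniform-in-$t$ high-probability bound on the self-normalized martingale norm $\|S_{t-1}\|_{V_{t-1}^{-1}}$.

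Third --- and this is the crux --- I would prove the self-normalized inequality by the method of mixtures of \citet{Abbasi11}, lifted to the RKHS as in \citet{Chowdhury17}. The $R$-sub-Gaussian assumption makes $M_t^\eta = \exp(\langle\eta,S_t\rangle - \tfrac{R^2}{2}\|\eta\|_{\Phi_t^T\Phi_t}^2)$ a supermartingale for each fixed direction $\eta$; integrating $M_t^\eta$ against a Gaussian mixing measure on the feature space (with covariance $\lambda^{-1}I$) and applying the maximal inequality to the resulting nonnegative supermartingale yields, with probability at least $1-\delta$ and for all $t\ge1$ at once,
$$\|S_{t-1}\|_{V_{t-1}^{-1}}^2 \le 2R^2\Big(\ln\tfrac{1}{\delta} + \tfrac12\ln\tfrac{\det V_{t-1}}{\det(\lambda I)}\Big).$$
The determinant ratio is converted to information gain via the identity $\ln\det(I+\lambda^{-1}K_{t-1}) = \ln\det(V_{t-1}/(\lambda I))$ (matrix determinant lemma) together with $\tfrac12\ln\det(I+\lambda^{-1}K_{t-1}) = I(y_{1:t-1};f_{1:t-1})\le\gamma_{t-1}$, giving $\|S_{t-1}\|_{V_{t-1}^{-1}}\le R\sqrt{2(\gamma_{t-1}+\ln(1/\delta))}$, and the stray normalization constant supplies the $+1$ inside $\beta_t$.

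Finally, combining the three pieces gives $|f(x)-\mu_{t-1}(x)|\le(B + R\sqrt{2(\gamma_{t-1}+1+\ln(1/\delta))})\,\sigma_{t-1}(x)=\beta_t\sigma_{t-1}(x)$ on the good event, simultaneously for all $x$ and all $t$, as claimed; no union bound over $t$ is needed since the method of mixtures is anytime-valid. The main obstacle is the third step: the mixture measure lives on the infinite-dimensional feature space, so one must justify the Gaussian integral and the supermartingale maximal inequality in that setting --- handled by reducing to the finite-dimensional subspace spanned by $\{\phi(x_i)\}$, on which $S_t$ and $V_t$ effectively act, so that the Laplace/Gaussian-integration step is genuinely finite-dimensional at each $t$.
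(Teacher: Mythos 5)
Your proposal is correct in substance, but note that the paper itself does not prove this statement at all: it is imported verbatim as Theorem~2 of \citet{Chowdhury17}, and the paper's ``proof'' is the citation. What you have written is essentially a faithful reconstruction of the original Chowdhury--Gopalan argument: the RKHS feature-space reduction, the variance identity $\sigma_{t-1}^2(x)=\lambda\,\phi(x)^T V_{t-1}^{-1}\phi(x)$ via the push-through identity, the split $\theta-\hat\theta_{t-1}=\lambda V_{t-1}^{-1}\theta - V_{t-1}^{-1}S_{t-1}$ giving the $B$ term plus a self-normalized noise term, and the method-of-mixtures bound of \citet{Abbasi11} lifted to the kernel setting, with the determinant converted to $\gamma_{t-1}$ and no union bound over $t$. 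All of these steps are sound and match the source.

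Two refinements are worth recording, since they are the only places your sketch is looser than the actual proof. First, the ``$+1$'' inside $\beta_t$ is not a stray normalization constant: in the original proof the regularizer is inflated to $\lambda = 1+\eta$ with $\eta = 2/T$, and the $+1$ is precisely the bound $\tfrac{t}{2}\ln(1+\eta)\le \tfrac{t\eta}{2}\le 1$ arising from $\ln\det\bigl((1+\eta)I+K_{t-1}\bigr) = t\ln(1+\eta)+\ln\det\bigl(I+(1+\eta)^{-1}K_{t-1}\bigr)$. This choice $\lambda = 1+2/T$ is in fact assumed elsewhere in this paper too (its Lemma bounding $\sum_t \sigma_t(x_{t+1})$ states it explicitly), so the lemma as quoted implicitly carries that convention. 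Second, your final remark about reducing the Gaussian mixture to the finite-dimensional span of $\{\phi(x_i)\}$ glosses the genuine difficulty: that span grows with $t$, so a single time-indexed family of finite-dimensional mixtures does not automatically yield one anytime-valid supermartingale; Chowdhury--Gopalan resolve this with a more involved double-mixture construction over function space (together with a stopping-time argument to handle the adaptively chosen $x_t$). Your plan would need that construction, or one of the later simplified treatments, to be complete --- but as a blind reconstruction of an imported theorem, it identifies the right decomposition, the right key inequality, and the right source of every term in $\beta_t$.
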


\begin{lemma}[Lemma 5 of \citet{Freitas12}]
When $f \in H_k(\mathcal X)$, then for every $x, y \in \mathcal X$, we have $|f(x) - f(y)| \le BL||x -y||_1$,
where $L$ is the Lipschitz constant in $H_k(\mathcal X)$.
\label{le_b}
\end{lemma}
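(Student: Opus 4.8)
The plan is to derive the Lipschitz bound directly from the reproducing property of $H_k(\mathcal X)$ together with Cauchy--Schwarz, reducing everything to the smoothness of the canonical feature map $\Psi\colon x \mapsto k(x,\cdot)$. For any $x,y \in \mathcal X$ the reproducing property gives $f(x)-f(y) = \langle f,\, k(x,\cdot)-k(y,\cdot)\rangle_k$, so by Cauchy--Schwarz and the assumed norm bound $\|f\|_k \le B$ we immediately get
$$|f(x)-f(y)| \le \|f\|_k\,\|k(x,\cdot)-k(y,\cdot)\|_k \le B\,\|k(x,\cdot)-k(y,\cdot)\|_k .$$
This isolates the single quantity that must be controlled, namely the modulus of continuity of $\Psi$ measured in the RKHS norm.

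The core step is then to show $\|k(x,\cdot)-k(y,\cdot)\|_k \le L\,\|x-y\|_1$, where $L$ is the Lipschitz constant of $\Psi$ on $H_k(\mathcal X)$ referred to in the statement. Since the kernels considered (SE, and Mat\'ern-$\nu$ with $\nu>1$) are continuously differentiable, the partial derivatives of the feature map exist as elements of $H_k$ and satisfy $\|\partial_{x_i}k(x,\cdot)\|_k^2 = \partial_{x_i}\partial_{x_i'}k(x,x')\big|_{x'=x}$, the mixed second derivative evaluated on the diagonal. Taking $L$ to bound $\|\partial_{x_i}k(x,\cdot)\|_k$ uniformly over coordinates $i$ and over $x$, I would connect $x$ to $y$ by a path $x=z^{(0)},z^{(1)},\dots,z^{(d)}=y$ that alters one coordinate at a time, apply the triangle inequality in $H_k$, and integrate $\partial_{x_i}\Psi$ along each segment, giving $\|k(x,\cdot)-k(y,\cdot)\|_k \le \sum_{i=1}^d L\,|x_i-y_i| = L\,\|x-y\|_1$. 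Chaining this with the Cauchy--Schwarz bound above yields $|f(x)-f(y)| \le BL\,\|x-y\|_1$, exactly as claimed. Note that the coordinate-wise path is precisely what produces the $\ell_1$ norm (rather than $\ell_2$) on the right-hand side.

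The main obstacle is the regularity justification underlying the second step: that one may differentiate inside the inner product, i.e.\ $\partial_{x_i}f(x)=\langle f,\partial_{x_i}k(x,\cdot)\rangle_k$ with $\partial_{x_i}k(x,\cdot)\in H_k$, and that the diagonal identity for $\|\partial_{x_i}k(x,\cdot)\|_k$ holds. This is a standard fact for RKHSs of continuously differentiable kernels and is exactly why the smoothness requirement $\nu>1$ on the Mat\'ern kernel is imposed elsewhere in the paper. Since the statement is quoted as Lemma~5 of \citet{Freitas12}, I would follow that argument and simply verify that the SE and Mat\'ern-$\nu$ ($\nu>1$) kernels used here meet its differentiability hypotheses; once this regularity is granted, all remaining manipulations are elementary.
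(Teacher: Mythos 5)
Your proposal is correct and takes essentially the same route the paper relies on: the paper's ``proof'' merely cites Lemma 5 of \citet{Freitas12} and notes that the Lipschitz constant is $\sup_{x \in \mathcal X}\, \partial_x \partial_{x'} k(x,x')\big|_{x'=x}$, and your argument (reproducing property plus Cauchy--Schwarz, then bounding $\|k(x,\cdot)-k(y,\cdot)\|_k$ via the derivative of the feature map along a coordinate-wise path to produce the $\ell_1$ norm) is precisely the standard proof behind that citation. If anything, you are more careful than the paper on the regularity hypothesis: the paper's remark suggests $\nu \ge 1/2$ suffices for the Mat\'ern kernel, whereas your condition $\nu > 1$ is what actually guarantees the mixed second derivative of $k$ exists on the diagonal.
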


\paragraph{Proof of Lemma 3} Set $S_T = \sum_{t=0}^{T-1} I_t$. There are three cases to be considered:
\paragraph{Case 1} $S_T = 0$. This happens when for every $t$: $f(x_{t+1}) - \mu_t^+ \le 0$.
\paragraph{Case 2} There exists an unique index $1 \le t'\le T$ such that $f(x_{t'+1}) - \mu_t'^+ > 0$. It follows that $S_T = f(x_{t'+1}) - \mu_t'^+$. In this case, we have that
\begin{eqnarray*}
S_T  & = & f(x_{t'+1}) - \mu_t'^+ \\
& \le & f(x_{t'+1}) - (f(x') - \beta_{t'+1}\sigma_{t'}(x')) \\
& \le & f(x_{t'+1}) - f(x') + \beta_{t'+1}\sigma_{t'}(x') \\
& \le &  BL ||x_{t' + 1} - x'||_1 + \beta_{t'+1}\\
& = & \mathcal O(\beta_T),
\end{eqnarray*}
where in the last inequality, we use Lemma \ref{le_b}, the inequality $\beta_{t'+1} \le \beta_T$, and the fact that $\sigma_t'(x) \le 1$. Finally, because the domain $\mathcal X$ is bounded, $||x_{t' + 1} - x'||_1$ is bounded.

\paragraph{Case 3} There are $0 \le t_1 < t_2, ..., < t_l \le T-1$ where $l \ge 2$ such that $f(x_{t_i +1}) \ge \mu_{t_i}^+$. Thus, we have
\begin{eqnarray*}
\sum_{t=0}^{T-1} I_t & = & \sum_{t=1}^{T} \text{max}\{0, f(x_{t+1}) - \mu_t^+\} \\
& = & \sum_{i = 1}^{l} (f(x_{t_i +1}) - \mu_{t_i}^+) \\
& \le & \sum_{i = 1}^{l} (\beta_{t_i+1} \sigma_{t_i}(x_{t_i +1}) + \mu_{t_i}(x_{t_i +1}) - \mu_{t_i}^+) \\
& \le & \underbrace{\sum_{i = 1}^{l} \beta_{t_i+1} \sigma_{t_i}(x_{t_i +1})}_{\text{Term 5}} +  \underbrace{\sum_{i = 1}^{l}(\mu_{t_i}(x_{t_i +1}) - \mu_{t_i}^+)}_{\text{Term 6}}
\end{eqnarray*}
\paragraph{Bound Term 5}
\begin{eqnarray*}
\sum_{i = 1}^{l} \beta_{t_i+1} \sigma_{t_i}(x_{t_i +1}) & \le & \sum_{t=0}^{T-1} \beta_{t+1} \sigma_{t}(x_{t+1}) \\
& \le &  \beta_T \sum_{t=0}^{T-1} \sigma_{t}(x_{t+1})
\end{eqnarray*}
\paragraph{Bound Term 6} Set $M_1 = \sum_{i = 1}^{l}(\mu_{t_i}(x_{t_i +1}) - \mu_{t_i}^+)$.
\begin{eqnarray*}
& M_1 & =\mu_{t_l}(x_{t_l +1}) - \mu_{t_1}^+  + \sum_{i=1}^{l-1} (\mu_{t_{i-1}}(x_{t_{i-1} +1}) - \mu_{t_i}^+))\\
&\le & \underbrace{\mu_{t_l}(x_{t_l}) - \mu_{t_1}^+}_{\text{Term 7}}  + \underbrace{\sum_{i=1}^{l-1} (\mu_{t_{i-1}}(x_{t_{i-1} +1}) - \mu_{t_i}(x_{t_{i-1} +1})))}_{\text{Term 8}}\\
\end{eqnarray*}
\paragraph{Bound Term 7} Set $M_2 = \mu_{t_l}(x_{t_l+1}) - \mu_{t_1}^+ $. We have
\begin{eqnarray*}
M_2 & \le & f(x_{t_l + 1}) + \beta_{t_l+1}\sigma_{t_l}(x_{t_l+1}) - (f(x_{t_1} - \beta_{t_1}\sigma_{t_1}(x_{t_1}))) \\
& \le &  f(x_{t_l + 1}) - f(x_{t_1}) + \beta_{t_l+1}\sigma_{t_l}(x_{t_l+1})+ \beta_{t_1 +1}\sigma_{t_1}(x_{t_1})) \\
& \le & f(x_{t_l + 1}) - f(x_{t_1}) + \beta_{t_l +1}  + \beta_{t_1}  \\
& \le & BL||x_{t_l + 1} - x_{t_1}||_1 + 2\beta_T  \\
& \le & \mathcal O(\beta_T)
\end{eqnarray*}
The argument to achieve the bound for Term 7 is similar to Case 2.
\paragraph{Bound Term 8}
Set $M_3 = \sum_{i=1}^{l-1} (\mu_{t_{i-1}}(x_{t_{i-1} +1}) - \mu_{t_i}(x_{t_{i-1} +1})))$ for simplicity. We go to bound $M$.
\begin{eqnarray*}
& M_3 &  \le   \sum_{i=1}^{l-1} (f(x_{t_{i-1} +1}) + \beta_{t_{i-1}+1}\sigma_{t_{i-1}}(x_{t_{i-1} +1}))  \\
&  & - (f(x_{t_{i-1} +1})- \beta_{t_{i}+1}\sigma_{t_{i}}(x_{t_{i-1} +1})) \\
& = & \sum_{i=1}^{l-1} \beta_{t_{i-1}+1}\sigma_{t_{i-1}}(x_{t_{i-1} +1}) + \beta_{t_{i}+1}\sigma_{t_{i}}(x_{t_{i-1} +1}) \\
& \le & \sum_{i=1}^{l-1}(\beta_{t_{i-1}+1} + \beta_{t_{i}+1})\sigma_{t_{i-1}}(x_{t_{i-1} +1})\\
& \le & 2\beta_T \sum_{i=1}^{l-1}\sigma_{t_{i-1}}(x_{t_{i-1} +1})\\
& \le & 2\beta_T \sum_{i=0}^{T-1} \sigma_{i}(x_{i+1}),
\end{eqnarray*}
where in the first inequality, we use Lemma \ref{le_a}: $\mu_{t_{i-1}}(x_{t_{i-1} +1}) \le f(x_{t_{i-1} +1}) + \beta_{t_{i-1}}\sigma_{t_{i-1}}(x_{t_{i-1} +1})$; $\mu_{t_i}(x_{t_{i-1} +1}) \ge f(x_{t_{i} +1}) - \beta_{t_{i}}\sigma_{t_{i}}(x_{t_{i-1} +1})$. In the second inequality, we use the fact that $f(x_{t_{i-1} +1}) \le f(x_{t_{i} +1})$. In the third inequality, we use the \emph{decreasing monotonicity of variance functions} (\citep{vivarelli} and \citep{Chowdhury17}, see Section F). Here, we use $$\sigma_{t_{i}}(x_{t_{i-1} +1}) \le \sigma_{t_{i-1}}(x_{t_{i-1} +1}),$$ because $T-1 \ge t_i > t_{i-1} \ge 0$ due to the definition of $t_i$ and $t_{i-1}$.
This step is crucial to bound $M_3$. Without this step, $M_3$ may be bounded by two sums:  $\sum^{l-1}_{i=1}  \sigma_{t_{i-1}}(x_{t_{i-1} +1}))$ and $ \sum^{l-1}_{i=1} \sigma_{t_{i}}(x_{t_{i-1} +1}))$. While the first term can be bounded in terms of the information gain, bounding the second is challenging, and was what led to an error in Lemma 7 of \cite{nguyen17a}.

For every $x_i$, where $ 1 \le i \le T-1$, Lemma \ref{le_a} holds with probability $1 -\delta$. Therefore, Lemma \ref{le_a} holds with probability at least $1 -\delta$ for all $x_i$, where $ 1 \le i \le T-1$.
Combining Term 5, Term 7, Term 8, with probability $1 -T\delta$ we have
$$\sum_{t=0}^{T-1} \text{max}\{0, f(x_{t+1}) - \mu_t^+\} \le  \mathcal O(\beta_T \sum_{i=1}^{T-1} \sigma_{i}(x_{i+1})).$$

On the other hand, following Lemma 4 of \cite{Chowdhury17}, we have $\sum_{i=1}^{T-1} \sigma_{i}(x_{i+1}) \le  \sqrt{4(T+2)\gamma_T}$.  Thus,
$$\sum_{t=0}^{T-1} \text{max}\{0, f(x_{t+1}) - \mu_t^+\} = \mathcal O(\beta_T \sqrt{T\gamma_T}).$$

Thus, for all cases, Lemma 3 holds.
\subsection{Proof Sketch for Theorem 2}
Theorem \ref{theorem3} holds using a non-trivial combination of the proof techniques as above and the technical results for $\pi$-GP-UCB \cite{janz20a}. A complete proof of Theorem \ref{theorem3} is provided in Appendix C of the Supplementary Material.

\section{Related works}
\begin{figure*}[h]
\centering
\subfigure{\includegraphics[scale=1.0,width=.225\textwidth, height=.11\textheight]{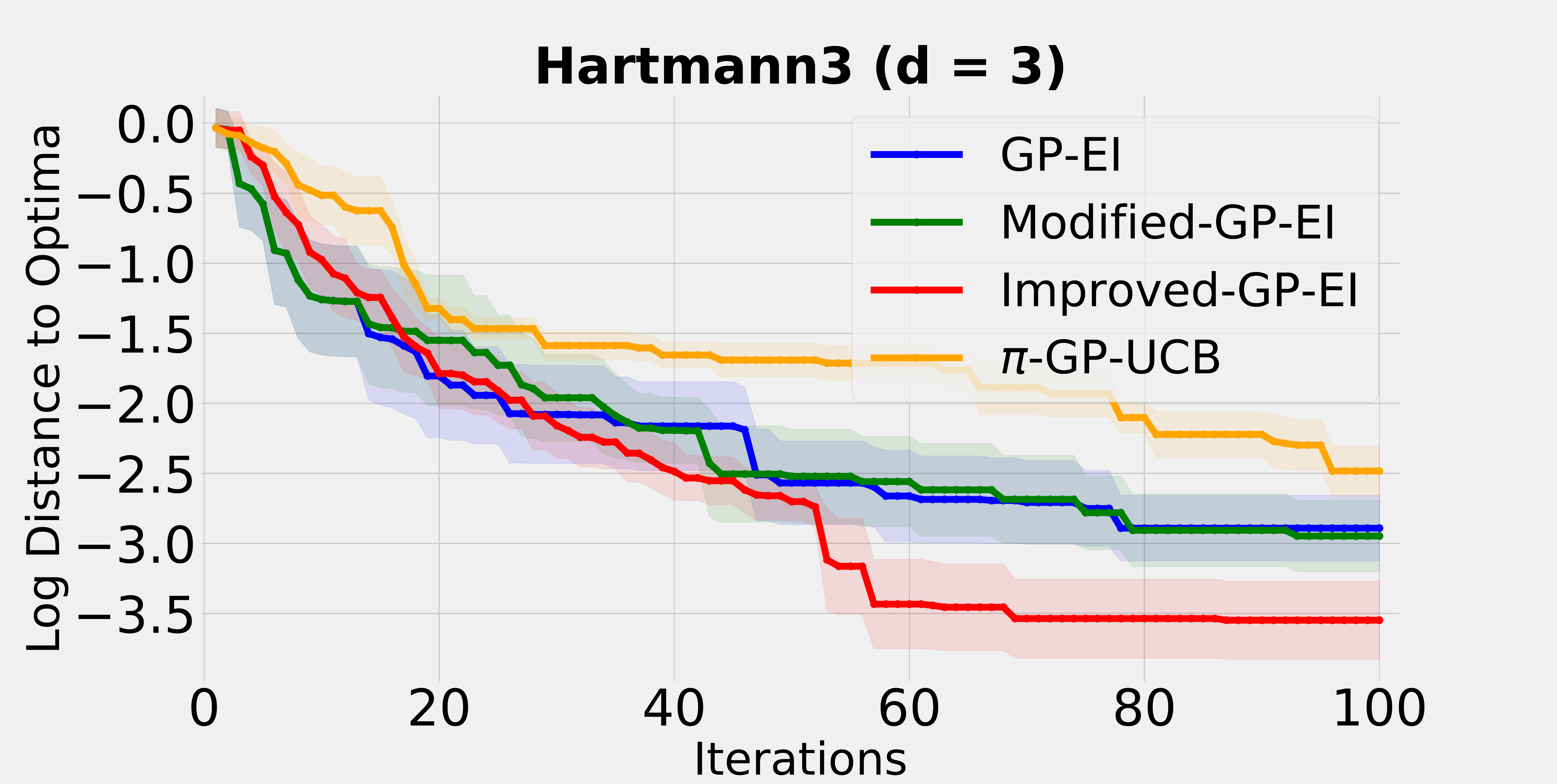}
}\hfill
\subfigure{\includegraphics[scale=1.0,width=.225\textwidth, height=.11\textheight]{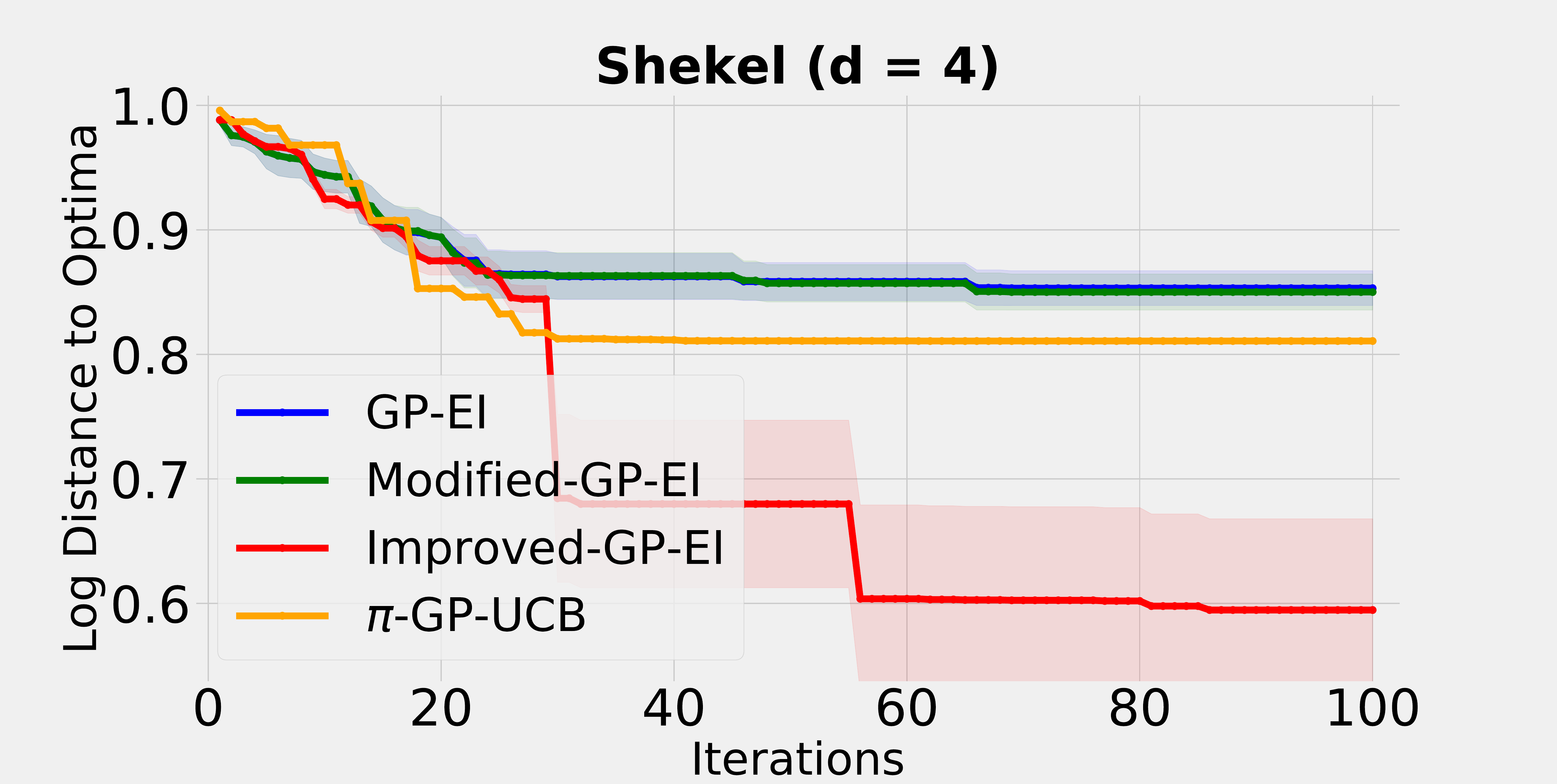}
}\hfill
\subfigure{\includegraphics[scale=1.0,width=.225\textwidth, height=.11\textheight]{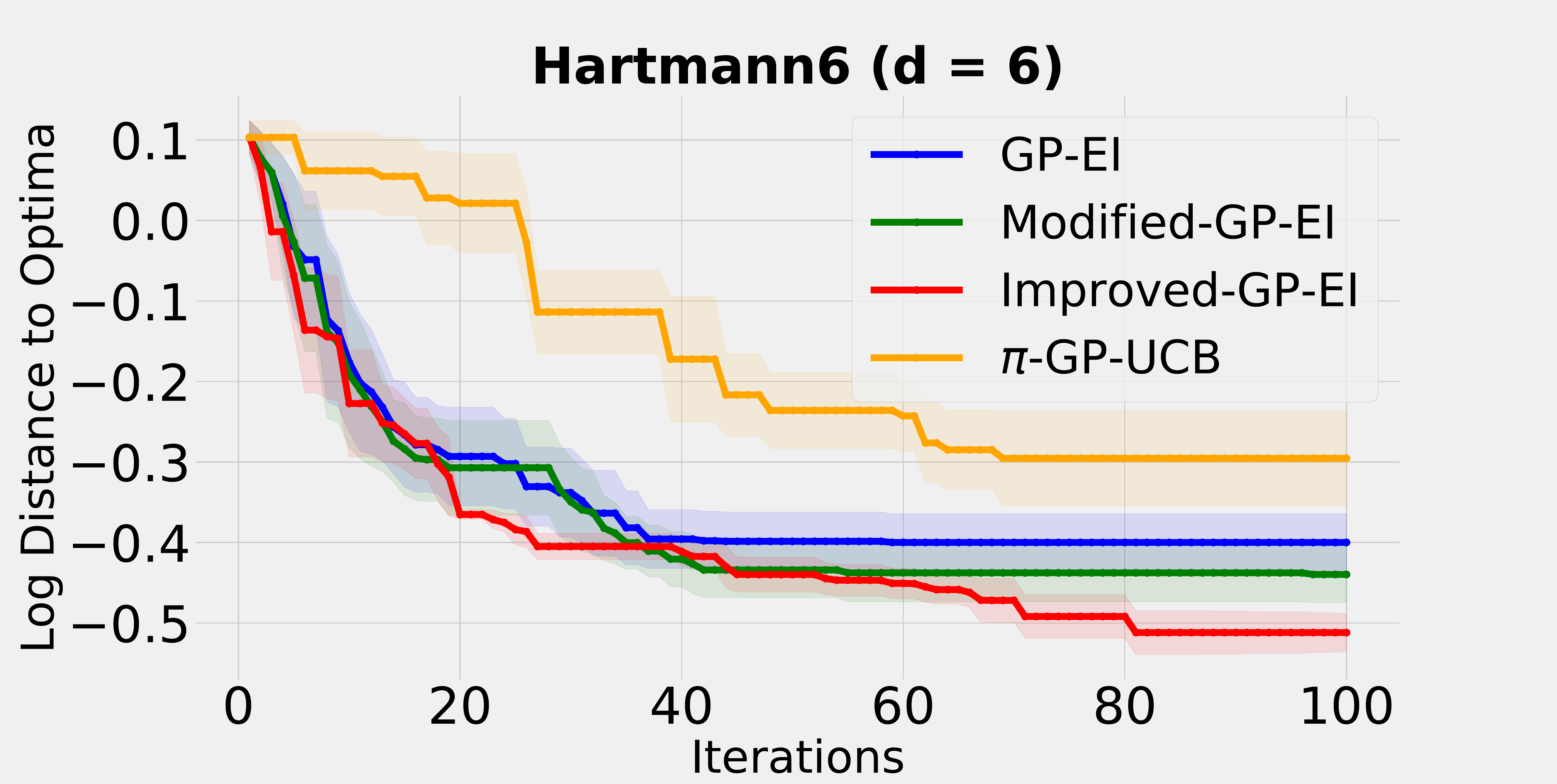}
}\hfill
\subfigure{\includegraphics[scale=1.0,width=.225\textwidth, height=.11\textheight]{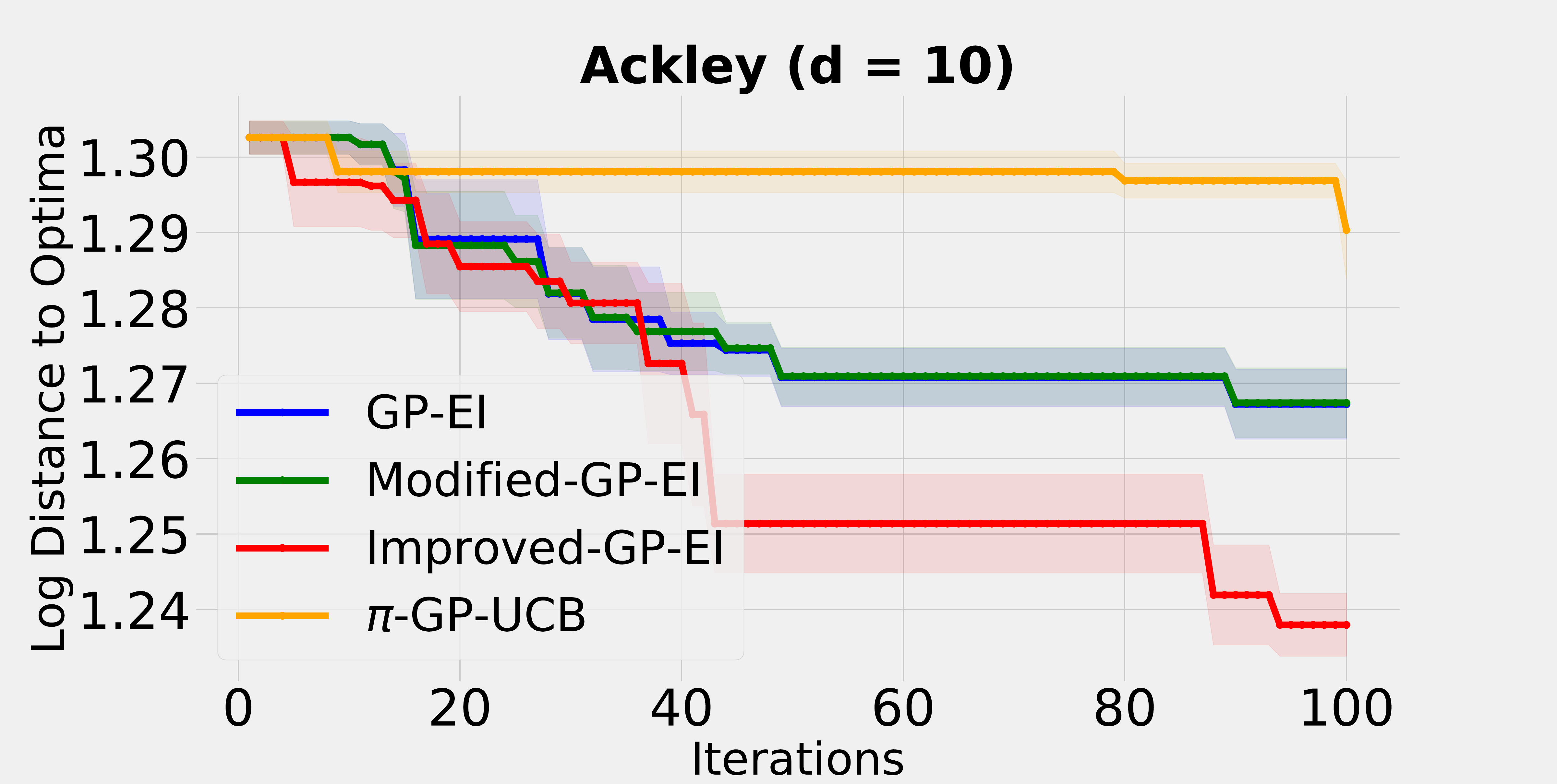}
}
\caption{Comparison of methods for Hartmann3, Shekel, Hartmann6 and Ackley functions.}
\label{fig1}
\end{figure*}
A recent review of GP-EI can be found in \citep{ZhanX20}.

In the noise-free setting, the most notable work in this sub-line is the one of \citet{Bull11} which showed that GP-EI can obtain an $\mathcal O(T^{-1/d})$ upper bound on the simple regret.

In noisy setting, \citet{Srinivas12} first introduced GP-UCB for both Bayesian and non-Bayesian settings. \citet{Valko13b} introduced KernelUCB for the case of a finite-armed bandit which can be extended to a continuum-armed bandit via a discretization argument. Another notable work for GP-UCB is Improved GP-UCB  \citep{Chowdhury17}, which offers a slightly faster convergence rate. A limitation of these algorithms is that the proposed regrets are sublinear only if $2\nu > d(d+1)$. To address this limitation, \citep{janz20a} recently introduced $\pi$-GP-UCB which is built upon Improved GP-UCB with guarantees that $\pi$ GP-UCB has a sublinear regret for every $\nu > 1$ and $d \ge 1$. GP-TS is another approach for Gaussian process bandit optimization. This was introduced by \citet{Chowdhury17} by extending the Thompson sampling algorithm in finite-armed bandits to continuum-armed bandits. \citet{scarlett17a} provided lower bounds for Gaussian process bandit optimization for both simple and cumulative regret. Recently \citet{vivarelli} provided new upper bounds for GP-UCB and GP-TS as well as the new bounds for the maximum information gain $\gamma_t$.

A limitation of GP-UCB and GP-TS algorithms is that the explorations are maintained via an upper confidence bound which requires to know several parameters e.g. the bound on the function RKHS norm, sub-Gaussianity level of the measurement noise. These parameters are usually unknown in practice. Consequently, these parameters are often set in a heuristic manner \citep{Berkenkamp17,Bogunovic18,janz20a,Wachi20}. Our GP-EI based algorithms avoid this limitation.

EI has also been studied in finite-armed bandit setting. \citet{Ryzhov16} studied EI for the problem of best-arm identification. Later, \citet{Qin17} proposed an improvement of this algorithm on computational efficiency. However, these results and analysis techniques do not apply to our settings of Gaussian processes and the RKHS norm.
\section{Experiments}
While the main focus of this paper is performing theoretical analysis for GP-EI in noiseless and noisy settings, we have also proposed a new algorithm termed as ``Improved-GP-EI" and also provided a variant of GP-EI which we will call Modified-GP-EI in noisy setting. In this section we have performed a comparison of our Improved-GP-EI as well as Modified-GP-EI against GP-EI with fixed $\omega$ and $\pi$-GP-UCB proposed by \citet{janz20a}, which has the tightest regret so far in the noisy setting. To compare the sample-efficiency of all the algorithms, we used four functions in noisy setting: Hartmann3 ($d =3$), Shekel ($d =4$), Hartmann6 ($d =6$) and Ackley ($d =10$). The evaluation metric is the log distance to the true optimum: $\text{log}_{10}(f(x^*) - f(x^+_t))$.

All implementations are in Python 3.6. For each test function, we repeat the experiments 15 times. We plot the mean and a confidence bound of one standard deviation across all the runs. We used Mat\'ern kernel with $\nu =2.5$ and the length scale $l =0.2$. Set $T = 100$, and $\delta = 0.05$. $\omega_T$ of Modified-GP-EI and Improved-GP-EI is set following Theorem \ref{theorem2} and Theorem \ref{theorem3}. For GP-EI, we use $\omega =1$ by default.

As seen from Figure \ref{fig1}, the Modified-GP-EI performs similar to GP-EI. This is expected as both methods only differ by $\omega_T$ , which is used for theoretical guarantee. $\pi$-GP-UCB is the most inefficient (except Shekel), probably because it requires to know the RKHS norm and sub-Gaussianity parameters which are unknown in practice. Following \citep{janz20a}, we used $B =1$, $R =1$. In contrast, Improved-GP-EI does not need to know such hyper-parameters and clearly outperforms $\pi$-GP-UCB.

We note that our Improved-GP-EI results in a significantly more scalable algorithm. The cover construction (i.e. partitioning the space) of Improved-GP-EI permits it to perform the inverse of the kernel $K_t$ on a subset of the data to reduce the computations from $O(t^3)$ to $O(\sum_{i=1}^{p}t^{3}_{i})$ where $t=\sum_{i=1}^{p}t_i$ and $p$ is the number of hypercubes in the partition. E.g. for Hartmann3, the average runtime per iteration of Improved-GP-EI is 1.21 mins compared to GP-EI's 1.37 min. This difference gets larger for larger horizons.
\subsection{Synthetic Test Functions} In this part, we benchmark on synthetic functions in RKHS spaces. The first function is built in the RKHS space equipped with a Mat\'ern kernel with $\nu = 2.5$ and $l = 0.2$. The second function is built in the RKHS space equipped with a SE kernel with $l = 1$.
We construct each function $f$ in a five-dimensional space by sampling 500 points $\hat{x}_1, .., \hat{x}_m$, uniformly on $[0,1]^5$, and $\hat{a}_1, .., \hat{a}_m$ each independent uniform on $[-1, 1]$ and defining $f(x) = \sum_{i=1}^{m} \hat{a}_i k(\hat{x}_i, x)$ for all $x \in \mathcal X$ and $k$ is a kernel. The RKHS norm of this function is computed as $||f||^2_k = \sum_{i,j=1}^{\infty} \hat{a}_i\hat{a}_jk(\hat{x}_j, \hat{x}_i)$. This norm is unknown for all the algorithms. Since the baseline $\pi$-GP-UCB requires to know this norm and the sub-Gaussianity parameter $R$, we set $B = 1$ and $R =1$ (in a heuristic manner) for $\pi$-GP-UCB. Otherwise, our proposed Improved-GP-EI algorithm  as well as Modified-GP-EI and GP-EI do not require to know these parameters. We used $\lambda = 0.01$ for the noise setting.
\begin{figure}[h]
\vspace{-5pt}
\centering
\subfigure{\includegraphics[scale=1.0,width=.225\textwidth,height=.11\textheight]{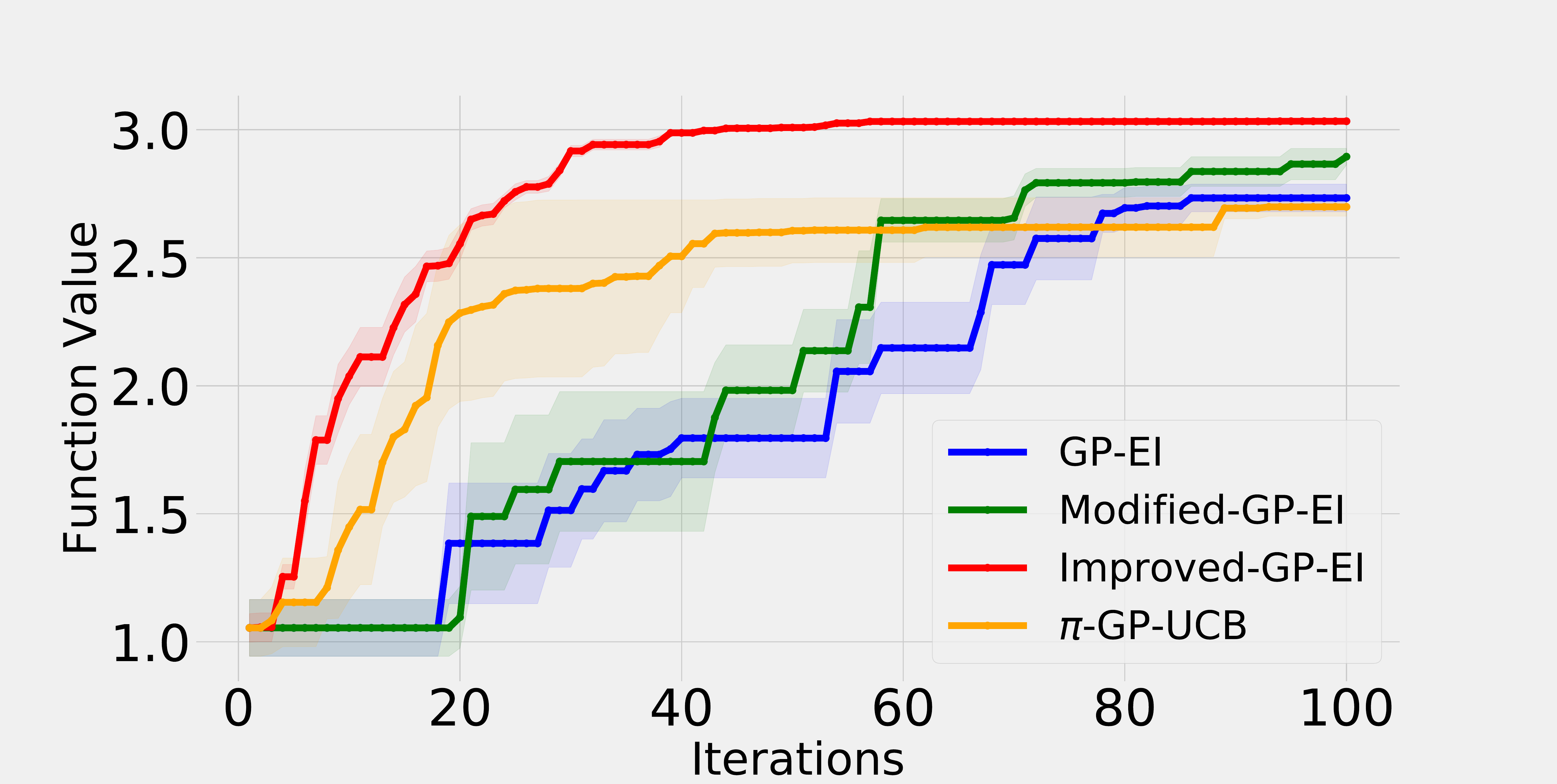}
}\hfill
\subfigure{\includegraphics[scale=1.0,width=.225\textwidth,height=.11\textheight]{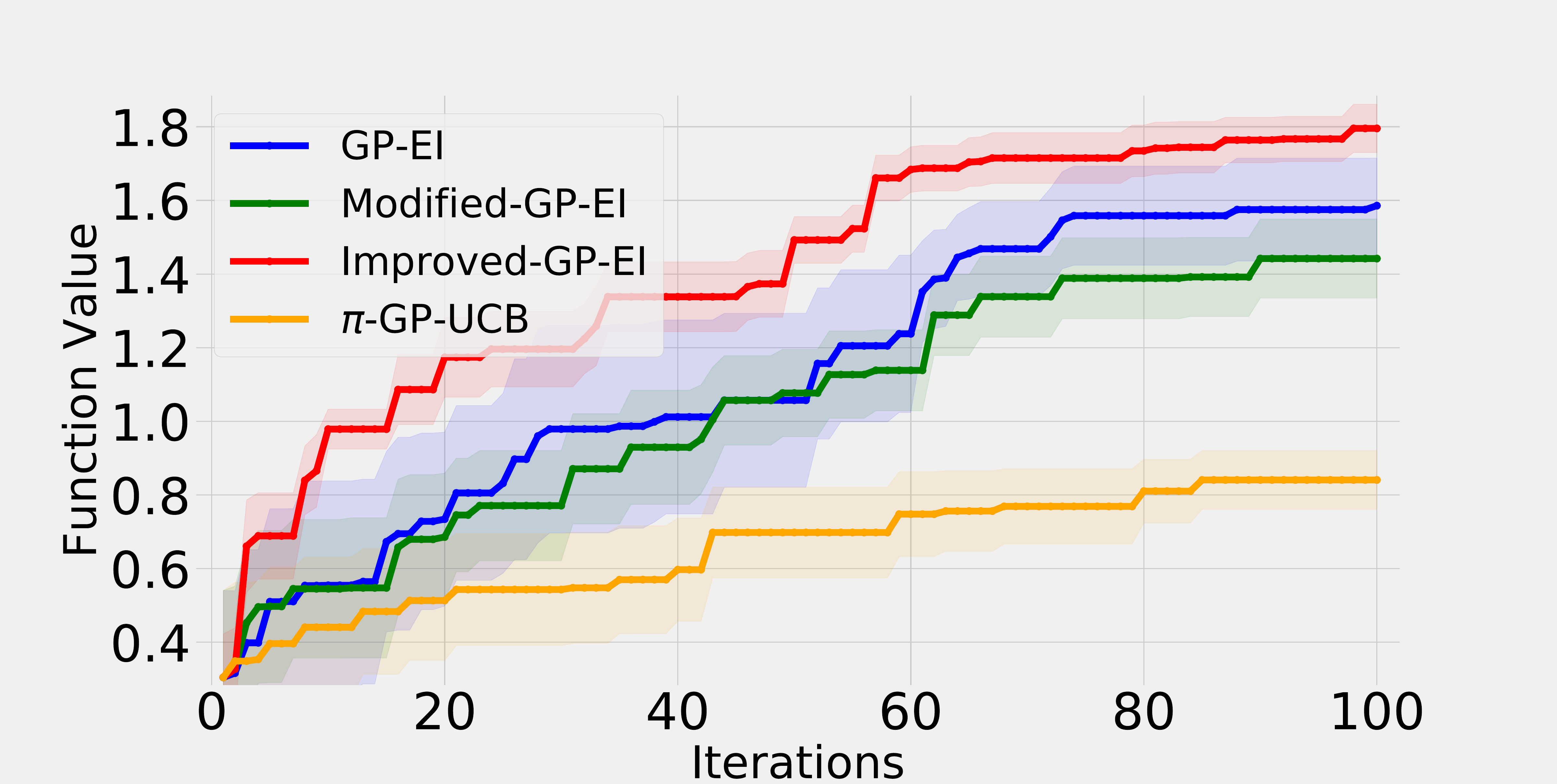}
}
\caption{Comparison of methods for the functions generated from the RKHS space. The left corresponds to the Mat\'ern kernel ($\nu = 2.5$, $l = 0.2$). The right corresponds to the Squared Exponential (SE) kernel ($l = 1.0$). The SE kernel is considered as a special case of the Mat\'ern kernel. We estimate the algorithms based on the function values at each iteration.}
\label{fig2}
\vspace{-10pt}
\end{figure}
As seen from Figure \ref{fig2}, for both the Mat\'ern kernel and the SE kernel, Improved-GP-EI outperforms all the other algorithms while
Modified-GP-EI and GP-EI show competitive performance. Improved-GP-EI performs better than Modified-GP-EI because it avoids using a large global scale variance parameter $\omega_T$. Improved-GP-EI uses $\omega_T$ growing only poly-logarithmically with
$T$. $\pi$-GP-UCB is outperformed by the EI based algorithms which are parameter-free. This is also observed in \citep{Chowdhury17}. We agree with \citet{Chowdhury17} that the UCB-based algorithms are somewhat less robust on the choice of kernel than EI-based algorithms.
\section{Conclusion}
We have demonstrated that GP-EI can converge with standard incumbent defined as the current best value of the GP predictive mean. Further we have proposed a variant of GP-EI, called Improved GP-EI which converges for every $\nu > 1$ and every $d \ge 1$. The empirical results have demonstrated the effectiveness of our proposed Improved GP-EI.

\subsubsection*{Acknowledgements}
This research was partially funded by the Australian Government through the Australian Research Council (ARC). Prof Venkatesh is the recipient of an ARC Australian Laureate Fellowship (FL170100006).

\bibliographystyle{plainnat}
\bibliography{Hung-research}


\clearpage
\appendix

\thispagestyle{empty}

\onecolumn \makesupplementtitle

Before providing the theoretical results and the additional experiments, we summarize some important notations used in our proofs in Table 2 and in section A we first explain the derivation of the EI acquisition function in Section 2.3. Next, in Section B and C we provide the proof for Theorem 1 and Theorem 2 in the main paper, respectively.
\begin{table}[H]
\centering
\caption{A summary of the notations used in the paper.}
\scalebox{0.9}{
\begin{tabular}[t]{|l|l|}\hline
\textbf{Common}  & \textbf{Definition} \\ \hline \hline
$d$ & the number of dimensions of the search space $\mathcal X$ \\
$x^*$ & $\text{argmax}_{x \in \mathcal X} f(x)$, an optimal  \\
$B$ & the upper bound of the RKHS norm \\
$R$ & the sub-Gaussianity parameter of the noise \\
$\nu$ & the parameter controlling the smoothness of the function\\
$l$ & the lengthscale of the kernel \\
$\alpha_t^{EI}(x)$ & the EI acquisition function at iteration $t$ \\
$\xi$ & the incumbent of EI \\
$\Phi(z)$ & the standard normal distribution function \\
$\phi(z)$ & the density function\\
$\tau(z)$ & $z\Phi(z) + \phi(z)$ which is a increasing function \\ \hline
\textbf{Section B}  &  \\ \hline \hline
$\omega_t$ &  the replacement of $\omega$ in the noisy setting\\
$x_t^+$ &  $x^+_t = \text{argmax} \{\mu_{t-1}(x_i)\}_{x_i \in \mathcal D_{t-1}}$ \\
$\mu_t^+$ & the best GP predictive mean, $\mu^+_t = \text{max} \{\mu_{t-1}(x_i)\}_{x_i \in \mathcal D_t}$ \\
$\gamma_t$ & the maximum information gain up to $t$ iterations \\ \hline
\textbf{Section C}  &  \\ \hline \hline
$b$ & the constant defined as $b = \frac{d+1}{d + 2\nu}$ \\
$q$ & the constant defined as $q = \frac{d(d+1)}{d(d+2) + 2\nu}$ \\
$\rho_A$  & the diameter of the hypercube $A$ \\
$\mathcal D_t^A$ & the subset of $\mathcal D_t$ in $A$ \\
$K_t^A$ & the kernel matrix, defined as $K_t^A = [k(x, x')]_{x, x' \in \mathcal D_t^A}$ \\
$\gamma_t^A$ & the information gain for $A$, defined as $\gamma_t^A = \frac{1}{2}\text{log}|I + \lambda^{-1}K_t^A|$ (See \citep{janz20a})\\
$\mathcal A_t$ & the set of the newly created hypercubes at iteration $t$ and the hypercubes of $\mathcal A_{t-1}$ that were not split \\ \hline
\end{tabular}}
\label{Table_Time}
\end{table}%
\section{Derivation of the Expected Improvement in Section 2.3.}
Set $I_t(x) = \text{max}\{0, f(x) - \mu^+_t\}$. $I_t(x)$ is positive when the prediction is higher than the best value known
thus far. Otherwise, $I_t(x)$ is set to zero. The new query point is found by
maximizing the expected improvement:
$$x = \text{argmax}_{x} \mathbb{E}(I_t(x)).$$
As defined in Section 3.2, for every $x \in \mathcal X$, the posterior distribution of $f(x)$, at iteration $t$, is $\mathcal N(\mu_{t-1}(x), \omega^2\sigma^2_{t-1}(x))$. Therefore, the likelihood of improvement $I_t$ on a normal posterior distribution characterized
by $\mu_{t-1}(x), \omega\sigma_{t-1}(x)$ can be computed from the normal density function:
$$\frac{1}{\sqrt{2\pi}\omega\sigma_t(x)}\text{exp}(-\frac{(\mu_t(x) - f(x)- I_t(x))^2}{2\omega^2\sigma^2_t(x)}).$$
The expected improvement is the integral over this function:
\begin{eqnarray*}
\mathbb{E}(I_t) & = & \int_{I_t = 0}^{I_t = \infty} I_t\frac{1}{\sqrt{2\pi}\omega\sigma_t(x)}\text{exp}(-\frac{(\mu_{t-1}(x) - f(x)- I_t(x))^2}{2\omega^2\sigma^2_t(x)})\;\mathrm{d}t \\
& = & \omega\sigma_t(x)[\frac{\mu_{t-1}(x) - \mu^+_t}{\omega\sigma_{t-1}(x)} \Phi(\frac{\mu_{t-1}(x) - \mu^+_t}{\omega\sigma_{t-1}(x)}) + \phi(\frac{\mu_{t-1}(x) - \mu^+_t}{\omega\sigma_{t-1}(x)})] \\
& = & (\mu_{t-1}(x) - \mu^+_t)\Phi(\frac{\mu_{t-1}(x) - \mu^+_t}{\omega\sigma_{t-1}(x)}) + \omega\sigma_{t-1}(x)\phi(\frac{\mu_{t-1}(x) - \mu^+_t}{\omega\sigma_{t-1}(x)})
\end{eqnarray*}
Setting $u = \mu_{t-1}(x) - \mu^+_t$ and $v = \omega\sigma_{t-1}(x)$, and $\rho(u, v) = u\Phi(\frac{u}{v}) + v\phi(\frac{u}{v})$, we obtain the formula of EI as : $$\alpha_t^{EI}(x) = \rho(u,v) = \rho(\mu_{t-1}(x) - \mu^+_t,\omega\sigma_{t-1}(x)).$$
\section{Proof of Theorem 1}
Instead of upper bounding directly the simple regret, we will seek to upper bound the sum $\sum_{t=1}^{T} r_t$ to exploit the results from the maximum information gain ($\gamma_T$) in the noisy setting. The proof for Theorem 1 involves two steps.
\begin{itemize}
  \item Upper bounding the instantaneous regret $r_t = f(x^*) - f(x_t^+)$.
  \item Upper bounding the sum $\sum_{t=1}^{T} r_t$
\end{itemize}
\subsection{Upper bounding the instantaneous regret $r_t = f(x^*) - f(x_t^+)$:} To obtain a bound on $r_t$, we break down $r_t$ into two terms as follows:
\begin{eqnarray*}
r_t & = & f(x^*) - f(x_t^+) \\
& = &  \underbrace{f(x^*) - \mu^+_{t}}_{\text{Term 1}} + \underbrace{\mu_{t}^+ - f(x_t^+)}_{\text{Term 2}}
\end{eqnarray*}
\paragraph{Upper Bounding Term 1.}
First, we provide the lower bound and upper bound for the acquisition function $\alpha^{EI}_t(x$ in the noisy setting. These bounds are similar to those in the noiseless setting (See Lemma 4).
\begin{lemma}[Based on Lemma 9 of \citep{wang2014}]
Pick $\delta \in (0,1)$. For $x \in \mathcal X$, $t \in \mathbb{N}$, set $I_t(x) = \text{max}\{0, f(x)- \mu^+_t\}$. Then with  probability at least $1 -\delta$ we have
$$I_t(x) - \beta_t\sigma_{t-1}(x) \le \alpha^{EI}_t(x) \le I_t(x) + (\beta_t + \omega_t)\sigma_{t-1}(x).$$ \label{lem2.1}
\end{lemma}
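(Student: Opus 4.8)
The plan is to reduce the two-sided bound on $\alpha^{EI}_t(x)=\rho(\mu_{t-1}(x)-\mu^+_t,\,\omega_t\sigma_{t-1}(x))$ to elementary monotonicity and sandwich properties of the scalar function $\tau(z)=z\Phi(z)+\phi(z)$, and then to push the confidence band of Lemma \ref{le_a} through $\rho$. The starting observation is that for $v>0$ one has $\rho(u,v)=v\,\tau(u/v)$, since $v\tau(u/v)=u\Phi(u/v)+v\phi(u/v)$. Hence everything rests on two facts. First, $\rho$ is nondecreasing in its first argument, because a short computation using $\phi'(z)=-z\phi(z)$ gives $\partial_u\rho(u,v)=\Phi(u/v)\ge 0$. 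Second, $\tau$ is sandwiched as $\max\{0,z\}\le \tau(z)\le \max\{0,z\}+\phi(0)$ for all $z$, equivalently $\max\{0,u\}\le\rho(u,v)\le\max\{0,u\}+v\,\phi(0)$. I would prove the left inequality via the Mills-ratio estimate $z\,\Phi(-z)\le\phi(z)$ for $z>0$, which yields $\tau(z)-z=\phi(z)-z\Phi(-z)\ge 0$, together with the trivial $\tau(z)>0$; the right inequality follows from $\tau(z)-\max\{0,z\}\le\phi(z)\le\phi(0)=1/\sqrt{2\pi}$, checked separately for $z\ge 0$ and $z<0$.

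Next I would invoke Lemma \ref{le_a}: with probability at least $1-\delta$, simultaneously for all $x$ and $t$, $f(x)-\beta_t\sigma_{t-1}(x)\le\mu_{t-1}(x)\le f(x)+\beta_t\sigma_{t-1}(x)$. Writing $\Delta=\mu_{t-1}(x)-\mu^+_t$ and $s=\sigma_{t-1}(x)$, this sandwiches $f(x)-\mu^+_t-\beta_t s\le\Delta\le f(x)-\mu^+_t+\beta_t s$. For the upper bound on $\alpha^{EI}_t$, monotonicity of $\rho$ in $u$ lets me replace $\Delta$ by its upper value, and the right sandwich inequality then gives $\alpha^{EI}_t(x)\le\max\{0,\,f(x)-\mu^+_t+\beta_t s\}+\omega_t s\,\phi(0)$. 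Since $\beta_t s\ge 0$, subadditivity of $t\mapsto\max\{0,t\}$ yields $\max\{0,\,f(x)-\mu^+_t+\beta_t s\}\le I_t(x)+\beta_t s$, and bounding $\phi(0)=1/\sqrt{2\pi}\le 1$ produces $\alpha^{EI}_t(x)\le I_t(x)+(\beta_t+\omega_t)s$, as required.

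For the lower bound I would use the left sandwich inequality $\rho(u,v)\ge\max\{0,u\}$ together with monotonicity in $u$ and the lower value of $\Delta$, obtaining $\alpha^{EI}_t(x)\ge\max\{0,\,f(x)-\mu^+_t-\beta_t s\}$. The elementary inequality $\max\{0,a-b\}\ge\max\{0,a\}-b$ for $b\ge 0$ (verify the cases $a\ge 0$ and $a<0$) then gives $\alpha^{EI}_t(x)\ge I_t(x)-\beta_t s$. Finally I would dispatch the degenerate case $s=\sigma_{t-1}(x)=0$ separately: there $\rho$ is defined as $\max\{0,\Delta\}$, and Lemma \ref{le_a} forces $\mu_{t-1}(x)=f(x)$, so $\alpha^{EI}_t(x)=I_t(x)$ and both inequalities hold with all error terms equal to zero.

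I expect the only genuinely delicate step to be the sandwich bound on $\tau$, and within it the lower estimate $\tau(z)\ge\max\{0,z\}$, which hinges on the Mills-ratio inequality $z(1-\Phi(z))\le\phi(z)$ for $z>0$; the remainder is monotonicity bookkeeping and elementary manipulation of $\max\{0,\cdot\}$. Since this lemma is stated as an adaptation of Lemma 9 of \citep{wang2014}, I would cite that source for the scalar bounds on $\rho$ and concentrate the argument on correctly propagating the confidence band of Lemma \ref{le_a} through the monotone map $\rho$.
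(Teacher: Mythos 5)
Your proposal is correct and follows essentially the same route as the paper's proof: handle $\sigma_{t-1}(x)=0$ separately, write the EI as $\omega_t\sigma_{t-1}(x)\,\tau(\cdot)$ for $\sigma_{t-1}(x)>0$, push the confidence band of Lemma \ref{le_a} through via monotonicity of $\tau$, and finish with elementary bounds on $\tau$. The only cosmetic difference is that you use the Mills-ratio sandwich $\max\{0,z\}\le\tau(z)\le\max\{0,z\}+\phi(0)$ (slightly tighter), whereas the paper uses the identity $\tau(z)=z+\tau(-z)\ge z$ and the cruder bound $\tau(z)\le 1+z$ for $z>0$, with the bookkeeping done in normalized variables $q,u$ rather than directly on $\rho$.
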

\begin{proof*}
If $\sigma_{t-1}(x) = 0$ then $\alpha_t^{EI}(x) = I_t(x)$, which makes the result trivial. We now assume that $\sigma_{t-1}(x) > 0$. Set $q = \frac{f(x) - \mu^+_t}{\sigma_{t-1}(x)}$ and $u = \frac{\mu_{t-1}(x) - \mu^+_{t}}{\sigma_{t-1}(x)}$. Then we have that
$$\alpha_t^{EI}(x) = \omega_t \sigma_{t-1}(x) \tau(\frac{u}{\omega_t}).$$
By Lemma \ref{le_beta_2}, we have that $|u - q| \le \beta_t$ with probability $1 -\delta$. Set $\tau(z) = z\Phi(z) + \phi(z)$. As $\tau'(z) = \Phi(z) \in [0,1]$, $\tau$ is non-decreasing and $\tau(z) \le 1 + z$ for $z > 0$. Hence,
\begin{eqnarray*}
\alpha_t^{EI}(x) & \le & \omega_t\sigma_{t-1}(x)\tau(\frac{\text{max}\{0, q\} + \beta_t}{\omega_t}) \\
& \le & \omega_t \sigma_{t-1}(x)(\frac{\text{max}\{0, q\} + \beta_t}{\omega_t} + 1) \\
& = & I_t(x) + (\beta_t + \omega_t) \sigma_{t-1}(x)
\end{eqnarray*}
If $I_t(x) = 0$ then the lower bound is trivial as $\alpha_t^{EI}(x)$ is non-negative. Thus suppose $I_t(x) > 0$. Since $\alpha_t^{EI}(x) \ge 0$ and $\tau(z) \ge 0$ for all $z$, and $\tau(z) = z + \tau(-z) \ge z$. Therefore,
\begin{eqnarray*}
\alpha_t^{EI}(x) & \ge & \omega_t\sigma_{t-1}(x)\tau(\frac{q -\beta_t}{\omega_t}) \\
& \ge & \omega_t \sigma_{t-1}(x)(\frac{q -\beta_t}{\omega_t}) \\
& = & I_t(x) -  \beta_t \sigma_{t-1}(x)
\end{eqnarray*}
\qedhere
\end{proof*}
Now we will use the results from Lemma \ref{lem2.1} to upper bound Term 1 as in the follow lemma.
\begin{lemma}
Pick $\delta \in (0,0.5)$. Then with probability at least $1 -2\delta$ we have
$$ f(x^*) - \mu_{t}^+ \le  \frac{\tau(\frac{\beta_t}{\omega_t})}{ \tau(-\frac{\beta_t}{\omega_t})}( \text{max}\{0, f(x_{t}) - \mu_{t-1}^+\} + (\beta_t + \omega_t)\sigma_{t-1}(x_{t})).$$ \label{lem2.2}
\end{lemma}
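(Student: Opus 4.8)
The plan is to combine the optimality of the queried point $x_t$ with the two-sided estimate of Lemma~\ref{lem2.1} and the concentration inequality of Lemma~\ref{le_a}, mirroring the noise-free argument of \citet{Bull11}. Write $\Delta = f(x^*) - \mu_t^+$. First I would dispose of the trivial case $\Delta \le 0$: then the left-hand side is non-positive while the right-hand side is non-negative, since $\tau > 0$ makes the factor $\tau(\beta_t/\omega_t)/\tau(-\beta_t/\omega_t)$ positive and the improvement and variance terms are non-negative. Hence assume $\Delta > 0$.

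The skeleton of the proof is the chain
\begin{equation*}
\frac{\tau(-\beta_t/\omega_t)}{\tau(\beta_t/\omega_t)}\,\Delta \;\le\; \alpha_t^{EI}(x^*) \;\le\; \alpha_t^{EI}(x_t) \;\le\; \text{max}\{0, f(x_t)-\mu_t^+\} + (\beta_t+\omega_t)\sigma_{t-1}(x_t),
\end{equation*}
from which the claim follows by dividing through by $\tau(-\beta_t/\omega_t)/\tau(\beta_t/\omega_t)$. The middle inequality is immediate from $x_t = \text{argmax}_{x \in \mathcal X}\alpha_t^{EI}(x)$, and the rightmost inequality is precisely the upper bound of Lemma~\ref{lem2.1} applied at $x_t$.

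The crux is the leftmost inequality, i.e.\ lower bounding $\alpha_t^{EI}(x^*)$. Using $\alpha_t^{EI}(x^*) = \omega_t\sigma_{t-1}(x^*)\,\tau\!\big(\tfrac{\mu_{t-1}(x^*)-\mu_t^+}{\omega_t\sigma_{t-1}(x^*)}\big)$, the bound $\mu_{t-1}(x^*) \ge f(x^*) - \beta_t\sigma_{t-1}(x^*)$ from Lemma~\ref{le_a}, and the monotonicity of $\tau$, I obtain $\alpha_t^{EI}(x^*) \ge \omega_t\sigma_{t-1}(x^*)\,\tau\!\big(\tfrac{\Delta}{\omega_t\sigma_{t-1}(x^*)}-\tfrac{\beta_t}{\omega_t}\big)$. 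Setting $a = \Delta/(\omega_t\sigma_{t-1}(x^*))>0$ and $\beta = \beta_t/\omega_t>0$, the whole matter reduces to the scalar inequality $\tau(a-\beta)/a \ge \tau(-\beta)/\tau(\beta)$. This is the main obstacle and the place where the structure of $\tau$ must be exploited. I would prove it by fixing $\beta$ and showing $\psi(a) := \tau(\beta)\tau(a-\beta) - a\,\tau(-\beta) \ge 0$: since $\psi'(a) = \tau(\beta)\Phi(a-\beta) - \tau(-\beta)$ is increasing in $a$, either $\psi$ is increasing on $(0,\infty)$ (and $\psi(0) = \tau(\beta)\tau(-\beta) > 0$ finishes it) or $\psi$ attains its minimum at the unique $a_0$ with $\Phi(a_0-\beta) = \tau(-\beta)/\tau(\beta)$; substituting this identity together with $\tau(z) = z\Phi(z)+\phi(z)$ collapses $\psi(a_0)$ to $\tau(\beta)\big(\phi(a_0-\beta) - \beta\Phi(a_0-\beta)\big)$, whose non-negativity is a Mills-ratio estimate at the critical point. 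This is the noisy analogue of Bull's key lemma.

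Finally I would tally the probability. The upper estimate (Lemma~\ref{lem2.1}) and the lower estimate at $x^*$ each rely on the uniform concentration event of Lemma~\ref{le_a}, each valid with probability at least $1-\delta$, so a union bound delivers the stated $1-2\delta$. The only remaining bookkeeping is the indexing of the incumbent in the improvement term (replacing $\mu_t^+$ by $\mu_{t-1}^+$), which goes through by monotonicity of the incumbent along the relevant subsequence and affects neither the constant nor the order of the bound.
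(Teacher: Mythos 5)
Your skeleton is the same as the paper's: dispose of the trivial case $f(x^*) \le \mu_t^+$, then chain $\frac{\tau(-\beta_t/\omega_t)}{\tau(\beta_t/\omega_t)}\,(f(x^*)-\mu_t^+) \le \alpha_t^{EI}(x^*) \le \alpha_t^{EI}(x_t) \le \max\{0, f(x_t)-\mu_t^+\} + (\beta_t+\omega_t)\sigma_{t-1}(x_t)$, with the middle step from optimality of $x_t$ and the last from Lemma~\ref{lem2.1}; your probability accounting ($1-2\delta$ via a union bound over the two uses of the concentration event of Lemma~\ref{le_beta_2}) also matches. Where you diverge is the crux, the leftmost inequality. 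The paper never performs your scalar minimization: it combines two \emph{linear} lower bounds on $\alpha_t^{EI}(x^*)$ --- namely $\alpha_t^{EI}(x^*) \ge \omega_t\sigma_{t-1}(x^*)\tau(-\beta_t/\omega_t)$ (monotonicity of $\tau$ plus concentration and $f(x^*)\ge\mu_t^+$) and $\alpha_t^{EI}(x^*) \ge I_t(x^*) - \beta_t\sigma_{t-1}(x^*)$ (the lower half of Lemma~\ref{lem2.1}) --- then eliminates $\sigma_{t-1}(x^*)$ between them and invokes the identity $\tau(z) = z + \tau(-z)$ to recognize $1 + \frac{\beta_t}{\omega_t\tau(-\beta_t/\omega_t)} = \frac{\tau(\beta_t/\omega_t)}{\tau(-\beta_t/\omega_t)}$. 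No calculus is needed.

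Your route instead reduces to the scalar claim $\tau(\beta)\tau(a-\beta) \ge a\,\tau(-\beta)$ for all $a,\beta>0$, which is true (it is exactly the multiplicative lower bound in Bull's key lemma), but your proposed proof of it has a genuine gap at the final step. After substituting the critical-point identity $\Phi(a_0-\beta) = \tau(-\beta)/\tau(\beta)$, you must show $\phi(a_0-\beta) - \beta\Phi(a_0-\beta) \ge 0$, i.e.\ that the decreasing ratio $\phi/\Phi$, evaluated at the point where $\Phi$ equals $\tau(-\beta)/\tau(\beta)$, is at least $\beta$. This is not a citable ``Mills-ratio estimate'': since $\phi/\Phi$ is decreasing and the minimum of $\tau(a-\beta)/a$ equals $\Phi$ at its own critical point, the statement is logically equivalent to the inequality you set out to prove, so your argument ends where it began --- the entire weight of the lemma rests on an asserted, unproven step. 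The gap is fixable without any calculus by a two-case split: if $0 < a \le \tau(\beta)$, then $\tau(a-\beta) \ge \tau(-\beta) \ge \frac{a}{\tau(\beta)}\tau(-\beta)$; if $a > \tau(\beta)$, then $\frac{\tau(a-\beta)}{a} \ge \frac{a-\beta}{a} \ge 1 - \frac{\beta}{\tau(\beta)} = \frac{\tau(-\beta)}{\tau(\beta)}$, using only $\tau(z)\ge z$, $\tau$ increasing, and $\tau(\beta)-\beta=\tau(-\beta)$. One further minor point: your closing remark that replacing $\mu_t^+$ by $\mu_{t-1}^+$ in the improvement term ``goes through by monotonicity of the incumbent'' invokes precisely the property the paper emphasizes does \emph{not} hold for $\mu_t^+$; that index discrepancy exists between the paper's statement and its own proof, but it cannot be repaired by appealing to monotonicity.
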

\begin{proof*}
If $\sigma_{t-1}(x^*) = 0$ then by definition of $\alpha_t^{EI} (x)$, we have  $\alpha_t^{EI} (x^*) = I_t(x^*)$.  We have
\begin{eqnarray*}
I_t(x^*) & = & \alpha^{EI}_t(x^*) \\
& \le & \alpha^{EI}_t(x_{t}) \\
& \le &  \text{max} \{0, f(x_{t})- \mu^+_{t}\} + (\beta_t + \omega_t)\sigma_{t-1}(x_{t}) \\
& \le & \frac{\tau(\frac{\beta_t}{\omega_t})}{ \tau(-\frac{\beta_t}{\omega_t})}( \text{max}\{0, f(x_{t}) - \mu_{t-1}^+\} + (\beta_t + \omega_t)\sigma_{t-1}(x_{t})),
\end{eqnarray*}
where in the first inequality, we use the definition $\alpha^{EI}_t(x_{t}) = \text{max}_{x \in \mathcal X} \alpha^{EI}_t(x)$. In the second inequality, we use Lemma \ref{lem2.1}. The third inequality holds since $\frac{\tau(\frac{\beta_t}{\omega_t})}{ \tau(-\frac{\beta_t}{\omega_t})} \ge \frac{\tau(0)}{\tau(0)} = 1$ due to the fact that function $\tau(z)$ is an increasing function. Thus, the lemma holds with probability $1 - \delta$.

We now consider $\sigma_{t-1}(x^*) >0$. If $f(x^*) < \mu_{t}^+$ then the lemma will be trivial. We now consider $f(x^*) \ge \mu_{t}^+$. By Lemma \ref{le_beta_2}, $\mu_{t-1}(x^*) - f(x^*) \ge -\beta_t \sigma_{t-1}(x^*)$ with probability $ 1 -\delta$. Combining with the fact that $f(x^*) \ge \mu_{t}^+$, we have that $\frac{\mu_{t-1}(x^*) - \mu^+_{t}}{\omega_t  \sigma_{t-1}(x^*)} \ge \frac{-\beta_t}{\omega_t}$  with probability $ 1 -\delta$. On the other hand, following the derivation of the acquisition function $EI$, we have $\alpha_t^{EI}(x^*) = \omega_t  \sigma_{t-1}(x^*) \tau(\frac{\mu_{t-1}(x^*) - \mu^+_{t}}{\omega_t  \sigma_{t-1}(x^*)})$. Therefore, $\alpha_t^{EI}(x^*) \ge \omega_t \sigma_{t-1}(x^*) \tau(-\beta_t/\omega_t)$ with probability $ 1 -\delta$ due to the fact that $\tau(z)$ is an increasing function.

By combining inequalities $\alpha_t^{EI}(x^*) \ge \omega_t \sigma_{t-1}(x^*) \tau(-\frac{\beta_t}{\omega_t})$, $\alpha_t^{EI}(x^*) \ge I_t(x^*) -  \beta_t \sigma_{t-1}(x^*)$ which is proven in Lemma \ref{lem2.1}, we obtain $ \frac{\beta_t}{(\omega_t\tau(-\frac{\beta_t}{\omega_t}))}\alpha_t^{EI}(x^*) + \alpha_t^{EI}(x^*) \ge I_t(x^*)$. Now,using the fact that $\tau(z) = z + \tau(-z)$ for $z =  \frac{\beta_t}{\omega_t}$, we obtain
\begin{eqnarray}
I_t(x^*) \le \frac{\tau(\frac{\beta_t}{\omega_t})}{ \tau(-\frac{\beta_t}{\omega_t})}EI_{t}(x^*)
\label{eq2.1}
\end{eqnarray}

This inequality (\ref{eq2.1}) holds with probability $ 1 -\delta$. Finally, we achieve
\begin{eqnarray*}
f(x^*) - \mu_t^+ & \le & I_t(x^*) \\
& \le & \frac{\tau(\frac{\beta_t}{\omega_t})}{ \tau(-\frac{\beta_t}{\omega_t})}EI_{t}(x^*) \\
& \le & \frac{\tau(\frac{\beta_t}{\omega_t})}{ \tau(-\frac{\beta_t}{\omega_t})}EI_{t}(x_{t}) \\
& \le & \frac{\tau(\frac{\beta_t}{\omega_t})}{ \tau(-\frac{\beta_t}{\omega_t})}( \text{max}\{0, f(x_{t}) - \mu_t^+\} + (\beta_t+ \omega_t)\sigma_{t-1}(x_{t})),
\end{eqnarray*}
where the first inequality holds by the definition of the function $I_t$. The second one comes from (\ref{eq2.1}). The third one holds by the property of the chosen point $x_{t} = \text{argmax}_{x} \alpha_t^{EI}(x)$. The final inequality hold due to Lemma \ref{lem2.1}.
\end{proof*}
\paragraph{Upper bounding Term 2.}
Bounding Term 2 is our important result. This is represented in the following lemma.
\begin{lemma}
Pick a $\delta \in (0, 1)$. Then with probability $1 - \delta$ we have $$\mu_{t}^+ - f(x^+_t) \le \frac{\beta_t}{\omega_t} (\sqrt{2\pi}(\beta_t + \omega_t)\sigma_{t-1}(x_{t}) + \sqrt{2\pi} \text{max}\{0,(f(x_{t}) - \mu_t^+) \}).$$
\label{lem2.3}
\end{lemma}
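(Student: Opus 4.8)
The plan is to relate Term 2 --- which a priori involves the posterior standard deviation at the \emph{reported} point $x_t^+$ --- to the acquisition value at the \emph{selected} point $x_t$, so that it can be folded into the same quantities ($\sigma_{t-1}(x_t)$ and $I_t(x_t)$) that control the rest of the regret decomposition. First I would invoke the confidence bound (Lemma \ref{le_a}, Theorem 2 of \citet{Chowdhury17}): on an event of probability at least $1-\delta$, $|f(x) - \mu_{t-1}(x)| \le \beta_t \sigma_{t-1}(x)$ for all $x$ and $t$. Since $x_t^+$ attains the maximum defining $\mu_t^+$, we have $\mu_{t-1}(x_t^+) = \mu_t^+$, and hence $\mu_t^+ - f(x_t^+) = \mu_{t-1}(x_t^+) - f(x_t^+) \le \beta_t \sigma_{t-1}(x_t^+)$. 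This reduces the whole task to bounding $\sigma_{t-1}(x_t^+)$.

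The key step, and the one I expect to be the crux, is to evaluate the EI acquisition \emph{at the incumbent point} $x_t^+$ itself. Because $\mu_{t-1}(x_t^+) - \mu_t^+ = 0$, the argument of $\tau$ vanishes, and using the identity $\alpha_t^{EI}(x) = \omega_t \sigma_{t-1}(x)\,\tau\!\left(\frac{\mu_{t-1}(x)-\mu_t^+}{\omega_t\sigma_{t-1}(x)}\right)$ together with $\tau(0)=\phi(0)=1/\sqrt{2\pi}$, one gets the exact identity $\alpha_t^{EI}(x_t^+) = \frac{\omega_t}{\sqrt{2\pi}}\,\sigma_{t-1}(x_t^+)$, i.e. $\sigma_{t-1}(x_t^+) = \frac{\sqrt{2\pi}}{\omega_t}\,\alpha_t^{EI}(x_t^+)$. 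This is precisely what converts a quantity living at the reported point into a quantity controlled by the acquisition function.

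Finally I would exploit the greedy maximality of the selected point: since $x_t = \text{argmax}_{x}\alpha_t^{EI}(x)$, we have $\alpha_t^{EI}(x_t^+) \le \alpha_t^{EI}(x_t)$, and then apply the upper bound of Lemma \ref{lem2.1}, namely $\alpha_t^{EI}(x_t) \le I_t(x_t) + (\beta_t+\omega_t)\sigma_{t-1}(x_t)$ with $I_t(x_t)=\max\{0, f(x_t)-\mu_t^+\}$. Chaining these gives $\sigma_{t-1}(x_t^+) \le \frac{\sqrt{2\pi}}{\omega_t}\bigl[I_t(x_t) + (\beta_t+\omega_t)\sigma_{t-1}(x_t)\bigr]$, and multiplying through by $\beta_t$ reproduces exactly the claimed inequality, valid on the $(1-\delta)$-probability event of the confidence bound.

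The main obstacle is not a long computation but the choice of \emph{where} to probe the acquisition function; the insight is that testing $\alpha_t^{EI}$ at the incumbent-attaining point $x_t^+$ collapses the nonlinear $\tau$ term to the constant $1/\sqrt{2\pi}$, bridging Term 2 to the selected point $x_t$. The one subtlety to flag carefully is that $x_t^+$ must genuinely attain $\mu_t^+$ (so the $\tau$-argument is exactly $0$, not negative); this is exactly why the incumbent is defined as the maximizer of the predictive mean over the sampled points, and it is what produces the clean factor $\sqrt{2\pi}$.
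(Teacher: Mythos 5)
Your proposal is correct and follows essentially the same route as the paper's own proof: evaluate $\alpha_t^{EI}$ at the incumbent point $x_t^+$ so that the $\tau$-argument vanishes and $\alpha_t^{EI}(x_t^+) = \frac{\omega_t}{\sqrt{2\pi}}\sigma_{t-1}(x_t^+)$, then chain the greedy maximality $\alpha_t^{EI}(x_t^+) \le \alpha_t^{EI}(x_t)$ with the upper bound of Lemma \ref{lem2.1}, and finish with the confidence bound $\mu_t^+ - f(x_t^+) \le \beta_t\sigma_{t-1}(x_t^+)$. If anything, your handling of the probability event (invoking the uniform confidence bound once, so both uses live on the same $(1-\delta)$-event) is tidier than the paper's presentation.
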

\begin{proof*}
By the definition of our GP-EI algorithm, $x^+_t =\text{argmax}_{x \in \mathcal D_t} \mu_{t-1}(x)$. It implies that $\mu_t(x^+_t) = \mu_{t}^+$. Hence,
\begin{eqnarray*}
\alpha_t^{EI}(x^+_t) & = & \omega_t \sigma_{t-1}(x^+_t) \tau(\frac{\mu_t(x^+_t) - \mu_{t}^+}{\omega_t\sigma_{t-1}(x^+_t)}) \\
& = & \omega_t \sigma_{t-1}(x^+_t) \tau(0) \\
& = & \frac{1}{\sqrt{2\pi}}\omega_t \sigma_{t-1}(x^+_t)
\end{eqnarray*}
Combining this with the fact that $\alpha_t^{EI}(x_{t}) = \text{max}_{x \in \mathcal X} \alpha_t^{EI}(x)$, we have
\begin{eqnarray*}
\frac{1}{\sqrt{2\pi}}\omega_t \sigma_t(x^+_t) & = & \alpha_t^{EI}(x^+_t) \\
& \le &  \alpha_t^{EI}(x_{t}) \\
& \le &   \text{max}\{0, f(x_{t}) - \mu_t^+\} + (\beta_t+ \omega_t)\sigma_{t-1}(x_{t})
\end{eqnarray*}
where the inequality holds due to Lemma \ref{lem2.1}.

Thus,
\begin{eqnarray}
\omega_t \sigma_{t-1}(x^+_t) \le (\sqrt{2\pi}(\beta_t + \omega_t)\sigma_{t-1}(x_{t+1}) + \sqrt{2\pi} \text{max}\{0,(f(x_{t}) - \mu_t^+) \})
\label{eq2.2}
\end{eqnarray}
Finally, we can upper bound $\mu_t^+ - f(x^+_t)$  with probability $1 - \delta$ based on (\ref{eq2.2}) as follows:
\begin{eqnarray*}
\mu_t^+ - f(x^+_t) & \le  & \beta_t \sigma_{t-1}(x^+_t) \\
& \le & \frac{\beta_t}{\omega_t} (\sqrt{2\pi}(\beta_t + \omega_t)\sigma_{t-1}(x_{t}) + \sqrt{2\pi} \text{max}\{0,(f(x_{t}) - \mu_t^+) \})
\end{eqnarray*}
where in the first inequality, $\mu_t^+ - f(x'_t) \le \beta_t \sigma_{t-1}(x'_t)$ with probability $1 - \delta$ due to Lemma \ref{le_beta_2}. $f(x'_t) - f(x^+_t) \le 0$ because by definition $x'_t \in \mathcal D_t$ and $x^+_t = \text{argmax}_{x_i \in \mathcal D_t} f(x_i)$. The second inequality holds with probability $1 -\delta$ due to Eq(\ref{eq2.2}).
Thus, the lemma holds with probability $1 -\delta$.
\end{proof*}
\paragraph{Upper bounding the instantaneous regret $r_t = f(x^*) - f(x_t^+)$.}
Combining Term 1 (in Lemma \ref{lem2.2}) and Term 2 (in Lemma \ref{lem2.3}), with high probability we have
$$r_t  \le (\frac{\tau(\frac{\beta_t}{\omega_t})}{ \tau(-\frac{\beta_t}{\omega_t})} + \sqrt{2\pi}\frac{\beta_t}{\omega_t}) [\text{max}\{0, f(x_{t}) - \mu_t^+\} + (\beta_t+ \omega_t)\sigma_{t-1}(x_{t})]$$.

Now we make this bound simple. By using the assumption of Theorem 2, $\omega_t = \sqrt{\gamma_{t-1} + 1 + ln(\frac{1}{\delta})}$, we can bound the ratio $\frac{\tau(\frac{\beta_t}{\omega_t})}{ \tau(-\frac{\beta_t}{\omega_t})}$ as follows:
\begin{lemma}
There exists $C > 0$ and $\frac{\tau(\frac{\beta_t}{\omega_t})}{ \tau(-\frac{\beta_t}{\omega_t})} \le C$  for every $ T \ge 1$ and $1 \le t \le T$.
\end{lemma}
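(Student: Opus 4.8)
The plan is to reduce the claimed bound on the ratio to a uniform bound on its \emph{argument} $z_t := \beta_t/\omega_t$, and then use only the monotonicity and strict positivity of $\tau$.

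First I would compute $z_t$ explicitly. Writing $s_t := \sqrt{\gamma_{t-1}+1+\ln(1/\delta)}$, so that $\omega_t = s_t$ and $\beta_t = B + R\sqrt{2}\,s_t$, we get $z_t = \beta_t/\omega_t = B/s_t + R\sqrt{2}$. Since $\gamma_{t-1}\ge 0$ and $\delta\in(0,1)$ force $s_t \ge \sqrt{1+\ln(1/\delta)} \ge 1$, it follows that $R\sqrt{2} \le z_t \le B + R\sqrt{2} =: z_{\max}$ for every $t$ and every horizon $T$. The key point is that the particular choice $\omega_t = s_t$ exactly cancels the growing factor $\sqrt{\gamma_{t-1}}$ inside $\beta_t$, leaving $z_t$ confined to a fixed compact interval $[R\sqrt{2}, z_{\max}]$ that does not depend on $t$ or $T$. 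This also recovers the earlier bound $\beta_t/\omega_t \le B+\sqrt{2}$ used in the proof sketch.

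Next I would record two elementary facts about $\tau(z)=z\Phi(z)+\phi(z)$: (i) $\tau$ is strictly increasing, since $\tau'(z)=\Phi(z)\in(0,1)$; and (ii) $\tau(z)>0$ for all $z\in\mathbb{R}$. Fact (ii) is immediate for $z\ge 0$, and for $z<0$ it follows from (i) together with $\lim_{z\to-\infty}\tau(z)=0$ (a strictly increasing function with limit $0$ at $-\infty$ must be positive everywhere). In particular $\tau$ never vanishes, so the denominator $\tau(-z_t)$ stays strictly positive.

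Finally I would combine these. On $[0,z_{\max}]$ the numerator $\tau(z_t)$ is at most $\tau(z_{\max})$ because $\tau$ is increasing, while the denominator $\tau(-z_t)$ is at least $\tau(-z_{\max})$ because $z\mapsto\tau(-z)$ is decreasing; moreover $\tau(-z_{\max})>0$ by (ii). Hence $\tau(z_t)/\tau(-z_t) \le \tau(z_{\max})/\tau(-z_{\max}) =: C$, a finite positive constant depending only on $B$, $R$ and $\delta$, and not on $t$ or $T$, which is exactly the claim. I do not expect a genuine obstacle here: the only point requiring care is that the denominator stays bounded away from $0$, which is guaranteed once $z_t$ is confined to a compact set and $\tau$ is known to be strictly positive, both of which are established in the two steps above.
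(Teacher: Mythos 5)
Your proof is correct and follows essentially the same route as the paper's: bound the argument $\beta_t/\omega_t$ by a constant (the paper uses $\omega_t \ge 1$ to get $\beta_t/\omega_t \le B+\sqrt{2}$, with $R$ absorbed), then invoke monotonicity of $\tau$ to bound numerator and denominator separately, yielding $C = \tau(z_{\max})/\tau(-z_{\max})$. The only differences are cosmetic improvements on your side: you keep $R$ explicit (so $z_{\max} = B + R\sqrt{2}$ rather than the paper's $B+\sqrt{2}$) and you verify that $\tau(-z_{\max}) > 0$ via $\lim_{z\to-\infty}\tau(z)=0$, a positivity fact the paper uses implicitly.
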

\begin{proof*}
By definition, $\beta_t = B + \sqrt{2(\gamma_{t-1} + 1 + ln(1/\delta))}$. Hence, $\frac{\beta_t}{\omega_t} \le B + \sqrt{2}$. Since the function $\tau(z)$ is non-decreasing, we have that $\tau(\frac{\beta_t}{\omega_t}) \le \tau (B + \sqrt{2})$, and $\tau(-\frac{\beta_t}{\omega_t}) \ge \tau (-B - \sqrt{2})$. Thus,
$$\frac{\tau(\frac{\beta_t}{\omega_t})}{ \tau(-\frac{\beta_t}{\omega_t})} \le \frac{\tau(B + \sqrt{2})}{\tau(-
(B + \sqrt{2}))}.$$
Setting $C = \frac{\tau(B + \sqrt{2})}{\tau(-
(B + \sqrt{2}))}$ which is a constant, we have that $\frac{\tau(\frac{\beta_t}{\omega_t})}{ \tau(-\frac{\beta_t}{\omega_t})} \le C$  for every $ T \ge 1$ and $1 \le t \le T$. The lemma holds.
\end{proof*}

Thus, we obtain an upper bound on $r_t$ as follows:
\begin{lemma}
There exist constant $C > 0$ such that
$$r_t  \le  (C + \sqrt{2\pi}(B + \sqrt{2}))(I_t + (\beta_t+ \omega_t)\sigma_{t-1}(x_{t})),$$
where $I_t = \text{max}\{0, f(x_{t}) - \mu_t^+\}$.
\label{lem2.4}
\end{lemma}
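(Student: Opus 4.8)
The plan is to assemble this bound directly from the two preceding lemmas together with the two scalar estimates already in hand; no new analytic work is needed here. I would begin from the decomposition $r_t = f(x^*) - f(x_t^+) = (f(x^*) - \mu_t^+) + (\mu_t^+ - f(x_t^+))$, i.e.\ Term 1 plus Term 2, and substitute the bounds established in Lemma \ref{lem2.2} and Lemma \ref{lem2.3} respectively. For Term 1, Lemma \ref{lem2.2} gives $f(x^*) - \mu_t^+ \le \frac{\tau(\beta_t/\omega_t)}{\tau(-\beta_t/\omega_t)}\bigl(I_t + (\beta_t + \omega_t)\sigma_{t-1}(x_t)\bigr)$, and for Term 2, Lemma \ref{lem2.3} gives $\mu_t^+ - f(x_t^+) \le \sqrt{2\pi}\,\frac{\beta_t}{\omega_t}\bigl((\beta_t + \omega_t)\sigma_{t-1}(x_t) + I_t\bigr)$.

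Since both bounds share the common factor $I_t + (\beta_t + \omega_t)\sigma_{t-1}(x_t)$, the next step is simply to add them, producing
$$r_t \le \left(\frac{\tau(\beta_t/\omega_t)}{\tau(-\beta_t/\omega_t)} + \sqrt{2\pi}\,\frac{\beta_t}{\omega_t}\right)\bigl(I_t + (\beta_t + \omega_t)\sigma_{t-1}(x_t)\bigr).$$
I would then control the scalar coefficient using the two facts already recorded: the preceding (unlabelled) ratio lemma shows $\frac{\tau(\beta_t/\omega_t)}{\tau(-\beta_t/\omega_t)} \le C$ uniformly in $t$, while the choice $\omega_t = \sqrt{\gamma_{t-1} + 1 + \ln(1/\delta)}$ together with $\beta_t = B + \sqrt{2(\gamma_{t-1} + 1 + \ln(1/\delta))}$ yields $\beta_t/\omega_t \le B + \sqrt{2}$. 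Replacing each summand by its bound gives $r_t \le (C + \sqrt{2\pi}(B + \sqrt{2}))\bigl(I_t + (\beta_t + \omega_t)\sigma_{t-1}(x_t)\bigr)$, which is exactly the claim.

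The analytic difficulty has all been spent upstream, in Lemma \ref{lem2.2}, Lemma \ref{lem2.3}, and the ratio lemma, so the only genuine care needed at this final step is the probabilistic bookkeeping: Lemma \ref{lem2.2} holds with probability $1-2\delta$ and Lemma \ref{lem2.3} with probability $1-\delta$, each resting on the confidence band of Lemma \ref{le_a}. Rather than incur a crude union bound, I would observe that both rest on the \emph{same} high-probability event, namely the uniform band $|f(x) - \mu_{t-1}(x)| \le \beta_t\sigma_{t-1}(x)$ of Lemma \ref{le_a}, so the combined inequality holds on that single event and a constant rescaling of $\delta$ absorbs the factor. I therefore expect no substantive obstacle here; the statement follows by the clean three-move combination above, and I would keep the write-up to those moves.
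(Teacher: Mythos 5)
Your proposal is correct and matches the paper's own proof essentially move for move: both add the bounds of Lemma \ref{lem2.2} and Lemma \ref{lem2.3} over the common factor $I_t + (\beta_t+\omega_t)\sigma_{t-1}(x_t)$, then replace the ratio $\tau(\beta_t/\omega_t)/\tau(-\beta_t/\omega_t)$ by the constant $C$ from the preceding ratio lemma and $\beta_t/\omega_t$ by $B+\sqrt{2}$. Your treatment of the probabilistic bookkeeping (observing that both lemmas rest on the same confidence event of Lemma \ref{le_a}) is in fact slightly more careful than the paper's, which simply writes ``with high probability,'' but it does not alter the argument.
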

\begin{proof}
\begin{eqnarray*}
r_t &\le &  (\frac{\tau(\frac{\beta_t}{\omega_t})}{ \tau(-\frac{\beta_t}{\omega_t})} + \sqrt{2\pi}\frac{\beta_t}{\omega_t}) [\text{max}\{0, f(x_{t}) - \mu_t^+\} + (\beta_t+ \omega_t)\sigma_{t-1}(x_{t})]\\
& \le & (C + \sqrt{2\pi}(B + \sqrt{2}))(I_t + (\beta_t+ \omega_t)\sigma_{t-1}(x_{t})),
\end{eqnarray*}
\end{proof}
\subsection{Upper bounding the sum $\sum_{t=1}^{T} r_t$}
Using Lemma \ref{lem2.4}, we obtain an upper bound for $\sum_{t=1}^{T} r_t$ as follows:
$$\sum_{t=1}^{T}r_t \le (C + \sqrt{2\pi}B + 2\sqrt{\pi}) \underbrace{\sum_{t=1}^{T-1}I_t}_{\text{Term 3}} + (C + \sqrt{2\pi}B + 2\sqrt{\pi}) \underbrace{\sum_{t=0}^{T-1}(\beta_t + \omega_t)\sigma_{t}(x_{t+1})}_{\text{Term 4}}.$$
To upper bound the sum, we will go to upper bound Term 3 and Term 4. While Term 4 can be bounded via the maximum information gain similar as in the existing works of GP-UCB and GP-TS \citep{Srinivas12,Chowdhury17}, bounding Term 3 is the key step in our proof for the acquisition function GP-EI.

Before providing upper bounds for Term 3 and Term 4, we restate some important results from existing works in the following section.
\subsubsection{Auxiliary Lemmas}
\sloppy
\begin{lemma}[Theorem 2 of \citep{Chowdhury17}]
Pick $\delta \in (0,1)$. Fix a horizon $T > 1$. We define $\beta_t = B + R\sqrt{2(\gamma_{t-1} + 1 + ln(1/\delta))}$ for every $1 \le t \le T$. Then
$$\mathbb{P}(\forall 1 \le t \le T, \forall x \in \mathcal X, |f(x) - \mu_{t-1}(x)| \le \beta_t \sigma_{t-1}(x) ) \ge 1 - \delta.$$
\label{le_beta_2}
\end{lemma}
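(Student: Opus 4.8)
The plan is to recognize this as the standard RKHS self-normalized confidence bound of \citet{Chowdhury17} and reconstruct its proof through the feature-space / regularized least-squares viewpoint. First I would fix a feature map $\varphi$ with $k(x,x')=\langle\varphi(x),\varphi(x')\rangle$ and write $f(x)=\langle\theta_*,\varphi(x)\rangle$ with $\|\theta_*\|=\|f\|_k\le B$. Stacking the queried features into an operator $\Psi_{t-1}=[\varphi(x_1),\dots,\varphi(x_{t-1})]^{\top}$ and setting $V_{t-1}=\lambda I+\Psi_{t-1}^{\top}\Psi_{t-1}$, the push-through identity gives $\mu_{t-1}(x)=\langle\hat\theta_{t-1},\varphi(x)\rangle$ for the regularized least-squares estimate $\hat\theta_{t-1}=V_{t-1}^{-1}\Psi_{t-1}^{\top}y_{1:t-1}$, together with the crucial variance identity $\|\varphi(x)\|_{V_{t-1}^{-1}}=\sigma_{t-1}(x)/\sqrt{\lambda}$. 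A single Cauchy--Schwarz step in the $V_{t-1}$-geometry then yields $|f(x)-\mu_{t-1}(x)|=|\langle\theta_*-\hat\theta_{t-1},\varphi(x)\rangle|\le\|\theta_*-\hat\theta_{t-1}\|_{V_{t-1}}\,\sigma_{t-1}(x)/\sqrt{\lambda}$, valid for every $x$ simultaneously, so the whole problem collapses to controlling the single scalar $\|\theta_*-\hat\theta_{t-1}\|_{V_{t-1}}$.

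Next I would decompose the weight error. Writing $y_{1:t-1}=\Psi_{t-1}\theta_*+\epsilon_{1:t-1}$ and expanding $\hat\theta_{t-1}-\theta_*=-\lambda V_{t-1}^{-1}\theta_*+V_{t-1}^{-1}\Psi_{t-1}^{\top}\epsilon_{1:t-1}$ separates a deterministic bias from a stochastic noise term. The bias is handled by a pure norm estimate: using $V_{t-1}^{-1}\preceq\lambda^{-1}I$ one gets $\lambda\|V_{t-1}^{-1}\theta_*\|_{V_{t-1}}=\lambda\sqrt{\theta_*^{\top}V_{t-1}^{-1}\theta_*}\le\sqrt{\lambda}\,\|\theta_*\|\le\sqrt{\lambda}\,B$. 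The noise contribution reduces, by the self-duality of the $V_{t-1}$-norm, to $\|\Psi_{t-1}^{\top}\epsilon_{1:t-1}\|_{V_{t-1}^{-1}}$, which is exactly the object a self-normalized tail bound is designed to control.

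The hard part is this noise term, and it is where the entire difficulty of the lemma sits. I would invoke the Hilbert-space extension of the self-normalized martingale inequality of \citet{Abbasi11} as established in \citet{Chowdhury17}: using the $R$-sub-Gaussian assumption on $\{\epsilon_t\}$ and the method of mixtures (integrating a Gaussian-weighted exponential super-martingale over the feature directions), one obtains, with probability at least $1-\delta$ and uniformly in $t$, the bound $\|\Psi_{t-1}^{\top}\epsilon_{1:t-1}\|_{V_{t-1}^{-1}}\le R\sqrt{2\ln\!\big(\det(\lambda^{-1}V_{t-1})^{1/2}/\delta\big)}$; the delicacy is making the mixture rigorous in possibly infinite dimensions, which forces the regularization by $\lambda I$ and produces the extra additive constant. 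I would then convert the determinant into an information-gain quantity via the Weinstein--Aronszajn identity $\det(\lambda^{-1}V_{t-1})=\det(I+\lambda^{-1}\Psi_{t-1}^{\top}\Psi_{t-1})=\det(I+\lambda^{-1}K_{t-1})$ and $\tfrac12\ln\det(I+\lambda^{-1}K_{t-1})\le\gamma_{t-1}$, giving $\|\Psi_{t-1}^{\top}\epsilon_{1:t-1}\|_{V_{t-1}^{-1}}\le R\sqrt{2(\gamma_{t-1}+\ln(1/\delta))}$. Combining the bias and noise bounds yields $\|\theta_*-\hat\theta_{t-1}\|_{V_{t-1}}\le\sqrt{\lambda}\,B+R\sqrt{2(\gamma_{t-1}+\ln(1/\delta))}$; substituting back into the Cauchy--Schwarz inequality and absorbing the $\sqrt{\lambda}$ factors (together with the regularization-induced $+1$ inside the radical) recovers $|f(x)-\mu_{t-1}(x)|\le\beta_t\,\sigma_{t-1}(x)$ with $\beta_t=B+R\sqrt{2(\gamma_{t-1}+1+\ln(1/\delta))}$. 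Because the concentration event is a \emph{single} high-probability event holding uniformly over $t$, the union over $x\in\mathcal X$ is free and the stated $1-\delta$ guarantee follows.
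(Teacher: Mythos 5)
The paper does not prove this lemma at all: it is imported verbatim as a black-box citation (``Theorem 2 of \citet{Chowdhury17}'') in the auxiliary-lemmas section, so there is no in-paper proof to compare against. Your proposal is a correct reconstruction of the proof given in the cited source itself, and it follows the same route: the feature-space/kernel-ridge-regression identities ($\mu_{t-1}(x)=\langle\hat\theta_{t-1},\varphi(x)\rangle$ and $\|\varphi(x)\|_{V_{t-1}^{-1}}=\sigma_{t-1}(x)/\sqrt{\lambda}$), Cauchy--Schwarz in the $V_{t-1}$-geometry, the bias/noise decomposition of $\hat\theta_{t-1}-\theta_*$, the RKHS extension of the self-normalized martingale bound of \citet{Abbasi11} via the method of mixtures, and the determinant-to-information-gain conversion. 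One bookkeeping caveat: the precise origin of the ``$+1$'' inside the radical in \citet{Chowdhury17} is their specific choice of regularization $\lambda=1+2/T$, which produces a $(1+2/T)^{t}$ factor in the determinant whose logarithm is at most $2$ for $t\le T$; your phrase ``regularization-induced $+1$'' gestures at this but does not pin it down. Also, the uniformity over $x\in\mathcal X$ being ``free'' is correct precisely because the only random quantity in your final inequality is the scalar $\|\theta_*-\hat\theta_{t-1}\|_{V_{t-1}}$, which is the right observation. These are constant-level details; the argument as outlined is sound and is, in substance, the proof the paper is implicitly relying on.
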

\begin{lemma} [Lemma 3 of \citep{Chowdhury17}]
The information gain for the points selected can be expressed in terms of the predictive variances as follows:
$$I(y_T;f_T) = \frac{1}{2}\sum_{t=1}^{T} ln(1 + \lambda^{-1} \sigma_{t-1}(x_t)).$$
\label{lem_information_gain}
\end{lemma}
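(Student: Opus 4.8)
The plan is to compute the mutual information directly from the joint Gaussianity of the observations and then collapse the resulting log-determinant into a telescoping sum via the chain rule for entropy. Under the model $f \sim \mathcal{GP}(0, \omega^2 k)$ with noise $\epsilon_T \sim \mathcal N(0, \lambda\omega^2 I)$, the vector $y_T = f_T + \epsilon_T$ is zero-mean Gaussian with covariance $\omega^2(K_T + \lambda I)$, while $\epsilon_T$ has covariance $\lambda\omega^2 I$. First I would write the mutual information as a difference of differential entropies, $I(y_T; f_T) = H(y_T) - H(y_T \mid f_T)$, and use that conditioning on $f_T$ leaves only the independent noise, so $H(y_T \mid f_T) = H(\epsilon_T) = \frac12 \ln\bigl((2\pi e)^T |\lambda\omega^2 I|\bigr)$. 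Together with $H(y_T) = \frac12 \ln\bigl((2\pi e)^T |\omega^2(K_T + \lambda I)|\bigr)$ this already yields the closed form $I(y_T; f_T) = \frac12 \ln |I + \lambda^{-1} K_T|$, where the $\omega^2$ factors cancel because prior and noise are scaled by the same constant; this is consistent with the definition $\gamma_t^A = \frac12 \ln |I + \lambda^{-1} K_t^A|$ recorded in Table~2.

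The second step is to turn this log-determinant into the sum over predictive variances without factoring the determinant algebraically. I would apply the chain rule $H(y_{1:T}) = \sum_{t=1}^T H(y_t \mid y_{1:t-1})$. The key Gaussian-process fact is that, conditioned on the first $t-1$ observations, $f(x_t)$ is Gaussian with posterior variance exactly $\omega^2 \sigma_{t-1}^2(x_t)$, the quantity produced by the GP update, so the one-step predictive law of $y_t = f(x_t) + \epsilon_t$ given $y_{1:t-1}$ is Gaussian with variance $\omega^2 \sigma_{t-1}^2(x_t) + \lambda\omega^2$. Hence $H(y_t \mid y_{1:t-1}) = \frac12 \ln\bigl(2\pi e\,\omega^2(\sigma_{t-1}^2(x_t) + \lambda)\bigr)$, whereas conditioning additionally on $f_T$ reduces each term to the noise entropy $\frac12 \ln(2\pi e\,\lambda\omega^2)$. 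Subtracting term by term, the $2\pi e$ and $\omega^2$ factors cancel and each summand becomes $\frac12 \ln\bigl(1 + \lambda^{-1}\sigma_{t-1}^2(x_t)\bigr)$, which is precisely the claimed identity (the statement writes $\sigma_{t-1}(x_t)$ for the predictive variance $\sigma_{t-1}^2(x_t)$).

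I expect the main obstacle to be justifying the one-step predictive-variance identity cleanly: namely that the posterior variance of $f(x_t)$ given $y_{1:t-1}$ coincides with the kernel-defined quantity $\sigma_{t-1}^2(x_t)$, and that conditioning on the full observation vector $y_{1:t-1}$ (rather than on $f_{1:t-1}$) produces the same variance. This follows from the standard Gaussian conditioning identities together with the fact that the sequential GP posterior agrees with the batch posterior, but it is the only place where the recursive structure of the model is genuinely used; everything else is bookkeeping with Gaussian entropies. A minor point worth flagging is that the identity holds verbatim for the scaled model precisely because the information gain is invariant under the common rescaling of signal and noise by $\omega^2$, which is why $\omega$ does not appear in the final expression.
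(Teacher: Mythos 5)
Your proof is correct; the paper itself never proves this lemma---it imports it verbatim as Lemma 3 of \citet{Chowdhury17}---and your entropy-plus-chain-rule derivation ($I(y_T;f_T)=H(y_T)-H(y_T\mid f_T)$, then telescoping $H(y_T)=\sum_t H(y_t\mid y_{1:t-1})$ with the one-step Gaussian predictive variance $\omega^2(\sigma_{t-1}^2(x_t)+\lambda)$) is exactly the standard argument given in that reference and its antecedents, including the observation that the common $\omega^2$ scaling of signal and noise cancels. The only discrepancy is the one you already flag: the identity should read $\sigma_{t-1}^2(x_t)$ rather than $\sigma_{t-1}(x_t)$, a typo in the paper's statement of the lemma.
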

\begin{lemma}[Lemma 5 of \citep{Freitas12}]
When $f \in H_{k}(\mathcal X)$, then for every $x, y \in \mathcal X$, we have
$$|f(x) - f(y)| \le BL||x -y||_1,$$
where $L$ is a Lipschitz constant in $H_k(\mathcal X)$. \label{dimester}
\end{lemma}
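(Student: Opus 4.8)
The plan is to derive the Lipschitz estimate directly from the reproducing property of $H_k(\mathcal X)$ together with Cauchy--Schwarz, thereby reducing the smoothness of an arbitrary $f$ with $\|f\|_k \le B$ to the smoothness of the canonical feature map $x \mapsto k(x,\cdot)$. First I would use $f(x) = \langle f, k(x,\cdot)\rangle_k$ to write the identity
$$f(x) - f(y) = \langle f,\, k(x,\cdot) - k(y,\cdot)\rangle_k,$$
so that Cauchy--Schwarz and $\|f\|_k \le B$ give
$$|f(x) - f(y)| \le \|f\|_k\,\|k(x,\cdot) - k(y,\cdot)\|_k \le B\,\|k(x,\cdot) - k(y,\cdot)\|_k.$$
This isolates the quantity $\|k(x,\cdot) - k(y,\cdot)\|_k$, which no longer depends on $f$ and carries the entire analytic burden; the target then reduces to showing this feature-map distance is at most $L\|x-y\|_1$.

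Next I would expand the feature-map distance via the reproducing property once more,
$$\|k(x,\cdot) - k(y,\cdot)\|_k^2 = k(x,x) - 2k(x,y) + k(y,y),$$
which for a stationary kernel $k(x,y) = \kappa(x-y)$ normalised so that $\kappa(0) = k(x,x) \le 1$ equals $2\bigl(\kappa(0) - \kappa(x-y)\bigr)$. The goal becomes to bound this by $L^2\|x-y\|_1^2$. I would obtain this from a second-order Taylor expansion of $\kappa$ about the origin: since $\kappa$ is even its gradient vanishes at $0$, so $\kappa(0) - \kappa(h) = \tfrac12\, h^\top\bigl(-\nabla^2\kappa(0)\bigr)h + o(\|h\|^2)$, and because $\kappa$ is maximised at $0$ the matrix $-\nabla^2\kappa(0)$ is positive semidefinite with $\kappa(0)-\kappa(h) \le \tfrac12\|\nabla^2\kappa(0)\|_{\mathrm{op}}\|h\|_2^2$. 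Combining the displays yields $\|k(x,\cdot) - k(y,\cdot)\|_k \le L\|x-y\|_1$ for a constant $L$ controlled by the curvature $\nabla^2\kappa(0)$ (absorbing the usual dimension-dependent factor converting $\|\cdot\|_2$ to $\|\cdot\|_1$), and hence $|f(x)-f(y)| \le BL\|x-y\|_1$. On a bounded domain $\mathcal X$ the local expansion extends to a global constant, since $h \mapsto \bigl(\kappa(0)-\kappa(h)\bigr)/\|h\|_1^2$ is continuous with a finite limit at $h=0$ and hence bounded on the compact set of admissible displacements.

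The main obstacle, and the point where the kernel hypotheses enter, is justifying that the feature-map distance is genuinely \emph{Lipschitz} rather than merely H\"older: the Taylor step requires $\kappa$ to be twice differentiable at the origin, i.e.\ that $\nabla^2\kappa(0)$ exists and is finite. For the squared-exponential kernel this is immediate, while for the Mat\'ern kernel it holds precisely when $\nu > 1$ --- exactly the smoothness regime assumed in Theorem \ref{theorem3}. For rougher kernels ($\nu \le 1$) one only recovers a H\"older modulus and the $\|\cdot\|_1$-Lipschitz form of the bound fails, so I would state the lemma under the standing assumption that $k$ is a sufficiently smooth (twice-differentiable at the diagonal) stationary kernel and fold the derivative bound into the constant $L$.
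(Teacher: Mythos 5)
Your proposal is correct, and it supplies something the paper itself does not: an actual proof. The paper's ``proof'' of this lemma is only a citation remark --- it points to Lemma 5 of \citet{Freitas12}, notes that the Lipschitz constant there is $L=\sup_{x\in\mathcal X}\partial_x\partial_{x'}k(x-x')\big|_{x'=x}$, and observes that a second derivative of the kernel is therefore needed. Your argument (reproducing property plus Cauchy--Schwarz to reduce everything to the feature-map distance $\|k(x,\cdot)-k(y,\cdot)\|_k=\sqrt{2\bigl(\kappa(0)-\kappa(x-y)\bigr)}$ for a stationary kernel, then a second-order Taylor bound $\kappa(0)-\kappa(h)\le C\|h\|^2$, with a compactness step to make the constant global on the bounded domain) is exactly the mechanism underlying the cited result, so the two routes agree on where $L$ comes from --- the curvature of $\kappa$ at the origin --- but yours is self-contained where the paper defers to the reference.

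There is one substantive point on which you improve on the paper rather than merely match it. You state that for the Mat\'ern kernel the required twice-differentiability at the origin holds precisely when $\nu>1$, and that for $\nu\le 1$ one only gets a H\"older modulus, so the $\|\cdot\|_1$-Lipschitz conclusion fails. This is correct: the Mat\'ern-$\nu$ RKHS is the Sobolev space $H^{\nu+d/2}$, which embeds into the Lipschitz class only for $\nu>1$; for the exponential kernel ($\nu=1/2$), $\kappa(h)=e^{-\|h\|/l}$ is not even once differentiable at $h=0$ and its RKHS contains non-Lipschitz, merely H\"older-$\tfrac12$ functions. The paper's proof instead asserts that $\nu\ge 1/2$ ``ensures the differentiability,'' which is wrong; your condition $\nu>1$ is the right one and is consistent with the hypothesis of Theorem \ref{theorem3}. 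Two cosmetic remarks only: no dimension-dependent factor is needed to pass from $\|\cdot\|_2$ to $\|\cdot\|_1$, since $\|h\|_2\le\|h\|_1$ already; and your restriction to stationary kernels is harmless here, as both the paper's statement of $L$ and its intended kernels (SE and Mat\'ern) presuppose stationarity.
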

\begin{proof}
Lemma 5 of \citep{Freitas12} holds for functions in an RKHS equipped by a kernel $k$. The Lipschitz constant is defined as $$\text{sup}{x \in \mathcal X} \partial_x \partial{x’} k(x-x’)|_{x’=x}$$, so the second derivative on $k$ is needed. For SE kernel, it alsways holds. For Mat\'ern kernel , to our best understanding, if we have $\nu \ge 1/2$, we can ensure the differentiability.
\end{proof}
We now are ready to bound Term 3 and Term 4.
\subsubsection{Upper Bounding Term 3}
\begin{lemma}
Pick $\delta \in (0,1)$. Then with probability at least $1 - \delta$ we have that
$$\sum_{t=1}^{T} I_t = \mathcal O(\beta_T  \sqrt{T\gamma_T}).$$ \label{lem_key}
\end{lemma}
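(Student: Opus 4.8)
The plan is to bound the sum of positive improvements $\sum_{t=1}^T I_t$, where $I_t = \max\{0, f(x_t) - \mu_t^+\}$, by partitioning the horizon according to which iterations actually contribute a positive improvement. The fundamental difficulty is that the incumbent sequence $\mu_t^+$ is \emph{not} monotone in $t$ (unlike the noise-free incumbent $f_t^+$ used by \citet{Bull11}), so one cannot simply telescope the improvements against a steadily increasing baseline; this is precisely the obstruction that blocked earlier noisy analyses. I would first dispose of the trivial case where no iteration improves and the case of a single improving iteration, where the improvement is controlled directly: using the confidence bound of Lemma \ref{le_beta_2} to replace $\mu_t^+$ by $f$ at the incumbent point, then the Lipschitz estimate of Lemma \ref{dimester} together with the boundedness of $\mathcal X$ and $\sigma \le 1$ yields a bound of order $\mathcal O(\beta_T)$.

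The substantive case is when there are several improving indices $0 \le t_1 < t_2 < \dots < t_l \le T-1$. Here I would apply Lemma \ref{le_beta_2} to each summand, writing $f(x_{t_i+1}) - \mu_{t_i}^+ \le \beta_{t_i+1}\sigma_{t_i}(x_{t_i+1}) + (\mu_{t_i}(x_{t_i+1}) - \mu_{t_i}^+)$. The first piece sums, after bounding $\beta_{t_i+1} \le \beta_T$, to $\beta_T \sum_t \sigma_t(x_{t+1})$, which is standard. The second piece, $\sum_i (\mu_{t_i}(x_{t_i+1}) - \mu_{t_i}^+)$, is where the non-monotonicity bites, and I would handle it by a telescoping rearrangement across consecutive improving indices, reducing it to a boundary term $\mu_{t_l}(x_{t_l+1}) - \mu_{t_1}^+$ plus a sum of predictive-mean differences $\mu_{t_{i-1}}(x_{t_{i-1}+1}) - \mu_{t_i}(x_{t_{i-1}+1})$ evaluated at a common point $x_{t_{i-1}+1}$.

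The crucial step will be bounding each cross term $\mu_{t_{i-1}}(x_{t_{i-1}+1}) - \mu_{t_i}(x_{t_{i-1}+1})$. I would again invoke Lemma \ref{le_beta_2} on both means to sandwich them around $f$, and then exploit the \emph{decreasing monotonicity of the variance functions} — namely $\sigma_{t_i}(x) \le \sigma_{t_{i-1}}(x)$ whenever $t_i > t_{i-1}$ — to collapse both confidence radii onto the single variance $\sigma_{t_{i-1}}(x_{t_{i-1}+1})$. This is the key trick that avoids the troublesome sum over $\sigma_{t_i}(x_{t_{i-1}+1})$, which cannot be controlled by the information gain and which led to the error in prior work. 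After this collapse, the whole contribution is again of the form $\beta_T \sum_t \sigma_t(x_{t+1})$.

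Finally, I would collect all pieces and apply the standard information-gain estimate $\sum_{t=0}^{T-1}\sigma_t(x_{t+1}) \le \sqrt{4(T+2)\gamma_T}$ (Lemma 4 of \citet{Chowdhury17}, derivable from Lemma \ref{lem_information_gain} via concavity of the logarithm), giving $\sum_t I_t = \mathcal O(\beta_T \sqrt{T\gamma_T})$. A union bound over the at most $T$ applications of the confidence inequality keeps the total failure probability at $\delta$ after a harmless rescaling. I expect the telescoping-plus-variance-monotonicity argument for the cross terms to be the main obstacle, since it is what genuinely circumvents the non-monotonicity of $\mu_t^+$.
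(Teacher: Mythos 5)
Your proposal is correct and follows essentially the same route as the paper's proof: the same three-case split, the same decomposition of each improvement via the confidence bound into a $\beta_{t_i+1}\sigma_{t_i}(x_{t_i+1})$ piece plus a mean-gap piece, the same telescoping of the mean gaps into a boundary term plus cross terms $\mu_{t_{i-1}}(x_{t_{i-1}+1}) - \mu_{t_i}(x_{t_{i-1}+1})$, and the same use of variance monotonicity $\sigma_{t_i}(x) \le \sigma_{t_{i-1}}(x)$ to collapse both radii onto $\sigma_{t_{i-1}}(x_{t_{i-1}+1})$ before applying the information-gain bound. You also correctly identified this last step as the crux that avoids the uncontrollable sum $\sum_i \sigma_{t_i}(x_{t_{i-1}+1})$ responsible for the error in prior work.
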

\begin{proof*}
Set $S_T = \sum_{t=0}^{T-1} I_t$. There are three cases to be considered:
\paragraph{Case 1} $S_T = 0$. This happens when for every $t$: $f(x_{t+1}) - \mu_t^+ \le 0$.
\paragraph{Case 2} There exists an unique index $1 \le t'\le T$ such that $f(x_{t'+1}) - \mu_t'^+ > 0$. It follows that $S_T = f(x_{t'+1}) - \mu_t'^+$. In this case, we have that
\begin{eqnarray*}
S_T  & = & f(x_{t'+1}) - \mu_t'^+ \\
& \le & f(x_{t'+1}) - (f(x') - \beta_{t'+1}\sigma_{t'}(x')) \\
& \le & f(x_{t'+1}) - f(x') + \beta_{t'+1}\sigma_{t'}(x') \\
& \le &  BL ||x_{t' + 1} - x'||_1 + \beta_{t'+1}\\
& = & \mathcal O(\beta_T),
\end{eqnarray*}
where in the last inequality, we use Lemma \ref{le_b}, the inequality $\beta_{t'+1} \le \beta_T$, and the fact that $\sigma_t'(x) \le 1$. Finally, because the domain $\mathcal X$ is bounded, $||x_{t' + 1} - x'||_1$ is bounded.

\paragraph{Case 3} There are $0 \le t_1 < t_2, ..., < t_l \le T-1$ where $l \ge 2$ such that $f(x_{t_i +1}) \ge \mu_{t_i}^+$. Thus, we have
\begin{eqnarray*}
\sum_{t=0}^{T-1} I_t & = & \sum_{t=1}^{T} \text{max}\{0, f(x_{t+1}) - \mu_t^+\} \\
& = & \sum_{i = 1}^{l} (f(x_{t_i +1}) - \mu_{t_i}^+) \\
& \le & \sum_{i = 1}^{l} (\beta_{t_i+1} \sigma_{t_i}(x_{t_i +1}) + \mu_{t_i}(x_{t_i +1}) - \mu_{t_i}^+) \\
& \le & \underbrace{\sum_{i = 1}^{l} \beta_{t_i+1} \sigma_{t_i}(x_{t_i +1})}_{\text{Term 5}} +  \underbrace{\sum_{i = 1}^{l}(\mu_{t_i}(x_{t_i +1}) - \mu_{t_i}^+)}_{\text{Term 6}}
\end{eqnarray*}
\paragraph{Bound Term 5}
\begin{eqnarray*}
\sum_{i = 1}^{l} \beta_{t_i+1} \sigma_{t_i}(x_{t_i +1}) & \le & \sum_{t=0}^{T-1} \beta_{t+1} \sigma_{t}(x_{t+1}) \\
& \le &  \beta_T \sum_{t=0}^{T-1} \sigma_{t}(x_{t+1})
\end{eqnarray*}
\paragraph{Bound Term 6} Set $M_1 = \sum_{i = 1}^{l}(\mu_{t_i}(x_{t_i +1}) - \mu_{t_i}^+)$.
\begin{eqnarray*}
& M_1 & =\mu_{t_l}(x_{t_l +1}) - \mu_{t_1}^+  + \sum_{i=1}^{l-1} (\mu_{t_{i-1}}(x_{t_{i-1} +1}) - \mu_{t_i}^+))\\
&\le & \underbrace{\mu_{t_l}(x_{t_l}) - \mu_{t_1}^+}_{\text{Term 7}}  + \underbrace{\sum_{i=1}^{l-1} (\mu_{t_{i-1}}(x_{t_{i-1} +1}) - \mu_{t_i}(x_{t_{i-1} +1})))}_{\text{Term 8}}\\
\end{eqnarray*}
\paragraph{Bound Term 7} Set $M_2 = \mu_{t_l}(x_{t_l+1}) - \mu_{t_1}^+ $. We have
\begin{eqnarray*}
M_2 & \le & f(x_{t_l + 1}) + \beta_{t_l+1}\sigma_{t_l}(x_{t_l+1}) - (f(x_{t_1} - \beta_{t_1}\sigma_{t_1}(x_{t_1}))) \\
& \le &  f(x_{t_l + 1}) - f(x_{t_1}) + \beta_{t_l+1}\sigma_{t_l}(x_{t_l+1})+ \beta_{t_1 +1}\sigma_{t_1}(x_{t_1})) \\
& \le & f(x_{t_l + 1}) - f(x_{t_1}) + \beta_{t_l +1}  + \beta_{t_1}  \\
& \le & BL||x_{t_l + 1} - x_{t_1}||_1 + 2\beta_T  \\
& \le & \mathcal O(\beta_T)
\end{eqnarray*}
The argument to achieve the bound for Term 7 is similar to Case 2.
\paragraph{Bound Term 8}
Set $M_3 = \sum_{i=1}^{l-1} (\mu_{t_{i-1}}(x_{t_{i-1} +1}) - \mu_{t_i}(x_{t_{i-1} +1})))$ for simplicity. We go to bound $M$.
\begin{eqnarray*}
& M_3 &  \le   \sum_{i=1}^{l-1} (f(x_{t_{i-1} +1}) + \beta_{t_{i-1}+1}\sigma_{t_{i-1}}(x_{t_{i-1} +1}))  \\
&  & - (f(x_{t_{i-1} +1})- \beta_{t_{i}+1}\sigma_{t_{i}}(x_{t_{i-1} +1})) \\
& = & \sum_{i=1}^{l-1} \beta_{t_{i-1}+1}\sigma_{t_{i-1}}(x_{t_{i-1} +1}) + \beta_{t_{i}+1}\sigma_{t_{i}}(x_{t_{i-1} +1}) \\
& \le & \sum_{i=1}^{l-1}(\beta_{t_{i-1}+1} + \beta_{t_{i}+1})\sigma_{t_{i-1}}(x_{t_{i-1} +1})\\
& \le & 2\beta_T \sum_{i=1}^{l-1}\sigma_{t_{i-1}}(x_{t_{i-1} +1})\\
& \le & 2\beta_T \sum_{i=0}^{T-1} \sigma_{i}(x_{i+1}),
\end{eqnarray*}
where in the first inequality, we use Lemma \ref{le_a}: $\mu_{t_{i-1}}(x_{t_{i-1} +1}) \le f(x_{t_{i-1} +1}) + \beta_{t_{i-1}}\sigma_{t_{i-1}}(x_{t_{i-1} +1})$; $\mu_{t_i}(x_{t_{i-1} +1}) \ge f(x_{t_{i} +1}) - \beta_{t_{i}}\sigma_{t_{i}}(x_{t_{i-1} +1})$. In the second inequality, we use the fact that $f(x_{t_{i-1} +1}) \le f(x_{t_{i} +1})$. In the third inequality, we use the \emph{decreasing monotonicity of variance functions} (\citep{vivarelli} and \citep{Chowdhury17}, see Section F). Here, we use $$\sigma_{t_{i}}(x_{t_{i-1} +1}) \le \sigma_{t_{i-1}}(x_{t_{i-1} +1}),$$ because $t_i > t_{i-1}$.
This step is crucial to bound $M_3$. Without this step, $M_3$ may be bounded by two sums:  $\sum^{l-1}_{i=1}  \sigma_{t_{i-1}}(x_{t_{i-1} +1}))$ and $ \sum^{l-1}_{i=1} \sigma_{t_{i}}(x_{t_{i-1} +1}))$. While the first term can be bounded in terms of the information gain, bounding the second  term is very challenging.

For every $x_i$, where $ 1 \le i \le T-1$, Lemma \ref{le_a} holds with probability $1 -\delta$. Therefore, Lemma \ref{le_a} holds with probability at least $1 -\delta$ for all $x_i$, where $ 1 \le i \le T-1$.
Combining Term 5, Term 7, Term 8, with probability $1 -T\delta$ we have
$$\sum_{t=0}^{T-1} \text{max}\{0, f(x_{t+1}) - \mu_t^+\} \le  \mathcal O(\beta_T \sum_{i=1}^{T-1} \sigma_{i}(x_{i+1})).$$

On the other hand, following Lemma 4 of \cite{Chowdhury17}, we have $\sum_{i=1}^{T-1} \sigma_{i}(x_{i+1}) \le  \sqrt{4(T+2)\gamma_T}$.  Thus,
$$\sum_{t=0}^{T-1} \text{max}\{0, f(x_{t+1}) - \mu_t^+\} = \mathcal O(\beta_T \sqrt{T\gamma_T}).$$
\end{proof*}
\subsubsection{Upper Bounding Term 4}
\begin{lemma}
Let $x_1, ..., x_t$ be the points selected by Algorithm 2. The sum of predictive standard deviation at those points can be expressed in terms of the maximum information gain. More precisely,
$$\sum_{t=0}^{T-1} \sigma_{t}(x_{t+1}) \le  \sqrt{4(T+1)\gamma_T}.$$
\label{lem_gain}
\end{lemma}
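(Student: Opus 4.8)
The plan is to follow the now-standard chain of Cauchy--Schwarz followed by a logarithmic linearization, reducing the sum of predictive standard deviations to the maximum information gain via Lemma \ref{lem_information_gain}. The first observation is that the summand $\sigma_t(x_{t+1})$ runs over $t = 0, \dots, T-1$, which is exactly the reindexing of $\sigma_{t-1}(x_t)$ for $t = 1, \dots, T$ appearing in the information-gain identity, so the two sums align term by term. I would begin by applying Cauchy--Schwarz to the $T$-term sum,
$$\sum_{t=0}^{T-1}\sigma_t(x_{t+1}) \le \sqrt{T\sum_{t=0}^{T-1}\sigma_t^2(x_{t+1})},$$
trading the linear sum for a sum of variances at the cost of the $\sqrt{T}$ factor that accounts for the $\sqrt{T}$ in the claimed bound.

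Next I would linearize each variance against a logarithm. Because the kernel satisfies $k(x,x) \le 1$, we have $\sigma_t^2(x_{t+1}) \le 1$, so $s_t := \lambda^{-1}\sigma_t^2(x_{t+1})$ lies in $[0,\lambda^{-1}]$; the elementary inequality $s \le \frac{\lambda^{-1}}{\ln(1+\lambda^{-1})}\ln(1+s)$, valid on $[0,\lambda^{-1}]$, then yields
$$\sigma_t^2(x_{t+1}) \le \frac{1}{\ln(1+\lambda^{-1})}\ln\!\left(1+\lambda^{-1}\sigma_t^2(x_{t+1})\right).$$
Summing over $t$ and invoking Lemma \ref{lem_information_gain} (after the reindexing noted above) turns the right-hand side into $\frac{2}{\ln(1+\lambda^{-1})}\,I(y_T;f_T)$, and the definition of the maximum information gain bounds $I(y_T;f_T) \le \gamma_T$. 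Under the regularization normalization fixing $\frac{2}{\ln(1+\lambda^{-1})} \le 4$, this gives $\sum_{t=0}^{T-1}\sigma_t^2(x_{t+1}) \le 4\gamma_T$, and substituting back into the Cauchy--Schwarz bound produces $\sqrt{4T\gamma_T} \le \sqrt{4(T+1)\gamma_T}$, the desired estimate.

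The main obstacle I anticipate is not conceptual but bookkeeping: pinning the linearization constant $\frac{\lambda^{-1}}{\ln(1+\lambda^{-1})}$ down to the clean value implicit in the statement requires the specific regularization normalization, and one must track carefully both the reindexing between $\{0,\dots,T-1\}$ and $\{1,\dots,T\}$ and the fact that the identity in Lemma \ref{lem_information_gain} involves $\sigma^2$ rather than $\sigma$. The essential structural content---Cauchy--Schwarz, the log-linearization exploiting $\sigma_t^2 \le k(x,x) \le 1$, and $I(y_T;f_T) \le \gamma_T$---is otherwise routine and mirrors the corresponding argument for GP-UCB and GP-TS.
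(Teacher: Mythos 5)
Your proposal is correct and takes essentially the same approach as the paper: Cauchy--Schwarz to pass to the sum of predictive variances, followed by bounding that sum via the maximum information gain. The only difference is that the paper invokes Lemma 4 of \citet{Chowdhury17} (with $\lambda = 1 + 2/T$) as a black box for the second step, whereas you inline that lemma's standard proof (log-linearization plus Lemma \ref{lem_information_gain}); note that your per-term constant claim $2/\ln(1+\lambda^{-1}) \le 4$ only holds for $T \ge 4$ under this choice of $\lambda$ (the slack in $T+1$ absorbs smaller $T$), a level of constant looseness no worse than the paper's own proof, which actually concludes $\sqrt{4(T+2)\gamma_T}$ rather than the stated $\sqrt{4(T+1)\gamma_T}$.
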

\begin{proof*}
By Cauchy-Schwartz inequality, $\sum_{t=0}^{T-1} \sigma_{t}(x_{t+1}) \le \sqrt{T\sum_{t=1}^{T-1} \sigma^2_{t}(x_{t+1})}$. By assumption, $0\le k(x,x) \le 1$. It implies that $0 \le \sigma_{t-1}^2(x) \le 1$ for all $x \in \mathcal X$. Following Lemma 4 of \citep{Chowdhury17}, we get
$\sum_{t=1}^{T} \sigma_{t-1}(x_t) \le \sqrt{4T(1 + 2/T)\gamma_T} \le \sqrt{4(T+2)\gamma_T}$. Note that we here use $\lambda = 1 + 2/T$ as the setting of \citep{Chowdhury17}.

Thus, we have that $\sum_{t=0}^{T-1} \sigma_{t}(x_{t+1})  \le \sqrt{4(T + 2)\gamma_T}$
\end{proof*}
Combining Lemma \ref{lem_key} and Lemma \ref{lem_gain}, we obtain an upper bound for the cumularive regret $\sum_{t=1}^{T}r_t$ as follows:
\begin{theorem}
Pick $\delta \in (0,1)$. Then with probability at least $1 - \delta$, the cumulative regret of Algorithm 1 is bounded as:
$$R_T = \mathcal O(\gamma_T\sqrt{T}).$$
\end{theorem}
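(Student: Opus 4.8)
The plan is to bound $R_T = \sum_{t=1}^{T} r_t$ by first establishing a per-iteration bound on $r_t = f(x^*) - f(x_t^+)$ and then summing. The natural decomposition is
\[
r_t = \underbrace{\bigl(f(x^*) - \mu_t^+\bigr)}_{\text{Term 1}} + \underbrace{\bigl(\mu_t^+ - f(x_t^+)\bigr)}_{\text{Term 2}},
\]
which separates the gap between the global optimum and the best posterior mean from the gap between the best posterior mean and the value at the reported point. First I would control both terms by the same two quantities: the realized improvement $I_t = \max\{0, f(x_t) - \mu_t^+\}$ and the posterior deviation $\sigma_{t-1}(x_t)$ at the queried point. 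For Term 1 the key device is the sandwich bound on the EI acquisition function (Lemma \ref{lem2.1}) combined with the optimality of $x_t$, giving Lemma \ref{lem2.2}; for Term 2 I would exploit that $x_t^+$ attains $\mu_t^+$, so that $\alpha_t^{EI}(x_t^+) = \tfrac{1}{\sqrt{2\pi}}\omega_t\sigma_{t-1}(x_t^+)$, and compare against $\alpha_t^{EI}(x_t)$ to obtain Lemma \ref{lem2.3}. Both steps rely on the confidence envelope $|f(x) - \mu_{t-1}(x)| \le \beta_t\sigma_{t-1}(x)$ of Lemma \ref{le_beta_2}.

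Next I would simplify the prefactors. Because $\omega_t = \sqrt{\gamma_{t-1}+1+\ln(1/\delta)}$ is chosen to match the growth of $\beta_t$, the ratio $\beta_t/\omega_t$ stays bounded by $B + R\sqrt{2}$, so monotonicity of $\tau$ yields a uniform constant $C$ with $\tau(\beta_t/\omega_t)/\tau(-\beta_t/\omega_t) \le C$. Summing the resulting per-step bound (Lemma \ref{lem2.4}) over $t$ splits the total regret into Term 3 $=\sum_t I_t$ and Term 4 $=\sum_t(\beta_t+\omega_t)\sigma_{t-1}(x_t)$, which are then bounded separately. Term 4 is routine: since $\beta_t,\omega_t = \mathcal O(\sqrt{\gamma_T})$ uniformly in $t$ and $\sum_{t=0}^{T-1}\sigma_t(x_{t+1}) \le \sqrt{4(T+1)\gamma_T}$ by Lemma \ref{lem_gain} (Cauchy--Schwarz together with the information-gain identity of Lemma \ref{lem_information_gain}), Term 4 is $\mathcal O(\sqrt{\gamma_T}\cdot\sqrt{T\gamma_T}) = \mathcal O(\gamma_T\sqrt{T})$.

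The main obstacle is Term 3, the sum of realized improvements $\sum_t \max\{0, f(x_{t+1}) - \mu_t^+\}$: unlike the noiseless incumbent $f_t^+$, the quantity $\mu_t^+$ is not monotone in $t$, so the direct telescoping of \citet{Bull11} fails. I would resolve this through Lemma \ref{lem_key} by casework on the number of indices at which the improvement is strictly positive; in the nontrivial case one telescopes the differences $\mu_{t_{i-1}}(x_{t_{i-1}+1}) - \mu_{t_i}(x_{t_{i-1}+1})$, bounds each via the confidence envelope, and then crucially invokes the decreasing monotonicity of the variance functions, $\sigma_{t_i}(x) \le \sigma_{t_{i-1}}(x)$ for $t_i > t_{i-1}$, to collapse the residual cross terms into a single information-gain sum rather than two separate ones. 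This gives Term 3 $= \mathcal O(\beta_T\sqrt{T\gamma_T}) = \mathcal O(\gamma_T\sqrt{T})$.

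Finally I would add the two bounds, apply a union bound over the at most $T$ invocations of Lemma \ref{le_beta_2} (replacing $\delta$ by $\delta/T$, which affects the bound only through the $\sqrt{\ln(1/\delta)}$ factor inside $\beta_T$), and absorb the constants $B$ and $R$ into the order notation to conclude that, with probability at least $1-\delta$, $R_T = \mathcal O(\beta_T\sqrt{T\gamma_T}) = \mathcal O(\gamma_T\sqrt{T})$.
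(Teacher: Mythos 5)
Your proposal is correct and follows essentially the same route as the paper's own proof: the same decomposition of $r_t$ into the two terms, the same EI sandwich and incumbent-value arguments for Lemmas \ref{lem2.2} and \ref{lem2.3}, the same uniform bound on $\tau(\beta_t/\omega_t)/\tau(-\beta_t/\omega_t)$ via the choice of $\omega_t$, and the same key casework argument for $\sum_t I_t$ relying on the decreasing monotonicity of the posterior variances. The only (minor) difference is that you handle the union bound over the $T$ applications of Lemma \ref{le_beta_2} slightly more explicitly than the paper does.
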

\section{Proof of Theorem 2}
In this section, we provide a complete proof for Theorem 2 in the main paper. Here we use a non-trivial combination of the proof techniques as above and the technical results for $\pi$-GP-UCB \citep{janz20a}. A key difference from \citep{janz20a} lies in steps from equation 16 to 21 to obtain an upper bound for $R_T^+$ which is easy in \citep{janz20a}.

To obtain an upper bound the sum of $R_T = \sum_{t=1}^{T} r_t$, where $r_t = f(x^*) - f(x^+_t)$,  we denote $\tilde{\mathcal A}_T = \cup_{t \le T} \mathcal A_t$, the set of all cover elements created until time $T$, and define the initial time for an element $A \in \tilde{\mathcal A}_T $, as $\phi(A) = \text{min}\{t: A \in \mathcal A_t\}$, and the terminal time as $\phi'(A) = \text{max}\{t: A \in \mathcal A_t\}$.

Unlike the analysis of the sum  $R_T = \sum_{t=1}^{T} r_t$ in the proof of Theorem 1, we here analyze $R_T = \sum_{t=1}^{T} r_t$ into groups based on the partitioning of the searching space.
\begin{eqnarray}
R^+_T  & = & \sum_{t=1}^{T} r_t \\
& = & \sum_{t=1}^{T} \sum_{A \in \mathcal A_t} \textbf{1}\{x_t \in A\}r_t \\
& = & \sum_{A \in \tilde{\mathcal A}_T} \sum_{t =\phi(A)}^{\phi'(A)} \textbf{1}\{x_t \in A\}r_t \label{eq3.00}
\end{eqnarray}

The next step is to estimate each component $\sum_{t =\phi(A)}^{\phi'(A)} \textbf{1}\{x_t \in A\}r_t$. Given a set $A \in \mathcal A_T$, we consider the set $\mathcal D_T \cap A$. Without loss of generality, assume that $\mathcal D_T \cap A = \{x^A_1,..., x^A_{n(A)}\}$ with $n(A) = |\mathcal D_T \cap A|$. We assume that this order is the real order of time that $x^A_i$ is selected by the algorithm. It means that $x^A_1$ corresponds to time $t_1$, ..., and $x^A_{n(A)}$ corresponds to time $t_{n(A)}$
where $1 \le t_1 < t_2, <...< t_{n(A)} \le  T$. In addition, for every $1 \le i \le n(A)$, $i \le t_i$. Let $x^{A+}_i = \text{argmax}_{x^A_j \in \mathcal D_T \cap A, 1\le j \le i} f(x^A_j)$ and let $r^A_i = f(x^*) - f(x^{A+}_i)$ as the instantaneous regret at iteration $i$ for the sampled points in $A$.  We have that
\begin{eqnarray}
\sum_{t =\phi(A)}^{\phi'(A)} \textbf{1}\{x_t \in A\}r_t & = & \sum_{i=1}^{n(A)} r_{t_i} \\
& = & \sum_{i=1}^{n(A)} (f(x^*)   - f(x^+_{t_i})) \\
& \le & \sum_{i=1}^{n(A)} (f(x^*) - f(x^{A+}_i)) \\
& = & \sum_{i=1}^{n(A)}r^A_i,
\end{eqnarray}
where in Eq (12), we use the fact the  $f(x^+_{t_i}) \ge f(x^{A+}_i)$. Indeed, we have that $x^{A+}_i) = \text{argmax}_{x^A_j \in \mathcal D_T \cap A, 1\le j \le i} f(x^A_j)$ which is defined as above. $x_{t_i}$ corresponds to $x^A_i$ in set $A$. Therefore, $\{x^A_1, ..., x^A_i \}$ is the subset of $\{x_1, x_2, ..., x_{t_i}\}$. It implies that $f(x^+_{t_i}) \ge f(x^{A+}_i)$.

By Lemma \ref{lem3.5} which we will provide in section C.1, we have that
\begin{eqnarray}
\sum_{i=1}^{n(A)}r^A_i \le \tilde{\mathcal O}( \sqrt{n(A)\gamma^A_{\phi'(A)}}),
\label{eq3.0}
\end{eqnarray}
where $\gamma^A_{\phi'(A)}$ is the information gain of $A$ at iteration $\phi'(A) = \text{max}\{t: A\in \mathcal A_t\}$ (See Lemma \ref{lem_gain_new} in  Section C.1).

Thus, $\sum_{t =\phi(A)}^{\phi'(A)} \textbf{1}\{x_t \in A\}r_t \le \tilde{\mathcal O}(\sqrt{n(A)\gamma^A_{\phi'(A)}})$. It is equivalent that
\begin{eqnarray}
\frac{(\sum_{t =\phi(A)}^{\phi'(A)} \textbf{1}\{x_t \in A\}r_t)^2}{n(A)} \le \tilde{\mathcal O}(\gamma^A_{\phi'(A)})
\label{eq3.1}
\end{eqnarray}

Since the inequality at Eq(\ref{eq3.1}) holds for every $A \in \tilde{\mathcal A}_T$, we get
\begin{eqnarray}
\sum_{A \in \tilde{\mathcal A}_T} \frac{(\sum_{t =\phi(A)}^{\phi'(A)} \textbf{1}\{x_t \in A\}r_t)^2}{n(A)} \le \sum_{A \in \tilde{\mathcal A}_T} \tilde{\mathcal O}(\gamma^A_{\phi'(A)})
\label{eq3.2}
\end{eqnarray}

On the other hand, applying the Cauchy–Schwarz inequality, we have that
\begin{eqnarray}
(\sum_{A \in \tilde{\mathcal A}_T} n(A)(\sum_{A \in \tilde{\mathcal A}_T} \frac{(\sum_{t =\phi(A)}^{\phi'(A)} \textbf{1}\{x_t \in A\}r_t)^2}{n(A)}) \ge (\sum_{A \in \tilde{\mathcal A}_T} \sum_{t =\phi(A)}^{\phi'(A)} \textbf{1}\{x_t \in A\}r_t)^2
\label{eq3.3}
\end{eqnarray}
As the sets $A$ in $\tilde{\mathcal A}_T$ are not overlapping, $\sum_{A \in \tilde{\mathcal A}_T} n(A) = T$. Replacing this to Eq(\ref{eq3.3}), we get
\begin{eqnarray}
\sqrt{T(\frac{(\sum_{t =\phi(A)}^{\phi'(A)} \textbf{1}\{x_t \in A\}r_t)^2}{n(A)})} \ge \sum_{A \in \tilde{\mathcal A}_T} \sum_{t =\phi(A)}^{\phi'(A)} \textbf{1}\{x_t \in A\}r_t
\label{eq3.4}
\end{eqnarray}

Combining Eq(\ref{eq3.00}), Eq(\ref{eq3.2}) and Eq(\ref{eq3.4}), we get that
\begin{eqnarray}
R^+_T  &=& \sum_{A \in \tilde{\mathcal A}_T} \sum_{t =\phi(A)}^{\phi'(A)} \textbf{1}\{x_t \in A\}r_t \\
& \le & \sqrt{T (\sum_{A \in \tilde{\mathcal A}_T}\tilde{\mathcal O}(\gamma^A_{\phi'(A)}))}
\label{eq3.5}
\end{eqnarray}
By Lemma \ref{lem3.6} which is proven in Section C.1, the number of the sets $A \in \tilde{\mathcal A}_T$ is $\mathcal O(T^q)$, where $q = \frac{d(d+1)}{d(d+2) + 2\nu}$ (See Section 5.1). By Lemma \ref{lem_gain_new} in Section C.1, $\gamma^A_{\phi'(A)}$ is bounded by a logarithmic function in $T$ for all $A \in \tilde{\mathcal A}_T$. Thus, we have $R_T \le \mathcal O(T^{\frac{q+1}{2}})$. Using the definition of $q = \frac{d(d+1)}{d(d+2) + 2\nu}$, we get that $R_T = \tilde{\mathcal O}(T^{\frac{-2\nu -d}{d(2d+4) + 4\nu}})$.

\subsection{Auxiliary Lemmas}
\begin{lemma}
For every $1 \le i  \le T$, we have that $\gamma_i \le \gamma_T$. \label{lem_gamma}
\end{lemma}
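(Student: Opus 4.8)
The plan is to use the monotonicity of mutual information under the addition of observations, the so-called ``information never hurts'' principle, which immediately yields that $\gamma_t$ is non-decreasing in $t$. Throughout I treat $\gamma_t = \max_{A \subseteq \mathcal X, |A| = t} I(y_A; f_A)$ as the maximum information gain against the underlying GP model with its fixed noise level.

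First I would unpack the definition. Since the maximum defining $\gamma_i$ is attained, there exists a set $A^*$ of cardinality $i$ with $\gamma_i = I(y_{A^*}; f_{A^*})$. Because $i \le T$ and the domain $\mathcal X$ is infinite, I can enlarge $A^*$ to a set $B \supseteq A^*$ with $|B| = T$ by adjoining $T - i$ further distinct points of $\mathcal X$.

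The heart of the argument is the inequality $I(y_{A^*}; f_{A^*}) \le I(y_B; f_B)$. The cleanest route is to pass through the whole Gaussian process $f$: because $y_A$ depends on $f$ only through $f_A = [f(x)]_{x \in A}$, we have $I(y_A; f_A) = I(y_A; f)$ for every finite $A$. The chain rule for mutual information then gives
$$I(y_B; f) = I(y_{A^*}; f) + I(y_{B \setminus A^*}; f \mid y_{A^*}) \ge I(y_{A^*}; f),$$
since conditional mutual information is non-negative. Combining these, $I(y_{A^*}; f_{A^*}) = I(y_{A^*}; f) \le I(y_B; f) = I(y_B; f_B)$. Equivalently, one may invoke the closed form $I(y_A; f_A) = \tfrac12 \log\det(I + \lambda^{-1} K_A)$ and note that $K_{A^*}$ is a principal submatrix of $K_B$, so that an eigenvalue-interlacing or Schur-complement argument gives $\det(I + \lambda^{-1} K_{A^*}) \le \det(I + \lambda^{-1} K_B)$.

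Finally I would chain the inequalities: $\gamma_i = I(y_{A^*}; f_{A^*}) \le I(y_B; f_B) \le \max_{B' \subseteq \mathcal X, |B'| = T} I(y_{B'}; f_{B'}) = \gamma_T$, which is the claim. The main obstacle is precisely the monotonicity step $I(y_{A^*}; f_{A^*}) \le I(y_B; f_B)$: since $f_{A^*}$ and $f_B$ are genuinely different random vectors, the chain rule cannot be applied to them directly, and it is essential to route through the common target $f$ (the full GP) so that both information gains are measured against the same object before invoking non-negativity of conditional mutual information. The determinant formulation is a convenient alternative, but it trades the clean information-theoretic step for a matrix inequality on principal submatrices.
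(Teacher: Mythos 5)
Your proof is correct, and its skeleton matches the paper's: take a maximizing set of size $i$, enlarge it to a set of size $T$, show the information gain can only increase, and pass to the maximum over sets of size $T$. The difference lies in how the monotonicity step is justified. The paper invokes the closed-form expression $I(y_A;f_A)=\tfrac12\sum_{t=1}^{|A|}\ln\bigl(1+\lambda^{-1}\sigma_{t-1}^2(x_t)\bigr)$ (Lemma 3 of \citealp{Chowdhury17}, restated as Lemma~\ref{lem_information_gain}); under a sequential ordering, the first $i$ terms for the enlarged set coincide with the sum for the original set, and the appended terms are non-negative, so monotonicity is immediate. You instead route through the full process $f$, using $I(y_A;f_A)=I(y_A;f)$ together with the chain rule and non-negativity of conditional mutual information (``information never hurts''); this is a cleaner, purely information-theoretic argument that needs no closed form, and you correctly flag the pitfall that the chain rule cannot be applied to $f_{A^*}$ and $f_B$ directly since they are different random vectors. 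Your alternative via $\tfrac12\log\det(I+\lambda^{-1}K_A)$ and principal submatrices is essentially the paper's argument in disguise, since the paper's sum is exactly the term-by-term expansion of that log-determinant. Both routes establish the same fact: yours is more self-contained and generalizes beyond the Gaussian likelihood, while the paper's leans on a formula it needs elsewhere in the regret analysis anyway.
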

\begin{proof*}
By Lemma \ref{lem_information_gain}, for a set of point $A_i = {x_1 ,x_2, ..., x_i}$, we have that $I(y_i;f_i) = \frac{1}{2}\sum_{t=1}^{i} ln(1 + \lambda^{-1} \sigma_{t-1}(x_t))$. Consider a set $A_T = A_i \cup \{x_{i+1, ..., x_T}\}$ containing $T$ elements. We also have that $I(y_{A_T};f_{A_T}) = \frac{1}{2}\sum_{t=1}^{T} ln(1 + \lambda^{-1} \sigma_{t-1}(x_t))$. Thus, for each $A_i$ which contains $i$ elements, we always can construct a set $A_T$ containing $T$ elements such that $I(y_i;f_i) \le I(y_T;f_T)$.
Thus, $$\gamma_i = \text{max}_{A \in \mathcal X: |A| = i} I(y_A: f_A) \le \text{max}_{A \in \mathcal X: |A| = T} I(y_A: f_A) = \gamma_T.$$
\end{proof*}
\begin{lemma}[Lemma 1 of \citep{janz20a}]
Let $A$ be a subset of $\mathcal X$ and assume that there exists a $1 \le \phi \le T$ such that $A \in \mathcal A_{\tau}$. Let $\phi'(A) = \text{max}\{1 \le t \le T: A \in \mathcal A_t\}$. Then for some $C > 0$, $\gamma_{\phi'(A)}^A \le Cln(T)lnln(T)$.
\label{lem_gain_new}
\end{lemma}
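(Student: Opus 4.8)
The plan is to treat this precisely as the cited result (Lemma 1 of \citep{janz20a}) and reconstruct its two-part argument: first control how many samples a surviving cell can accumulate, then show that the per-cell information gain of that many points, all confined to a small hypercube, grows only poly-logarithmically in $T$ for the Matérn kernel. The whole point of the exponent $b = \frac{d+1}{d+2\nu}$ is to make these two effects cancel, and that is where the work lies.

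First I would extract a bound on the sample count from the splitting rule. A cell $A$ remains in the cover up to its terminal time $\phi'(A)$, which means the split condition $\rho_A^{-1/b} < |\mathcal D^A_{t+1}| + 1$ was never triggered while $A$ was alive. Hence at time $\phi'(A)$ the cell holds $N_A := |\mathcal D^A_{\phi'(A)}|$ points with $N_A < \rho_A^{-1/b}$, equivalently $\rho_A < N_A^{-b}$. This is the single place where the diameter and the sample count are tied together, and it is the source of the cancellation later.

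Next I would bound the information gain spectrally, writing $\gamma^A_{\phi'(A)} = \frac{1}{2}\log\det(I + \lambda^{-1} K^A_{\phi'(A)}) = \frac{1}{2}\sum_i \log(1 + \lambda^{-1}\hat\lambda_i)$ in terms of the eigenvalues $\hat\lambda_i$ of the $N_A \times N_A$ kernel matrix. Since all $N_A$ points lie in a cube of diameter $\rho_A$, rescaling that cube to unit size turns the Matérn-$\nu$ kernel of length scale $l$ into a Matérn kernel of effective length scale $l/\rho_A \gg 1$; the associated integral operator has eigenvalues whose decay is governed by the Matérn spectral density $(1 + \|\omega\|^2 (l/\rho_A)^2)^{-(\nu + d/2)}$. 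A short diameter therefore concentrates the spectrum at low frequencies, forcing the kernel matrix to behave as effectively low rank, which is exactly what keeps the information gain small. Combining the truncated spectral bound with $N_A \le T$ and the monotonicity $\gamma^A_{\phi'(A)} \le \gamma^A_T$ (in the spirit of Lemma \ref{lem_gamma}) lets me pass to a bound stated in $T$.

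The crux, and the main obstacle, is the balancing step: substituting $\rho_A < N_A^{-b}$ into the eigenvalue-truncation bound and optimizing the number of retained eigenvalues, so that the polynomial growth in $N_A$ (from the number of points) is exactly offset by the polynomial decay coming from the diameter, leaving only the factor $\ln(T)\ln\ln(T)$. Establishing the truncated-eigenvalue estimate for the Matérn operator on a shrinking domain, and verifying that the specific value $b = \frac{d+1}{d+2\nu}$ produces cancellation rather than residual polynomial growth, is the delicate, kernel-dependent computation; it is precisely the content we inherit from \citep{janz20a}, and I would either cite it directly or carry out this eigenvalue algebra in full, concluding $\gamma^A_{\phi'(A)} \le C \ln(T)\ln\ln(T)$ for some $C > 0$.
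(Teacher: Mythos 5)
The paper states this lemma without proof: it is imported verbatim as Lemma 1 of \citep{janz20a}, which is exactly what you ultimately propose to do. Your reconstruction of the cited argument --- the splitting rule forcing $N_A < \rho_A^{-1/b}$ while a cell is alive, the rescaling/spectral bound on the Mat\'ern information gain over a cell of diameter $\rho_A$, and the choice of $b = \frac{d+1}{d+2\nu}$ cancelling the polynomial factors so only the $\ln(T)\ln\ln(T)$ term survives --- is faithful to how that external proof actually runs, so your approach and the paper's coincide.
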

\begin{lemma}[Lemma 2 of \citep{janz20a}]
Let A be the covering set at time $t$. Assume that $\mathcal A_2 = \mathcal O(T^q)$. Then for $T$ sufficiently large, $|\cup_{t \le T} \mathcal A_t| \le C_{d,v} T^q$, where $C_{d,v} > 0$ depends on $d$ and $\nu$ only.
\label{lem3.6}
\end{lemma}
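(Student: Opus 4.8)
The plan is to prove this purely combinatorially, organising all cover elements into a tree by how many times they have been split and bounding the population of each level separately. Since a split of an element $A$ produces $2^d$ children of half the side length, I would index cubes by their \emph{generation} $g \ge 0$: generation $0$ is the initial cover $\mathcal A_1$, which has cardinality $\mathcal O(T^q)$ by hypothesis, and a generation-$g$ cube has diameter $\rho_g \asymp \rho_0 2^{-g}$ with $\rho_0 \asymp T^{-q/d}$, because $\mathcal O(T^q)$ cubes of common side tile the bounded domain $\mathcal X$. Writing $N_g$ for the number of generation-$g$ cubes ever created and $n_g$ for the number of those that are actually split, the deterministic relation $N_{g+1} = 2^d n_g$ reduces the whole count $|\cup_{t \le T}\mathcal A_t| = \sum_{g \ge 0} N_g$ to bounding each $n_g$.

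The heart of the argument is a point-budget bound on $n_g$. By the splitting rule a cube $A$ is split only once $|\mathcal D^A_{t+1}| \ge \rho_A^{-1/b}-1$, so a split generation-$g$ cube has collected at least $M_g := \rho_g^{-1/b}-1 \asymp \rho_g^{-1/b}$ distinct observations. Same-generation cubes are interior-disjoint, so each of the $T$ observations is charged to at most one of them at its split time; this gives $n_g M_g \le T$, i.e.\ $n_g \le T\rho_g^{1/b}$. Combined with the recursion, $N_{g+1} \le 2^d T \rho_g^{1/b} \asymp 2^d T \rho_0^{1/b} 2^{-g/b}$.

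Everything then collapses onto a single algebraic identity and a geometric sum. Substituting $\rho_0 \asymp T^{-q/d}$ yields $N_{g+1} \le 2^d T^{\,1-q/(db)} 2^{-g/b}$, and a direct computation with $b = \frac{d+1}{d+2\nu}$ and $q = \frac{d(d+1)}{d(d+2)+2\nu}$ shows the exponent simplifies to exactly $1 - \frac{q}{db} = \frac{d(d+1)}{d(d+2)+2\nu} = q$. Hence $N_{g+1} \le 2^d T^q 2^{-g/b}$, and summing the convergent geometric series $\sum_{g \ge 0} 2^{-g/b}$ (whose value depends only on $b$, hence on $d$ and $\nu$) together with the $\mathcal O(T^q)$ initial cubes gives $\sum_g N_g \le C_{d,\nu} T^q$, as claimed.

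I expect the main obstacle to be the point-budget step, not the arithmetic: one must argue carefully that the observation sets witnessing the split condition across same-generation cubes are genuinely disjoint, so that $n_g \le T/M_g$ is legitimate even though points are shared along ancestral chains, and that the bound is applied \emph{per generation} so that no observation is charged at two different scales at once. Once this disjointness is secured, the identity $1 - q/(db) = q$ is precisely what ensures that refinement never inflates the cover beyond its initial order $T^q$, with the deeper generations contributing only a convergent geometric correction.
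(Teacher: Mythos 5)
The paper does not actually prove this statement: it is imported verbatim as Lemma 2 of \citet{janz20a} and used as a black box in Appendix C, with no argument given beyond the citation. Your proposal therefore does strictly more than the paper, and what it does is essentially sound: it is, in substance, a reconstruction of the counting argument behind the original lemma of Janz et al. The three ingredients --- the generation tree with $N_{g+1} = 2^d n_g$, the per-generation point budget $n_g \le C T \rho_g^{1/b}$ extracted from the splitting rule $\rho_A^{-1/b} < |\mathcal D^A_{t+1}|+1$, and the exponent identity --- fit together correctly, and the identity itself checks out: with $b = \frac{d+1}{d+2\nu}$ and $q = \frac{d(d+1)}{d(d+2)+2\nu}$ one gets $\frac{q}{db} = \frac{d+2\nu}{d(d+2)+2\nu}$, hence $1 - \frac{q}{db} = \frac{d(d+1)}{d(d+2)+2\nu} = q$, so the refined generations contribute only a convergent geometric multiple of the $\mathcal O(T^q)$ initial cubes. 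What your route buys, compared with the paper's citation, is a self-contained proof that also exposes \emph{why} the particular values of $b$ and $q$ make the cover size stable under refinement.

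Two caveats, both patchable. First, your diameter formula $\rho_0 \asymp T^{-q/d}$ presumes the initial cover is a tiling by cubes of common side, whereas the lemma as stated only assumes the initial cardinality is $\mathcal O(T^q)$. If some initial cubes are large (e.g.\ a single cube covering $\mathcal X$), the point budget is vacuous at early generations because $\rho_g^{-1/b}-1$ may be below $1$, and you must intersect your bound with the trivial branching bound $N_{g+1} \le 2^d N_g$; the crossover computation (geometric growth until $2^{dg} \approx T\rho_g^{1/b}$, point-budget decay after) still yields $\mathcal O(T^q)$ via the same identity, but that case analysis is absent from your write-up. Second, the claim that each observation is charged to at most \emph{one} same-generation cube is not literally true: the cubes are closed and overlap at edges, and $\mathcal D^A_t$ is defined by membership, so a boundary point can be counted by up to $2^d$ cubes of one generation. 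This only weakens the budget to $n_g M_g \le 2^d T$, a harmless constant, but it is exactly the disjointness issue you flagged and should be stated as ``at most $2^d$'' rather than ``at most one.''
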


\begin{lemma}[Lemma 5 of \citep{janz20a}]
Given $\delta \in (0,1)$. for all $t \le T$, for all $A \in \cup_{t \le T} \mathcal A_t$, we have
$$\mathbb{P}(\forall t, \forall x \in A, |f(x) - \mu^A_{t-1}(x)| \le \hat{\beta}^A_t \sigma_{t-1}(x) ) \ge 1 - \delta,$$
where $\hat{\beta}^A_t = B + R \sqrt{2(\gamma^A_{t-1} + 1 + ln(1/\delta))}$, and $\gamma^A_{t-1} = \frac{1}{2} ln|I + \lambda^{-1}K^A_{t-1}|$.
\label{le_beta_new}
\end{lemma}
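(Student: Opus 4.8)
The plan is to reduce the per-cell confidence statement to the single-GP self-normalized bound already recorded as Lemma \ref{le_beta_2} (Theorem 2 of \citep{Chowdhury17}), and then to upgrade that per-cell bound so it holds simultaneously over all cover elements at one failure budget $\delta$.

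First I would fix a single \emph{deterministic} subset $A \subseteq \mathcal X$ and restrict attention to the subsequence of iterations $s$ at which the query lands in $A$, i.e. $x_s \in A$. On this subsequence the observations are $y_s = f(x_s) + \epsilon_s$, the predictive quantities $\mu^A_{t-1}, \sigma^A_{t-1}$ are exactly the kernel ridge regression mean and standard deviation built from $\mathcal D^A_{t-1}$, and $\gamma^A_{t-1} = \tfrac{1}{2}\ln|I + \lambda^{-1}K^A_{t-1}|$ is the corresponding information gain. Two facts make the single-GP bound applicable verbatim: (i) the restriction $f|_A$ still lies in the RKHS with $\|f|_A\|_k \le \|f\|_k \le B$, since the restriction operator between RKHSs is a contraction (Aronszajn); and (ii) the noise $\epsilon_s$ remains conditionally $R$-sub-Gaussian with respect to the filtration induced on the subsequence. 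Hence the self-normalized martingale bound underlying Lemma \ref{le_beta_2}, which holds uniformly in $t$ for one choice of $\delta$, yields for this fixed $A$, with probability at least $1-\delta$, that $|f(x) - \mu^A_{t-1}(x)| \le \hat\beta^A_t \sigma^A_{t-1}(x)$ for all $t$ and all $x \in A$, with $\hat\beta^A_t = B + R\sqrt{2(\gamma^A_{t-1} + 1 + \ln(1/\delta))}$.

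The remaining work is to make this hold for \emph{every} $A \in \cup_{t\le T}\mathcal A_t$ at total failure probability $\delta$. The key structural observation is that, although which cells are active is data-dependent, the \emph{identities} of the possible cells are fixed by the deterministic dyadic hierarchy underlying the cover construction; thus $\cup_{t\le T}\mathcal A_t$ is contained in a fixed countable family, and within any dyadic level the cells partition $\mathcal X$, so the noise indices assigned to distinct cells of a level are disjoint. Exploiting that the Abbasi-Yadkori supermartingale construction is robust to passing to adaptive, stopping-time-selected, disjoint subsequences, one obtains the confidence bound simultaneously for all cells of a given level from a single mixture argument, and then combines across levels by splitting the failure budget, the resulting logarithmic terms being absorbed into the cell-wise $\gamma^A_{t-1}$ that already appears inside $\hat\beta^A_t$.

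I expect the main obstacle to be precisely this last uniformity step: controlling all cover elements with a single $\delta$ without the radius $\hat\beta^A_t$ inflating to carry a $\ln(\text{number of cells})$ factor. Handling it cleanly requires the subsequence-robustness of the self-normalized bound together with the disjointness of noise indices across a dyadic level (equivalently, the block structure of the per-cell Gram matrices), rather than a naive union bound over an adaptively chosen, data-dependent collection of cells; this is the technical content I would import from \citep{janz20a}.
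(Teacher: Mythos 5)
You should first note that the paper does not actually prove Lemma~\ref{le_beta_new}: it is imported verbatim from \citep{janz20a}, so your reconstruction has to be measured against that argument. Your first step matches it and is sound: fix a cell $A$, observe that the restriction of $f$ to $A$ has RKHS norm at most $\|f\|_k \le B$, that the subsequence of queries falling in $A$ is adapted to the filtration (the event $x_s \in A$ is determined by the history), and hence that the self-normalized bound of Lemma~\ref{le_beta_2} applies to the within-cell posterior $(\mu^A_{t-1},\sigma^A_{t-1})$ with the cell-wise information gain $\gamma^A_{t-1}$, giving the stated inequality for that single cell with probability $1-\delta$. You are also right that the data-dependence of the cover is handled by embedding $\cup_{t\le T}\mathcal A_t$ into the deterministic dyadic family of potential cells.

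The gap is in your uniformity step. The claim that a ``single mixture argument'' plus disjointness of the noise indices across cells yields the bound simultaneously for all cells at one budget $\delta$, \emph{without} a $\ln(\text{number of cells})$ inflation, is wrong. The method-of-mixtures supermartingale gives uniformity \emph{in time} for one self-normalized process (one cell, one Gram matrix); across cells you have distinct failure events $E_A$ driven by disjoint noise, and disjointness works against you: if the noise were independent across cells, these events would be essentially independent, so $\mathbb{P}(\cup_A E_A) \approx \sum_A \mathbb{P}(E_A)$ and the union bound is tight, not wasteful. The failure budget must be split. Nor can the resulting logarithmic terms be ``absorbed into $\gamma^A_{t-1}$'': that quantity is data-dependent and equals zero for a cell containing no observations, so it cannot dominate an additive $\ln T$ term. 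The correct completion --- and what \citep{janz20a} actually does --- is to invoke the deterministic cardinality bound $|\cup_{t\le T}\mathcal A_t| \le C_{d,\nu}T^q$ (Lemma~\ref{lem3.6}) and apply Lemma~\ref{le_beta_2} in each cell with $\delta$ replaced by $\delta/(C_{d,\nu}T^q)$, which places $\ln(C_{d,\nu}T^q/\delta) = \mathcal O(\ln T + \ln(1/\delta))$ inside the square root of $\hat\beta^A_t$. That inflation is harmless for Theorem~\ref{theorem3}, since it is polylogarithmic and of the same order as $\omega_T^2 = \ln(T)\ln\ln(T)$, and it is swallowed by the $\tilde{\mathcal O}$ notation; but it cannot be avoided, and the lemma as transcribed in this paper, with only $\ln(1/\delta)$ in $\hat\beta^A_t$, is obtainable by this route only up to that logarithmic factor.
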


Before proving the important Lemma \label{lem3.5},  we need the following lemma to upper bound $\frac{\tau(\frac{\beta^{A}_{t_i}}{\omega_T})}{ \tau(-\frac{\beta^{A}_{t_i}}{\omega_T})}$ for $1 \le i \le n(A)$.
\begin{lemma}
Given $\omega_T = \sqrt{\text{ln}(T)\text{ln} \text{ln}(T)}$. Let $A$ be a subset of $\mathcal X$ and assume that there exists a $1 \le \phi \le T$ such that $A \in \mathcal A_{\tau}$. Let $\phi'(A) = \text{max}\{1 \le t \le T: A \in \mathcal A_t\}$. There exists a constant $C > 0$ such that for every $T > 0$ and for every $1 \le i \le n(A)$ we have
$$\frac{\tau(\frac{\beta^A_{t_i}}{\omega_T})}{ \tau(-\frac{\beta^A_{t_i}}{\omega_T})} \le C.$$
\end{lemma}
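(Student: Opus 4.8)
The plan is to mirror the earlier ratio bound from the proof of Theorem~\ref{theorem2} (the lemma giving $\tau(\beta_t/\omega_t)/\tau(-\beta_t/\omega_t)\le C$), where boundedness followed because $\beta_t$ and $\omega_t$ scaled together. The only genuinely new difficulty here is that $\omega_T=\sqrt{\ln(T)\ln\ln(T)}$ is \emph{fixed} in $t$ rather than tracking $\gamma_{t-1}$, so I must separately verify that the argument $\beta^A_{t_i}/\omega_T$ stays uniformly bounded. The whole point is that $\omega_T$ has been calibrated so that $\omega_T^2=\ln(T)\ln\ln(T)$ exactly dominates the per-cell information gain $\gamma^A_{\phi'(A)}$ furnished by Lemma~\ref{lem_gain_new}. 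Once $\beta^A_{t_i}/\omega_T$ is controlled, monotonicity of $\tau$ closes the argument immediately.

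First I would bound $\beta^A_{t_i}/\omega_T$ uniformly over $1\le i\le n(A)$. Since $x^A_i$ is sampled at time $t_i$ with $\phi(A)\le t_i\le\phi'(A)$, we have $t_i-1<\phi'(A)$, and because $\gamma^A_t=\tfrac12\ln|I+\lambda^{-1}K^A_t|$ is nondecreasing in $t$ (enlarging the PSD matrix $K^A_t$ only increases its log-determinant), it follows that $\gamma^A_{t_i-1}\le\gamma^A_{\phi'(A)}$. By Lemma~\ref{lem_gain_new} there is a constant $C_0>0$ with $\gamma^A_{\phi'(A)}\le C_0\ln(T)\ln\ln(T)=C_0\omega_T^2$. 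Substituting $\beta^A_{t_i}=B+R\sqrt{2(\gamma^A_{t_i-1}+1+\ln(1/\delta))}$ then yields
$$\frac{\beta^A_{t_i}}{\omega_T}\le\frac{B}{\omega_T}+R\sqrt{\frac{2\bigl(C_0\omega_T^2+1+\ln(1/\delta)\bigr)}{\omega_T^2}}=\frac{B}{\omega_T}+R\sqrt{2C_0+\frac{2(1+\ln(1/\delta))}{\omega_T^2}}.$$
For $T$ large enough that $\omega_T\ge 1$, the right-hand side is at most the finite constant $D:=B+R\sqrt{2C_0+2(1+\ln(1/\delta))}$, independent of $A$, $i$, and $T$.

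Next I would invoke the monotonicity of $\tau(z)=z\Phi(z)+\phi(z)$ (its derivative $\tau'(z)=\Phi(z)\ge 0$). From $0\le\beta^A_{t_i}/\omega_T\le D$ and $-\beta^A_{t_i}/\omega_T\ge -D$ we get $\tau(\beta^A_{t_i}/\omega_T)\le\tau(D)$ and $\tau(-\beta^A_{t_i}/\omega_T)\ge\tau(-D)$. The denominator is strictly positive: using $\tau(-D)=\phi(D)-D\bigl(1-\Phi(D)\bigr)$ together with the Mills-ratio bound $1-\Phi(D)<\phi(D)/D$ for $D>0$ gives $\tau(-D)>0$. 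Hence, setting $C:=\tau(D)/\tau(-D)>0$, which depends only on $B,R,\delta,\nu,d$, we obtain $\tau(\beta^A_{t_i}/\omega_T)/\tau(-\beta^A_{t_i}/\omega_T)\le C$ for all $i$ and all sufficiently large $T$, as claimed.

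The main obstacle is the first step, namely bounding $\beta^A_{t_i}/\omega_T$; everything hinges on Lemma~\ref{lem_gain_new} capping the per-cell gain at $\mathcal{O}(\ln(T)\ln\ln(T))=\mathcal{O}(\omega_T^2)$, so that the potentially growing term $R\sqrt{2\gamma^A_{t_i-1}}$ is absorbed into a constant multiple of $\omega_T$ rather than outpacing it. A minor care point is the small-$T$ regime where $\ln\ln(T)$ may be non-positive; as with the companion lemmas of \citep{janz20a} I would simply restrict to $T$ sufficiently large (or enlarge $C$ to absorb the finitely many small-$T$ exceptions), which does not affect the asymptotic rate of Theorem~\ref{theorem3}.
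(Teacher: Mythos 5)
Your proof follows essentially the same route as the paper's: monotonicity of $\gamma^A$ in $t$ plus Lemma~\ref{lem_gain_new} to show $\beta^A_{t_i}/\omega_T$ is bounded by a constant, then monotonicity of $\tau$ to bound the ratio by $\tau(D)/\tau(-D)$. Your version is in fact slightly more careful than the paper's — you verify $\tau(-D)>0$ via the Mills-ratio bound and explicitly handle the small-$T$ regime where $\ln\ln(T)$ is not positive, both of which the paper glosses over.
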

\begin{proof}
By definition, $\beta^A_{t_i} = B + \sqrt{\gamma^A_{t_i -1} + 1 + ln(1/\delta)}$. Using the increasing monotonicity of function $\gamma^A$, we have that $\gamma^A_{t_i -1} \le \gamma_{\phi'(A)}^A$ (because $t_i-1 \le \phi'(A)$). By Lemma \ref{lem_gain_new}, $\gamma_{\phi'(A)}^A \le Cln(T)lnln(T)$. Therefore, $\beta^A_{t_i} = B + \sqrt{\gamma^A_{t_i -1} + 1 + ln(1/\delta)} \le B + \sqrt{\gamma_{\phi'(A)}^A + 1 + ln(1/\delta)} \le B + \sqrt{Cln(T)lnln(T) + 1 + ln(1/\delta)}$.

On the other hand, there exists a $C_1$ large enough such that $B + \sqrt{Cln(T)lnln(T) + 1 + ln(1/\delta)} \le C_1\sqrt{ln(T)lnln(T)}$. Combining this with the above result, we imply that $\beta^A_{t_i} \le C_1\sqrt{ln(T)lnln(T)} = C_1\omega_T$. It is equivalent that $\frac{\beta^A_{t_i}}{\omega_T} \le C_1$.

Since the function $\tau(z)$ is non-decreasing, we have that $\tau(\frac{\beta^A_{t_i}}{\omega_T}) \le \tau ( C_1)$, and $\tau(-\frac{\beta^A_{t_i}}{\omega_T}) \ge \tau (- C_1)$. Thus,
$$\frac{\tau(\frac{\beta^A_{t_i}}{\omega_T})}{ \tau(-\frac{\beta^A_{t_i}}{\omega_T})} \le \frac{\tau( C_1)}{\tau(- C_1)}.$$
Set $C = \frac{\tau( C_1)}{\tau(- C_1)}$. We now can bound the ratio $\frac{\tau(\frac{\beta^A_{t_i}}{\omega_T})}{ \tau(-\frac{\beta^A_{t_i}}{\omega_T})}$ by a constant.
$$\frac{\tau(\frac{\beta^A_{t_i}}{\omega_T})}{ \tau(-\frac{\beta^A_{t_i}}{\omega_T})} \le C.$$
\end{proof}

Finally, we provide an upper bound for the sum $\sum_{i=1}^{n(A)}r^A_i$.
\begin{lemma}
Let $\omega_T = \sqrt{\text{ln}(T)\text{ln} \text{ln}(T)}$. Given a set $A \in \mathcal A_t$ and assume that there exists a $1 \le \phi \le T$ such that $A \in \mathcal A_{\tau}$. Let $\phi'(A) = \text{max}\{1 \le t \le T: A \in \mathcal A_t\}$. Then with probability at least $1 - \delta$, we have
$$\sum_{i=1}^{n(A)}r^A_i \le \tilde{\mathcal O}(\sqrt{n(A)\gamma^A_{\phi'(A)}}),$$
where $n(A) = |\mathcal D_T \cap A|$ which is defined as above. The notation $\tilde{\mathcal O}$ is a
variant of $\mathcal O$, where log factors are suppressed.
\label{lem3.5}
\end{lemma}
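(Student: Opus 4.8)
The plan is to replay the proof of Theorem 1 essentially verbatim, but with every global quantity replaced by its cover-local analogue, so that the final bound comes out in terms of the local information gain $\gamma^A_{\phi'(A)}$ rather than $\gamma_T$. Concretely, I would work throughout with the GP fitted on $\mathcal D^A$ only, i.e.\ with $\mu^A_t$, $\sigma^A_t$, $\mu^{A+}_t$ and $\beta^A_{t}=B+R\sqrt{2(\gamma^A_{t-1}+1+\ln(1/\delta))}$, and with the local confidence band of Lemma \ref{le_beta_new} playing the role that Lemma \ref{le_beta_2} played in Section B. The first step is to decompose each local instantaneous regret exactly as before,
$$r^A_i = f(x^*)-f(x^{A+}_i) = \big(f(x^*)-\mu^{A+}_i\big) + \big(\mu^{A+}_i - f(x^{A+}_i)\big),$$
and to set the local improvement $I^A_i=\max\{0,\,f(x^A_i)-\mu^{A+}_{i-1}\}$.

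The second step is to re-establish the three sub-lemmas of Section B in localized form. The EI sandwich of Lemma \ref{lem2.1}, the Term-1 bound of Lemma \ref{lem2.2}, and the Term-2 bound of Lemma \ref{lem2.3} all transfer once $\sigma_{t-1},\beta_t,\mu^+_t$ are swapped for $\sigma^A,\beta^A_{t_i},\mu^{A+}$, because their proofs use only the closed form $\alpha^{EI}_t(x)=\omega_T\sigma^A_{t-1}(x)\,\tau\big((\mu^A_{t-1}(x)-\mu^{A+})/(\omega_T\sigma^A_{t-1}(x))\big)$, monotonicity of $\tau$, and the local band. Combining these with the bounded-ratio lemma proven just above, which supplies $\beta^A_{t_i}\le C_1\omega_T$ and $\tau(\beta^A_{t_i}/\omega_T)/\tau(-\beta^A_{t_i}/\omega_T)\le C$, yields the pointwise estimate
$$r^A_i \le \tilde{\mathcal O}(1)\,\big(I^A_i + (\beta^A_{t_i}+\omega_T)\,\sigma^A_{t_i-1}(x^A_i)\big),$$
so it remains to control $\sum_i (\beta^A_{t_i}+\omega_T)\sigma^A_{t_i-1}(x^A_i)$ and $\sum_i I^A_i$.

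The variance sum is the easy part: since $\beta^A_{t_i}+\omega_T\le(C_1+1)\omega_T$ with $\omega_T=\sqrt{\ln T\,\ln\ln T}=\tilde{\mathcal O}(1)$, the localized version of Lemma \ref{lem_gain} gives $\sum_i\sigma^A_{t_i-1}(x^A_i)\le\sqrt{4(n(A)+2)\,\gamma^A_{\phi'(A)}}$, so this sum contributes $\tilde{\mathcal O}(\sqrt{n(A)\gamma^A_{\phi'(A)}})$. The improvement sum is the key step and the main obstacle: I would re-run the three-case argument of Lemma \ref{lem_key} on the in-cover data, where the crux (Term 8 in Case 3) rests on the decreasing monotonicity $\sigma^A_{t_i}(\cdot)\le\sigma^A_{t_{i-1}}(\cdot)$. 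The point I must verify carefully is that this monotonicity survives the per-cover reindexing of time: the GP on $A$ uses exactly $\{x^A_1,\dots,x^A_{i-1}\}$ when $x^A_i$ is selected, and since $t_{i-1}<t_i$ the local dataset at $t_i$ strictly contains that at $t_{i-1}$, so adding the intervening in-cover observations can only shrink $\sigma^A$. Granting this, the telescoping collapses Term 8 into a single $\sum_i\sigma^A$ bounded again by $\gamma^A_{\phi'(A)}$, giving $\sum_i I^A_i=\tilde{\mathcal O}(\beta^A_{\max}\sqrt{n(A)\gamma^A_{\phi'(A)}})=\tilde{\mathcal O}(\sqrt{n(A)\gamma^A_{\phi'(A)}})$ after absorbing $\beta^A_{\max}\le C_1\omega_T$ into $\tilde{\mathcal O}$. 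Adding the two bounds yields the claim. The only genuinely new ingredient relative to Theorem 1 is checking that the variance monotonicity and the $\gamma^A$-based confidence band of Lemma \ref{le_beta_new} interact correctly under the local reindexing of time; the rest is transcription.
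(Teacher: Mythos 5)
Your proposal is correct and follows essentially the same route as the paper's proof: the paper likewise localizes the Section B machinery (the EI sandwich, Term-1/Term-2 bounds, the improvement-sum lemma, and the variance-sum lemma) to the independent GP on $A$, bounds the $\tau$-ratio via $\beta^A_{t_i}\le C_1\omega_T$, and absorbs $\beta^A_{t_{N(A)}}=\tilde{\mathcal O}(1)$ into the log factors. In fact your explicit check that the decreasing monotonicity of $\sigma^A$ survives the per-cover reindexing of time (since the in-cover datasets are nested) is a point the paper's proof glosses over by simply saying the Section B results apply.
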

\begin{proof}
Following the Improved-GP-EI algorithm, each $A \in \mathcal A_t$ is fitted by an independent Gaussian process with $N(A)$ observations and the sampled points in $\mathcal D_t^A$: $(x_1^A,y_1^A), ..., (x_{N(A)}^A,y_{N(A)}^A)$. Therefore, we can apply the results in Section B to the set $A$.

Similar to the proofs of Lemma 6, Lemma 7, and Lemma 8, we obtain an upper bound on $r^A_i = f(x^*) - f(x^{A+}_i)$ as follows.
$$r^A_i \le (\frac{\tau(\frac{\beta^{A}_{t_i}}{\omega_T})}{ \tau(-\frac{\beta^{A}_{t_i}}{\omega_T})} + \sqrt{2\pi}) \text{max}\{0, f(x^A_{i+1}) - \mu^{A+}_{t_i}\} + (\frac{\tau(\frac{\beta^{A}_{t_i}}{\omega_T})}{ \tau(-\frac{\beta^{A}_{t_i}}{\omega_T})}(\beta^{A}_{t_i} + \omega_T) + \sqrt{2\pi}(3\beta^{A}_{t_i} + \omega_T))\sigma^A_{t_i}(x^A_{i+1})),$$
where $\mu^{A+}_i = \text{max}_{x^A_j \in \mathcal D_T \cap A, 1\le j \le i} \mu_{t_i}(x^A_j)$. Recall that $x^{A+}_i = \text{argmax}_{x^A_j \in \mathcal D_T \cap A, 1\le j \le i} f(x^A_j)$.

Using Lemma 21, we can bound the ratio $\frac{\tau(\frac{\beta^A_{t_i}}{\omega_T})}{ \tau(-\frac{\beta^A_{t_i}}{\omega_T})}$ by a constant which is independent of $1 \le t\le T$ and $T$. Thus, by the proofs similar to those of Lemma 14 and Lemma 15, we obtain an upper bound on the sum $\sum_{i=1}^{n(A)}r^A_i$ as
$$\sum_{i=1}^{n(A)}r^A_i \le \mathcal O(\beta^A_{t_{N(A)}}\sqrt{n(A)\gamma^A_{\phi'(A)}}).$$
Using the proof similar to Lemma 21 as above, there exists constant $C_1$ such that $\beta^A_{t_{N(A)}} \le C_1\omega_T = \Theta(\sqrt{\text{ln}(T)\text{ln} \text{ln}(T)})$. We can remove this log component from the upper bound. Finally, we have that
$$\sum_{i=1}^{n(A)}r^A_i \le \tilde{\mathcal O}(\sqrt{n(A)\gamma^A_{\phi'(A)}}).$$
\end{proof}

\section{Additional Experiments for Real-World Benchmarks}
We now take a 2-layer perceptron network (MLP) with 512 neurons/layer and optimize three hypeparameters: the learning rate $l$ and the norm regularization hyperparameters $l_{r1}$ and $l_{r2}$ of the two layers. We train the algorithms using the MNIST train dataset (55000 patterns) and then test the model on the MNIST test dataset (10000 patterns). We plot the optimization results using  prediction accuracy in Figure \ref{fig3}.

\begin{figure}
\begin{center}
\centerline{\includegraphics[scale=1.0,width=.5\textwidth]{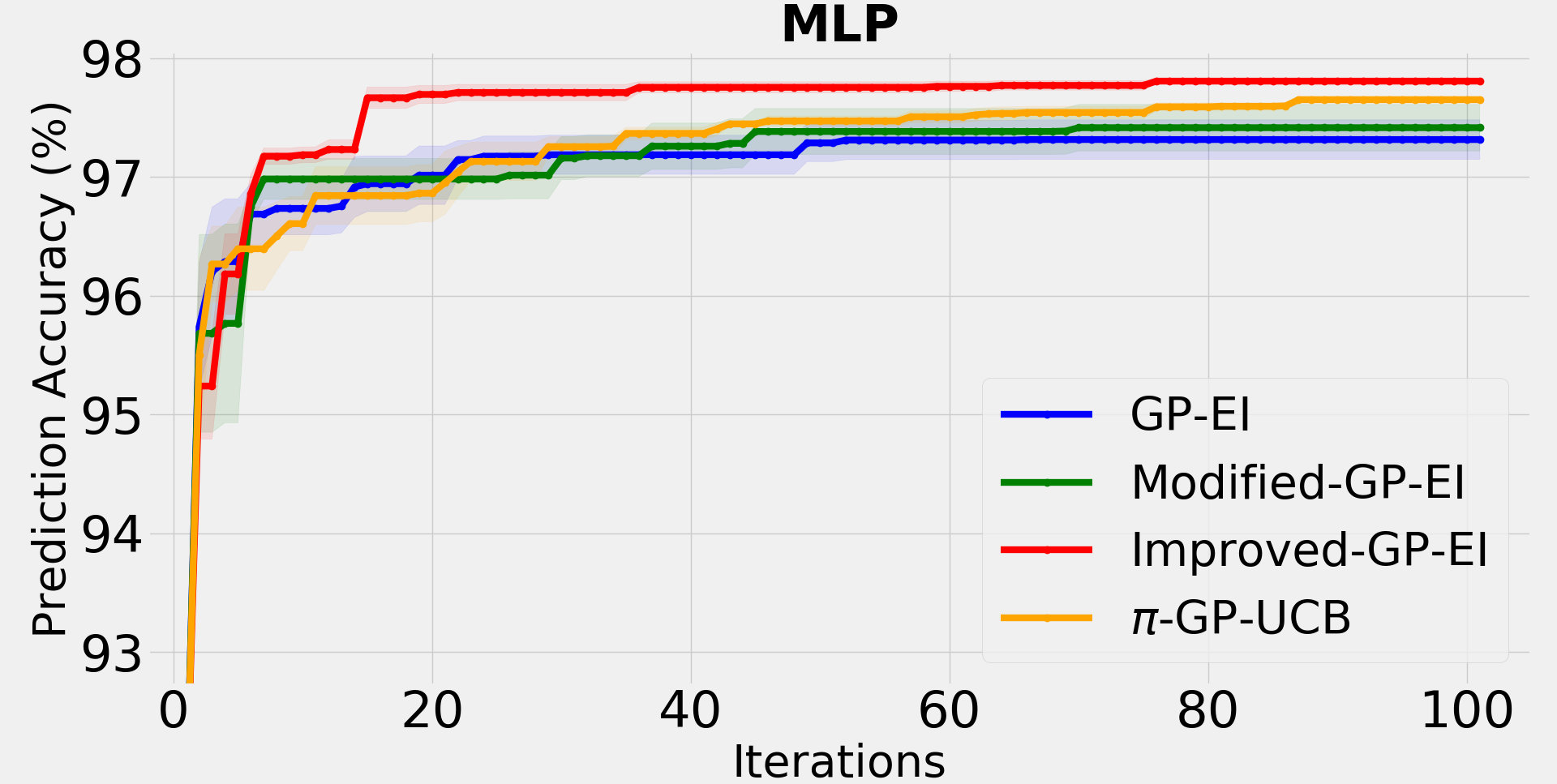}}
\end{center}
\caption{Comparison of methods on MLP.}
\label{fig3}
\end{figure}

\end{document}